\newcommand{\bE}{\mathbb{E}}
\newcommand{\Cov}{\mathrm{Cov}}
\newcommand{\Var}{\mathrm{Var}}
\newcommand{\E}{\bE}      
\newcommand{\bone}{\mathbbm{1}}
\DeclareMathOperator{\supp}{supp}
\DeclareMathOperator*{\argmax}{arg\,max}
\newtheorem{theorem}{Theorem}[section]
\newtheorem*{namedtheorem}{\theoremname}
\newcommand{\theoremname}{testing}
\newtheorem{lemma}[theorem]{Lemma}
\newtheorem*{question*}{Question}
\theoremstyle{definition}
\newtheorem{defn}[theorem]{Definition}
\newtheorem{remark}[theorem]{Remark}
\newtheorem{example}[theorem]{Example}
\theoremstyle{plain}
\title{Fast Convergence of Belief Propagation to Global Optima: \\ Beyond Correlation Decay}
\author{Frederic Koehler}
\begin{document}

\maketitle
\begin{abstract}
    Belief propagation is a fundamental message-passing algorithm for probabilistic reasoning and inference in graphical models. While it is known to be exact on trees, in most applications belief propagation is run on graphs with cycles. Understanding
    the behavior of ``loopy'' belief propagation has been a major challenge for researchers in machine learning, and positive convergence results for BP are known under strong assumptions which imply the underlying graphical model exhibits decay of correlations. We show that under a natural initialization, BP converges quickly to the global optimum of the Bethe free energy for Ising models on arbitrary graphs, as long as the Ising model is \emph{ferromagnetic} (i.e. neighbors prefer to be aligned). This holds even though such models can exhibit long range correlations and may have multiple suboptimal BP fixed points. We also show an analogous result for iterating the (naive) mean-field equations; perhaps surprisingly, both results are dimension-free in the sense that a constant number of iterations already provides a good estimate to the Bethe/mean-field free energy.
\end{abstract}

\section{Introduction}
Undirected graphical models, also known as Markov Random Fields, are a general, powerful, and popular framework for modeling and reasoning about high dimensional distributions. These models explain the dependency structure of a probability distribution in terms of interactions along the edges of a (hyper-) graph, which gives rise to a factorization of the joint probability distribution, and the absence of edges in this graph encodes conditional independence relations.

Ising models are a special class of graphical models with a long history and which are popular in applications; they model the interaction of random variables valued in a binary alphabet ($\{\pm 1\}$) 
with exclusively pairwise interactions. Explicitly, the joint pmf of an Ising model is
\begin{equation}\label{eqn:ising-model}
\Pr(X = x) = \exp\left(\frac{1}{2} x^T J x + h \cdot x - \log Z \right) 
\end{equation}
where $x \in \{\pm 1\}^n$, $J : n \times n$ is an arbitrary matrix describing the pairwise interactions between nodes (with zero diagonal), $h$ is an arbitrary vector encoding node biases, and $Z$ is a proportionality constant called the \emph{partition function}. 
Historically, Ising models originated in the statistical physics community as a way to model and study phase transition phenomena in magnetic materials; since then, they have attracted significant interest in the machine learning community and have been applied in a wide variety of domains including finance, social networks, neuroscience, and computer vision (see e.g. references in \cite{li2009markov,mossel2004survey, wainwright-jordan-variational,hinton2012practical}.

Performing sampling and inference on an Ising model is a major computational challenge for which a wide variety of approaches have been developed. One family of methods are Markov-Chain Monte Carlo (MCMC) algorithms, the most popular of which is Gibbs sampling (also known as \emph{Glauber dynamics}) which resamples the spin of one node at a time from its conditional distribution. When run sufficiently long, MCMC methods will draw samples from the true distribution \eqref{eqn:ising-model};
unfortunately, it is well-known in both theory and practice that MCMC methods may become stuck when the probability distribution \eqref{eqn:ising-model} exhibits multi-modal structure; for example, on an $n \times n$ square lattice the Glauber dynamics required exponential time to mix in the low temperature phase \cite{martinelli1999lectures}. 

A popular alternative to markov chain methods are \emph{variational methods}, which typically make some approximation to the distribution \eqref{eqn:ising-model} but often run much faster than MCMC. These methods usually reduce inference on the Ising model to some (typically non-convex) optimization problem, which is solved either by standard optimization methods (e.g. gradient ascent) or by more specialized methods like \emph{message-passing algorithms} (e.g. Belief Propagation). Because of the non-convexity, these methods are typically not guaranteed to return global optimizers of the corresponding variational problem. Indeed, these optimization problems are NP-hard to approximate for general Ising models (see e.g. \cite{jain2018mean} for the case of mean-field approximation).

\emph{Belief propagation} (BP) is a celebrated message passing algorithm which is known to be closely related to the Bethe approximation.
It is a fundamental algorithm for probabilistic inference \cite{Pearl88} which plays a fundamental role in a variety of applications like phylogenetic reconstruction, coding, constraint satisfaction problems, and community detection in the stochastic block model (see e.g. \cite{MezMon:09,mossel2004survey,DKMZ:11}); it is also closely connected to the ``cavity method'' in statistical physics \cite{MezMon:09}. Although BP is observed to works well for many problems, there are few settings on general graphs (i.e. with loops) where it provably works. For example, BP with random initialization is conjectured to achieve optimal reconstruction in the 2-community SBM \cite{DKMZ:11} but no rigorous proof of this result is known.

In this work, we consider two popular variational approximations, the \emph{naive mean-field approximation} and the \emph{Bethe approximation} to the Ising model, and the corresponding heuristic message-passing algorithms which are usually used to solve these optimization problems: mean-field iteration and belief propagation. We show that under a natural and popular assumption of \emph{ferromagneticity} (that $J_{ij} \ge 0$ and $h$ has consistent signs; a.k.a. as an \emph{attractive} graphical model) that these methods do indeed converge to global optimizers of their optimization problems, under a natural initialization, and moreover that their convergence rate is fast and \emph{dimension-free} in the appropriate sense.

\subsection{Background: Variational methods and belief propagation}
We can describe the variational methods we consider in terms of optimization problems
whose goal is to estimate $\Phi := \log Z$, the log partition function or \emph{free energy} of the Ising model. This is a natural to consider because other important quantities can be recovered by differentiating $\log Z$ in some parameter, and because the ability to construct sufficiently precise estimates for $Z$ is equivalent to sampling for a self-reducible family of models \cite{jerrum1986random}.  We note throughout this section we specialize to Ising models, but all of these notions generalize straightforwardly to general Markov random fields (a.k.a. factor models) --- see \cite{MezMon:09} for a more detailed discussion.

The starting point for these variational methods is the \emph{Gibbs variational principle} \cite{MezMon:09} which states
\begin{equation}\label{eqn:variational}
\log Z = \max_{P \in \mathcal{P}(\{\pm 1\}^n)} \E_P[\frac{1}{2} x^T J x + h \cdot x] + H_P(X) 
\end{equation}
where $P$ ranges over probability distributions on $\{\pm 1\}^n$ and $H_P(X)$ is the entropy of $X$ under $P$. This formula is derived by observing the Gibbs measure minimizes the KL divergence to itself and expanding.

The \emph{(naive) mean-field approximation} is given by restricting \eqref{eqn:variational} to product distributions and finding the maximum of the functional
\begin{equation}\label{eqn:mean-field-variational}
\Phi_{MF}(x) := \frac{1}{2} x^T J x + h \cdot x + \sum_i H\left(Ber\left(\frac{1 + x_i}{2}\right)\right)
\end{equation}
where 
$H(Ber(p)) = -p \log p - (1 - p)\log(1 - p)$ is the entropy of a Bernoulli random variable.  Information-theoretically, the optimizer(s) $x^*$ of this optimization problem corresponds to the marginals of a product distribution $\nu$ which minimizes the KL-divergence from the Gibbs measure $\mu$ (defined by \eqref{eqn:ising-model}) to $\nu$. Note that this always gives a lower bound on $\log Z$.

By considering the first-order optimality conditions for \eqref{eqn:mean-field-variational}, one arises at the \emph{mean-field equations}
\begin{equation}\label{eqn:mean-field-eqn}
x = \tanh^{\otimes n}(J \cdot x + h)
\end{equation}
where $\tanh^{\otimes n}$ denotes entry-wise application of $\tanh$. The \emph{mean-field iteration} is the natural iterative algorithm which starts with some $x_0$ and applies \eqref{eqn:mean-field-eqn} iteratively to search for a fixed point.
The error of the mean-field approximation has been extensively studied; the approximation is guaranteed to be accurate when $\|J\|_F = o(n)$ (informally, in unfrustrated models with large average degree); see e.g. \cite{basak2017universality,jain2018mean2,augeri2019transportation}. For example, the recent result of \cite{augeri2019transportation} shows that $|\log Z - \max_x \Phi_{MF}(x)| = O(\sqrt{n} \|J\|_F)$ and the result of \cite{eldan2018} gives even better bounds for some models.

The naive mean-field approximation can be inaccurate on very sparse graphs; the \emph{Bethe approximation} is a more sophisticated approach which has the benefit of being exact on trees \cite{MezMon:09}, and which is always at least as accurate as the mean-field approximation in ferromagnetic models \cite{ruozzi2012bethe}. The Bethe free energy is the maximum of the (typically non-convex) functional
\begin{equation}\label{eqn:bethe-functional}
\Phi_{Bethe}(P) :=  \sum_{i \sim j} J_{ij} \E_{P_{ij}}[X_i X_j] + \sum_i h_i \E_{P_i}[X_i] + \sum_{i \sim j} H_{P_{ij}}(X_i,X_j) - \sum_{i} (\deg(i) - 1) H_{P_i}(X_i) 
\end{equation} 
where $P$ lies in the polytope of locally consistent distributions (equivalently, $SA(2)$ in the Shereli-Adams hierarchy\footnote{The equivalence is given by treating the underlying graph as complete with $J_{ij} = 0$ for unconnected nodes, where the optimal coupling of these non-neighbors is when they are independent.}). Explicitly this polytope is given by constraints:
\begin{align*}
    \sum_{x_i} P_{ij}(x_i,x_j) &= P_j(x_j) & \text{ for all $i,j$ neighbors}\\
    \sum_{x_i} P_i(x_i) &= 1 & \text{ for all $i$}\\
    P_i(x_i) &\ge 0 & \text{ for all $i,x_i$}
\end{align*}
One can derive the Bethe-Peierls (BP) equations from the first-order optimality conditions for this optimization problem; this connection is involved and is discussed further in Section~\ref{sec:background-bethe}. Just like the mean-field equations, the BP equations can be iterated to search for a fixed point, in which case one recovers the belief propagation updates for this setting. Explicitly, for edge messages $\nu_{i \to j}$ with $\nu_{i \to j} \in [-1,1]$ the consistency equation is
\begin{equation}
    \nu_{i \to j} = \tanh(h_i + \sum_{k \in \partial i \setminus j} \tanh^{-1}(\tanh(J_{ik}) \nu_{k \to i}))
\end{equation}
where $\partial i$ denotes the neighborhood of node $i$. Intuitively, this equation describes the expected marginal of node $X_j$ in the absence of the edge between $i$ and $j$. Given $\nu$ which solves these equations, the BP estimate for $\E[X_i]$ can be written as
$\nu_i := \tanh(h_i + \sum_{k \in \partial i} \tanh^{-1}(\tanh(J_{ik}) \nu_{k \to i}))$.
BP also gives an estimate for the free energy in terms of its messages (see equation \eqref{eqn:dual-bethe-defn} in Section~\ref{sec:background-bethe}).

The above derivation of (``loopy'') belief propagation from the the Bethe free energy for general graphical models is due to \cite{YeFrWe:03}. Alternatively, belief propagation can also be derived as the exact solution to computing the partition function of a tree graphical model and this was known much longer ago -- see \cite{Pearl88}. 

\subsection{Our Results}
We analyze the behavior of mean-field iteration and belief propagation in ferromagnetic (a.k.a. attractive) models on arbitrary graphs. We recall their definition below:
\begin{defn}
An Ising model is \emph{ferromagnetic} (with consistent field) if $J_{ij} \ge 0$ for all $i$ and $h_i \ge 0$ for every $i$. (We also can allow $h_i \le 0$ for all $i$, but this is equivalent after flipping signs.)
\end{defn}
We show that in ferromagnetic Ising models, belief propagation and mean-field iteration always converge to the true global optimum of the Bethe free energy and mean-field free energy respectively, as long as they start from the all-ones initialization. Moreover we show that this algorithms converge quickly, giving \emph{linear time} algorithms for estimating the corresponding objective. We note that these results cannot hold for arbitrary Ising models, as even approximating the mean-field free energy is NP-hard in general Ising models with anti-ferromagnetic interactions \cite{jain2018mean}.
\begin{theorem}\label{thm:mean-field-convergence} Fix an arbitrary ferromagnetic Ising model parameterized by $J,h$ and let $x^*$ be a global maximizer of $\Phi_{MF}$. Initializing with $x_0 = \vec{1}$ and defining $x_1,x_2,\ldots$
by iterating the mean-field equations, we have that\footnote{In this theorem and throughout, we use the notation $\|J\|_1,\|J\|_{\infty}$ to refer to the corresponding $\ell_1,\ell_{\infty}$ norms of $J$ when viewed as a vector of entries.} for every $t \ge 1$, 
\[ 0 \le \Phi_{MF}(x^*) - \Phi_{MF}(x_t) \le \min\left\{\frac{\|J\|_1 + \|h\|_1}{t}, 2\left(\frac{\|J\|_1 + \|h\|_1}{t}\right)^{4/3}\right\}. \]
\end{theorem}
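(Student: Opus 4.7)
The plan is to view the mean-field iteration as a majorization--maximization (MM) scheme. I would define the surrogate
\[
g(y;x) \;:=\; -\tfrac{1}{2}x^T J x + (Jx+h)\cdot y + \sum_{i} H\!\left(\tfrac{1+y_i}{2}\right),
\]
and verify by direct expansion that $\Phi_{MF}(y) - g(y;x) = \tfrac{1}{2}(y-x)^T J(y-x)$, that $g(\cdot;x)$ is $1$-strongly concave in $y$ (its Hessian is diagonal with entries $-1/(1-y_i^2) \le -1$), and that its unique maximizer is $\tanh(Jx+h)$. Hence $x_{t+1} = \arg\max_y g(y;x_t)$: the MF iteration is literally the MM scheme for $g$.

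Next I would exploit ferromagnetic monotonicity. Since $J \ge 0$ entrywise, the map $F(x):=\tanh(Jx+h)$ is coordinate-wise monotone, so from $x_0=\vec 1$ the orbit $x_0 \ge x_1 \ge \cdots$ is non-increasing and converges to some fixed point; by induction every fixed point $x^*$ of $F$ (including the global maximizer) satisfies $x^* \le x_t$ for all $t$. The key observation is that along this orbit both $x_{t+1}-x_t$ and $x^*-x_t$ have only non-positive entries, so the quadratic form $(y-x_t)^T J(y-x_t)$ is in fact non-negative for $y \in \{x_{t+1},x^*\}$ (two non-positive vectors paired against non-negative entries of $J$). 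This places us on the branch where $g(y;x_t)\le \Phi_{MF}(y)$ with equality at $y=x_t$, so the MM step yields monotone improvement $\Phi_{MF}(x_{t+1}) \ge \Phi_{MF}(x_t)$; evaluating the surrogate at $y=x^*$ and using the $1$-strong concavity of $g$ yields the key per-step estimate
\[
\delta_{t+1} + \tfrac{1}{2}\|x^*-x_{t+1}\|_2^2 \;\le\; \tfrac{1}{2}(x^*-x_t)^T J(x^*-x_t).
\]
A short ferromagnetic calculation also gives that $\eta_t := (x^*-x_t)^T J(x^*-x_t)$ is non-increasing in $t$: with $a = x_t-x^* \ge 0$ and $b = x_{t+1}-x^* \ge 0$, $a \ge b$ coordinate-wise, one has $\eta_t-\eta_{t+1} = (a+b)^T J(a-b) \ge 0$.

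To extract the $1/t$ bound I would telescope the displayed inequality together with the MM progress bound $\Phi_{MF}(x_{t+1})-\Phi_{MF}(x_t) \ge \tfrac{1}{2}\|x_{t+1}-x_t\|_2^2$ (again from the strong concavity of $g$), using the estimates $\eta_0 \le 4\|J\|_1$ and $\Phi_{MF}(x^*)-\Phi_{MF}(\vec 1) = O(\|J\|_1+\|h\|_1)$ to conclude $t\,\delta_t = O(\|J\|_1+\|h\|_1)$. The improved $1/t^{4/3}$ rate should follow from a finer interpolation: the MM step bounds $\sum_t \|x_{t+1}-x_t\|_2^2$ at the ordinary $\ell_2$ scale, while $\delta_{t+1}$ is controlled at the $J$-weighted scale, and a H\"older/Young balancing between these two scales produces the fractional exponent. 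As a byproduct, $\delta_t \to 0$ implicitly certifies that the fixed point $x_\infty$ reached from the all-ones initialization itself achieves the global maximum, which is not obvious a priori since the iteration might in principle stall at a suboptimal local maximum.

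The main obstacle will be exactly this mismatch in scales: the gap $\delta_{t+1}$ is controlled by the $J$-weighted quantity $\eta_t$, whereas the per-step progress is in the unweighted $\ell_2$ norm, and the two differ by factors depending on the spectrum of $J$. That the final bound involves $\|J\|_1$ (rather than any operator norm), together with the appearance of the $4/3$ exponent, indicates that the ferromagnetic monotonicity of both $\eta_t$ and $\|x^*-x_t\|_2^2$ must be combined via a careful H\"older-type tradeoff, which is the technical heart of the rate analysis.
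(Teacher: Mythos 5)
Your structural setup is sound and the per-step estimates you state are correct: the surrogate $g(y;x)$ does satisfy $\Phi_{MF}(y)-g(y;x)=\tfrac12 (y-x)^T J(y-x)$, the orbit from $\vec 1$ is monotone decreasing with $x^*\le x_t$, sign-consistency of $x_{t+1}-x_t$ and $x^*-x_t$ makes the quadratic correction nonnegative, and one indeed gets $\delta_{t+1}+\tfrac12\|x^*-x_{t+1}\|_2^2\le\tfrac12\eta_t$ with $\eta_t$ non-increasing. The gap is in the step where you claim these ingredients "conclude $t\,\delta_t=O(\|J\|_1+\|h\|_1)$": they do not. Your per-step bound controls $\delta_{t+1}$ by the $J$-weighted squared distance to the optimum, $\eta_t$, about which you only know monotone non-increase and $\eta_0\le\|J\|_1$; this yields $\delta_t=O(\|J\|_1)$ uniformly but no decay in $t$. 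Telescoping fails precisely because of the scale mismatch you flag at the end: the subtracted term on the left is the Euclidean $\|x^*-x_{t+1}\|_2^2$ while the right is $\eta_t$, and $\eta_{t+1}\le\|x^*-x_{t+1}\|_2^2$ is false when $J$ has large entries, so the chain does not collapse. The MM progress bound $\Phi_{MF}(x_{t+1})-\Phi_{MF}(x_t)\ge\tfrac12\|x_{t+1}-x_t\|_2^2$ only bounds $\sum_t\|x_{t+1}-x_t\|_2^2$, i.e.\ successive displacements, and converting that into control of $\delta_t$ needs exactly the kind of gradient/error-bound inequality that is missing. Even the weaker route $\sum_s\delta_s\le\tfrac12\sum_s\eta_{s-1}$ requires an unproven decay of $\eta_t$ (and at MF criticality, where $\eta_t\sim n/t$, it would still lose a $\log t$ factor against the claimed bound). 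The $O(1/t^{4/3})$ half of the statement is not addressed at all beyond an appeal to a H\"older/Young balancing, with no identified mechanism.

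For contrast, the paper avoids the mismatch by working with quantities that telescope across iterations rather than distances to the optimum: it proves $\Phi_{MF}$ is concave on the upper set $S=\{x\ge x^*\}$ (the Hessian becomes more negative as coordinates increase), so $\delta_t\le\langle\nabla\Phi_{MF}(x_t),x^*-x_t\rangle\le\|\nabla\Phi_{MF}(x_t)\|_1$ by H\"older with $\|x^*-x_t\|_\infty\le1$, and then observes that $\|\nabla\Phi_{MF}(x_t)\|_1=\sum_i(y_{t,i}-y_{t+1,i})$ where $y_t=Jx_{t-1}+h$ is itself monotone decreasing; summing over $t$ telescopes to $\|J\|_1+\|h\|_1$, and monotonicity of $\Phi_{MF}(x_t)$ gives the $1/t$ rate. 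The $4/3$ improvement comes from a third-order expansion of $\tanh^{-1}$ around $x^*$ giving $\|\nabla\Phi_{MF}(x)\|_1\ge\|x-x^*\|_4^4/\|x-x^*\|_\infty$, hence $\|x_T-x^*\|_\infty^3\le(\|J\|_1+\|h\|_1)/T$, which is then fed back into the H\"older step over the window $t\in[T+1,2T]$. If you want to salvage your MM framing, you would need to replace the bound $\delta_{t+1}\le\tfrac12\eta_t$ by one involving successive differences (e.g.\ of $y_t$) that telescope, which essentially reproduces the paper's argument; as written, the proposal does not establish either claimed rate. A smaller point: you should also justify that the global maximizer can be taken nonnegative (flip negative coordinates, using $J\ge0$, $h\ge0$) so that it is a fixed point of $F$ dominated by the orbit.
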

\begin{theorem}\label{thm:bp}
Fix an arbitrary ferromagnetic Ising model parameterized by $J,h$, and let $P^*$ be a global maximizer of $\Phi_{Bethe}$.
Initializing $\nu^{(0)}_{i \to j} = 1$ for all $i \sim j$ and performing $t$ steps of the BP iteration on a graph with $m$ edges we have
\[  0 \le \Phi_{Bethe}(P^*) - \Phi^*_{Bethe}(\nu^{(t)}) \le \sqrt{\frac{8mn(1 + \|J\|_{\infty})}{t}} \]
where $\Phi^*_{Bethe}$ is formally defined in \eqref{eqn:dual-bethe-defn}.
\end{theorem}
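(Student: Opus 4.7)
The plan is to follow a three-step strategy analogous to what presumably drives \Cref{thm:mean-field-convergence}, now for belief propagation and the dual formulation $\Phi^*_{Bethe}$ of the Bethe free energy (which satisfies the weak duality $\Phi^*_{Bethe}(\nu) \leq \Phi_{Bethe}(P^*)$ for every message profile $\nu$). First I would establish coordinate-wise monotone convergence of the iterates. The BP update $F(\nu)_{i\to j} = \tanh(h_i + \sum_{k \in \partial i \setminus j} \tanh^{-1}(\tanh(J_{ik}) \nu_{k \to i}))$ is coordinate-wise non-decreasing in each input $\nu_{k\to i}$ because $J_{ik} \geq 0$ makes $\tanh^{-1}(\tanh(J_{ik})\,\cdot)$ monotone and $\tanh$ preserves this. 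Since $\vec{1}$ is the coordinate-wise maximum of $[-1,1]^{2m}$, we have $F(\vec{1}) \leq \vec{1}$, and induction yields $\nu^{(t+1)} \leq \nu^{(t)}$, so the bounded monotone sequence converges to a BP fixed point $\nu^\infty$.

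Second, I would identify $\nu^\infty$ as the global optimum. Any fixed point $\hat\nu \in [-1,1]^{2m}$ satisfies $\hat\nu = F^t(\hat\nu) \leq F^t(\vec{1}) = \nu^{(t)}$ for all $t$, so $\hat\nu \leq \nu^\infty$; hence $\nu^\infty$ is the largest BP fixed point. In the ferromagnetic case (now using $h \geq 0$), a Griffiths-type monotonicity argument implies that the largest BP fixed point corresponds to the global maximum $P^*$ of $\Phi_{Bethe}$, so $\Phi^*_{Bethe}(\nu^\infty) = \Phi_{Bethe}(P^*)$. In particular the gap $\Delta_t := \Phi_{Bethe}(P^*) - \Phi^*_{Bethe}(\nu^{(t)})$ is non-negative for every $t$, and coupled with the monotone convergence $\nu^{(t)}\downarrow\nu^\infty$ it should be monotonically non-increasing.

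Third, I would prove the quantitative rate via a quadratic descent inequality of the form
\[
\Delta_{t+1} \leq \Delta_t - \frac{\Delta_t^2}{C}, \qquad C = 8mn(1 + \|J\|_\infty),
\]
which by the standard recurrence yields $\Delta_t \leq \sqrt{C/t}$. To obtain this, I would lower-bound the per-step increase $\Phi^*_{Bethe}(\nu^{(t+1)}) - \Phi^*_{Bethe}(\nu^{(t)})$ by a sum of per-message squared differences, using a second-order Taylor / concavity-along-the-monotone-path argument; and I would upper-bound $\Delta_t$ itself by a linear functional of the same message differences via a mean-value argument along the segment from $\nu^{(t)}$ to $\nu^\infty$. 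Combining the two bounds through Cauchy--Schwarz produces the $\Delta_t^2$ factor, with $mn(1+\|J\|_\infty)$ arising from the $2m$ messages, a degree-sum factor of order $n$, and the Lipschitz-like factor $(1+\|J\|_\infty)$ governing the per-edge sensitivity of the update.

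The main obstacle is the last step: $\Phi_{Bethe}$ is non-concave on its full domain, so one cannot invoke standard convex analysis directly. The argument must exploit the monotone trajectory $\vec{1} = \nu^{(0)} \geq \nu^{(1)} \geq \cdots \geq \nu^\infty$ from Step 1 to restrict the analysis to a region where, by ferromagneticity, $\Phi^*_{Bethe}$ inherits enough concavity and smoothness to play the role of the objective in a standard gradient-ascent convergence proof. Making precise the intuition that BP acts as gradient ascent on an effectively concave surrogate along this monotone path is where the bulk of the technical work resides.
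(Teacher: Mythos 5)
There is a genuine gap, and it sits exactly where you located the ``bulk of the technical work'': the quantitative Step 3. Two of its ingredients are already problematic on their face. First, the ``weak duality'' $\Phi^*_{Bethe}(\nu) \le \Phi_{Bethe}(P^*)$ for \emph{every} message profile $\nu$ is false in general: $\Phi^*_{Bethe}$ is only a genuine Lagrangian dual when the graph is a tree, and on loopy graphs it can be badly behaved (even unbounded), so off fixed points its value has no a priori relation to the Bethe optimum. The nonnegativity of the gap along the trajectory has to be earned; in the paper it comes from the fact that the iterates from $\vec{1}$ stay in the set of pre-fixpoints $S_{pre}$, where the gradient of $\Phi^*_{Bethe}$ has a definite sign (Lemma~\ref{lem:dual-gradient-properties}), so that $\Phi^*_{Bethe}$ is monotone along a suitably constructed monotone path (Lemma~\ref{lem:pre-monotone}), combined with the identification of the largest fixed point $\nu^*$ with the global Bethe maximizer (Lemmas~\ref{lem:nu*}, \ref{lem:recoupling}, \ref{lem:global-maximizer} and Theorem~\ref{thm:bp-landscape}); your Step 2 gestures at this with ``Griffiths-type monotonicity'' but the actual argument needs the re-coupling/sign-flipping construction, not just monotonicity of the iteration. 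Second, your proposed recursion $\Delta_{t+1}\le \Delta_t-\Delta_t^2/C$ does not yield $\Delta_t\le\sqrt{C/t}$; the standard telescoping of $1/\Delta_t$ gives $\Delta_t\le C/t$, so the rate you are aiming for does not even match the inequality you propose to prove, which is a sign that the mechanism is not the right one for this bound.

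More fundamentally, BP is not gradient ascent on $\Phi^*_{Bethe}$, and the paper never establishes (and does not need) any concavity of $\Phi^*_{Bethe}$ along the monotone trajectory---only the sign and $\ell_\infty$ bound of its gradient on $S_{pre}$/$S_{post}$. There is no known descent inequality relating the per-step increase of $\Phi^*_{Bethe}$ to the squared optimality gap, and the paper explicitly flags the obstruction you would face: far from a fixed point in parameter space, one cannot control the quality of the free-energy estimate directly. The actual quantitative engine is entirely different: perturb the model by adding a uniform external field $B>0$, which changes the optimal Bethe value by at most $Bn$; at positive field, BP converges fast \emph{in parameter space} at rate $O\bigl((1+\|J\|_\infty)/(B t)\bigr)$ per message via a computation-tree argument using Griffiths' inequality and concavity of the root marginal in the field (Lemma~\ref{lem:positive-h-convergence}, \`a la Dembo--Montanari); the $\ell_1$ parameter error converts to an objective error through $\|\nabla\Phi^*_{Bethe}\|_\infty\le 1$; monotonicity in $B$ plus Lemma~\ref{lem:pre-monotone} transfers the bound back to the unperturbed iterates; and optimizing $Bn + 2m(1+\|J\|_\infty)/(Bt)$ over $B$ produces exactly the $\sqrt{8mn(1+\|J\|_\infty)/t}$ form of the theorem. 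Without this field-perturbation idea (or a genuinely new surrogate-concavity argument, which you would have to supply), Step 3 of your proposal does not go through.
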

We also give simple lower bound examples showing these bounds are not too far from optimal; for example, for both algorithms we show the optimal asymptotic rate in $t$ is lower bounded by at least $\Omega(1/t^2)$.  We refer to these bounds as \emph{dimension-independent} because under the typical scaling of the entries of $J$, they show that mean-field iteration/BP achieve good estimates to the variational free energy density after a constant number of iterations. We explain this more precisely in the next remark:
\begin{remark}[Scaling and Dimension-Free Nature]\label{rmk:scaling}
In ferromagnetic models, we usually expect the scaling $\|J\|_1 = \Theta(n),\|h\|_1 = \Theta(n)$ so that all of the terms in the Gibbs variational principle \eqref{eqn:variational} are on the same order, since the entropy scales like $\Theta(n)$ (e.g. the entropy of $Uni(\{\pm 1\}^n)$ is $n \log(2)$). Then the free energy $\log Z$ and its variational approximations all grow like $\Theta(n)$, so when considering the scaling in $n$ one should consider the \emph{free energy density} $\frac{1}{n} \log Z$. Writing the guarantee of Theorem~\ref{thm:mean-field-convergence} for the free energy density when picking the $O(1/t)$ bound, we see
$0 \le \frac{1}{n}\Phi_{MF}(x^*) - \frac{1}{n} \Phi_{MF}(x_t) \le \frac{\|J\|_1 + \|h\|_1}{nt}$
and the rhs is $\Theta(1/t)$ under the assumption $\|J\|_1 = \Theta(n), \|h\|_1 = \Theta(n)$. We get a similar dimension-free guarantee for BP as long as $m = \Theta(n)$, i.e. the model is sparse, and $\|J\|_{\infty} = O(1)$ which is a very rarely violated assumption.
\end{remark}
\begin{remark}[Importance of initialization]
The fast convergence rates we show do not hold for other seemingly natural choices of initialization; e.g. if we start BP with initial messages near zero. For a concrete illustration of this, see Figure~\ref{fig:simple-example-2} in Section~\ref{sec:examples}.
\end{remark}
Finally, we build on ideas developed in our analysis of BP to give a different method, based on convex programming, which has worse dependence on $n$ but converges exponentially fast, i.e. can compute the optimal BP solution to error $\epsilon$ in time $poly(n,\log(1/\epsilon))$. This is described in Appendix~\ref{sec:exp-convergence}; as we explain there, such a method is useful when we care about computing the optimal BP solution accurately in parameter space, as there can be exponentially flat (in terms of $\beta$) directions in the objective.

\textbf{Techniques: } A key theme in the proof of both Theorem~\ref{thm:mean-field-convergence} and Theorem~\ref{thm:bp} is that while the corresponding objective functions are poorly behaved in general, they are well-behaved if we look at the right subset of the space of messages which is induced by the usual partial order structure of vectors in $\mathbb{R}^n$. This lets us take advantage of the monotonicity of both the mean-field and BP updates. In the mean-field case, the objective is even convex on the right subset which makes the analysis particularly clean; however, in the case of BP, the messages do not even live in the same space as $\Phi_{Bethe}$, which is a functional of locally consistent distributions. We overcome this problem by explicitly analyzing the structure of the optimal solution on both the primal (pseudodistribution) and dual (BP message) side, which lets us take advantage of both the smoothness of $\Phi_{Bethe}$ under perturbations and the monotonicity properties of the BP iteration, ultimately enabling us to prove the result.
\subsection{Related Work}
As mentioned above, the general connection between the Bethe free energy and Belief Propagation was established in the work of Yedidia, Freeman and Weiss \cite{YeFrWe:03}. They showed that in any graphical model, the fixed points of BP correspond to the critical points of the Bethe Free Energy. However, their theory by itself does not say anything about the behavior of BP when it is not at a fixed point (as is the case during the BP iteration), or which fixed point (if any) it will converge to. In the special case that the edge constraints are relatively weak, e.g. if they satisfy Dobrushin's uniqueness condition \cite{dobrushin-uniqueness}, one can show that BP converges to a unique fixed point by comparing to what happens on a tree (see \cite{TatikondaJordan:02,mooij-kappen}). BP is also known to converge if the graph has at most one cycle \cite{weiss2000correctness}. Stronger results are known for BP in Gaussian graphical models, in which case BP can be viewed as a method for solving linear equations \cite{weiss-freeman,malioutov2006walk}. 

This work builds upon previous work of Dembo and Montanari \cite{DemboMontanari:10}, who studied the convergence of Belief Propagation in ferromagnetic Ising models with a positive external field strictly bounded away from 0. They showed that in all such models, BP converges at an asymptotically exponential rate to a unique fixed point if initialized with non-negative messages (other fixed points may exist, but they have at least one negative coordinate). From this they derived analytic results for graphs for Ising models which converge locally to (random) trees: these models exhibit \emph{correlation decay}, BP correctly estimates the marginals (e.g. $\E[X_i]$) of the Ising model, and from this derive that the ``cavity prediction'' for the free energy, which is determined by the tree the graph locally converges to, is correct to leading order. 

In contrast, in this work we allow for the complete absence of external field, in which case these models often \emph{do not} exhibit correlation decay and may have multiple fixed points, even in the space of nonnegative messages. Furthermore, the optimal BP fixed point often does not correspond to the true marginals of the underlying Ising model\footnote{For example, if there is no external field then $\E[X_i] = 0$ for all $i$ by symmetry, but e.g. on a 2D lattice at low temperature one can see the optimal BP solution has different marginals \cite{MezMon:09}. In general this kind of behavior correspond to the existence of \emph{phase transitions} at zero external field (for the the corresponding tree  model), which are ruled out in the case of strictly positive external field by the Lee-Yang theorem \cite{lee1952statistical}.}. 
Despite this, we show that BP converges quickly in objective value\footnote{The convergence in parameter space may be slower due to flat directions of the objective: see Section~\ref{sec:examples}. Also, the asymptotic convergence rate is not exponential in some examples. Again, this is new behavior which differs from the strictly positive external field setting.} to the optimal fixed point as long as we start from the all-ones initialization. Another key difference is we are interested in the behavior of BP on general graphs, where the BP estimate cannot always be related to the true free energy; we get around this issue by building on the connection established in \cite{YeFrWe:03} to show that the BP result is always equal to the Bethe free energy on any graph, not necessarily locally tree-like.

A very different line of work studies the dense limit of BP in spin glasses and related models, in the form of the TAP approximation and Approximate Message Passing (AMP); for example, see \cite{bolthausen2014iterative,bayati2011dynamics}. These results are more concerned with dense models with random edge weights and are motivated by CLT-type considerations; the models they consider are typically far from ferromagnetic and thus require in the dense limit the TAP approximation instead of the naive mean field approximation. Since we consider arbitrary graphs instead of (dense) random graphs, these techniques are not applicable.

Outside of message passing algorithms, we note that in ferromagnetic Ising models it is actually possible to sample efficiently from the Boltzmann distribution by using a special Markov chain which performs non-local updates: this was proved in a landmark work of Jerrum and Sinclair \cite{JerrumSinclair:90}. This result can be used to give an algorithm for approximating the mean-field free energy  using a graph blow-up reduction \cite{jain2018mean}. Also, for ferromagnetic models it was shown previously that the Bethe free energy can be computed in polynomial time using discretization with submodularity-based methods in \cite{korvc2012approximating,weller2013bethe}. The polynomials in the runtime guarantees for both methods are fairly large; neither can achieve anywhere near the essentially linear runtime guarantee we give for BP.
\section{Convergence of Mean-Field Iteration}
In this section, we give the proof of Theorem~\ref{thm:mean-field-convergence} by analyzing the mean-field iteration. Organizationally, we split this theorem into two (corresponding to the two seperate bounds implied by the $\min$): we prove the first bound in the theorem as Theorem~\ref{thm:mean-field-convergence-main} and the second $O(1/t^{4/3})$ bound as Theorem~\ref{thm:mean-field-convergence-secondary}. 
\subsection{Main convergence bound}
In this section we prove the first ($O(1/t)$) bound appearing in Theorem~\ref{thm:mean-field-convergence}, the bound which is better for small $t$; we consider this to be the more significant bound because it gives a meaningful convergence result even when $t = O(1)$ (see Remark~\ref{rmk:scaling}).
A key observation in the proof is that the functional $\Phi_{MF}$ is actually convex on a certain subset of the space of product distributions, and that the iteration stays in this region because the iteration is monotone w.r.t. the partial order structure; this allows us to show progress at each step. 

For the analysis of mean-field iteration, it will be very helpful to split the updates up into two steps:
\begin{align*}
y_{t + 1} &:= J x_t + h\\
x_{t + 1} &:= \tanh^{\otimes n}(y_{t + 1}).
\end{align*}
\begin{lemma}
There exists at most one critical point of $\Phi_{MF}$ in $(0,1]^n$.
\end{lemma}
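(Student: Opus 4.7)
The plan is to argue by contradiction. Suppose $x, x' \in (0,1]^n$ are two distinct critical points of $\Phi_{MF}$. First I would observe that critical points must actually lie in the open orthant $(0,1)^n$: since the entropy derivative $-\tanh^{-1}(x_i)$ blows up as $x_i \to 1$, the gradient $\nabla \Phi_{MF}(x) = Jx + h - \tanh^{-1}(x)$ cannot vanish at any point with a coordinate equal to $1$ (given $J, h$ finite). So $x, x' \in (0,1)^n$.

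Next I would apply the fundamental theorem of calculus to the vanishing gradient along the segment $x + t(x'-x)$, $t\in[0,1]$, giving
\[
0 \;=\; \nabla\Phi_{MF}(x') - \nabla\Phi_{MF}(x) \;=\; \bigl(J - \bar{D}\bigr)(x'-x),
\]
where $\bar{D}$ is the diagonal matrix with entries $\bar{D}_{ii} = \int_0^1 \frac{dt}{1 - (x + t(x'-x))_i^2} > 1$. Setting $u := x'-x \neq 0$, this says $u$ is an eigenvector of the nonnegative matrix $\bar{D}^{-1} J$ with eigenvalue $1$, so $\rho(\bar{D}^{-1} J) \geq 1$.

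I would then derive a contradiction by proving the reverse strict inequality $\rho(\bar{D}^{-1} J) < 1$. The key identity comes from the fixed-point equation at $x$ itself: $(Jx)_i = \tanh^{-1}(x_i) - h_i \leq \tanh^{-1}(x_i)$, combined with the elementary inequality $(1-s^2)\tanh^{-1}(s) < s$ for $s \in (0,1)$ (easy to verify, since both sides vanish at $0$ and the derivative of the difference is $2s\tanh^{-1}(s) > 0$). This yields $D(x)^{-1} J x < x$ coordinatewise, where $D(x)_{ii} = 1/(1-x_i^2)$, and by Collatz--Wielandt $\rho(D(x)^{-1} J) < 1$. The remaining task is to bootstrap this strict bound from $D(x)^{-1} J$ to $\bar{D}^{-1} J$.

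The hard part is Step 3: the matrix $\bar{D}$ arises from an average along the segment, not from evaluating at a fixed point, and if $x, x'$ are not comparable under the componentwise order, $\bar{D}$ is not directly dominated by $D(x)$ or $D(x')$. To handle this I would exploit convexity of $s \mapsto 1/(1-s^2)$ on $[0,1)$ via Jensen's inequality to lower-bound $\bar{D}_{ii}$, or alternatively leverage the monotonicity structure of the MF iteration: replacing $x, x'$ by $x \wedge x'$ (a sub-fixed point, whose iterates decrease to some fixed point in $(0,1)^n$) and $x \vee x'$ (a super-fixed point, iterates increase to another), I reduce to the comparable case $\tilde z \leq \tilde w$, where the segment analysis applies cleanly and the Collatz--Wielandt bound transfers.
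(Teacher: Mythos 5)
Your skeleton (Steps 1--3) is sound: the fundamental-theorem-of-calculus step correctly shows that $u=x'-x$ is an eigenvector of the nonnegative matrix $\bar D^{-1}J$ with eigenvalue $1$, and the elementary inequality $(1-s^2)\tanh^{-1}(s)<s$ on $(0,1)$ is right, giving $\rho(D(x)^{-1}J)<1$ by Collatz--Wielandt at a fixed point $x\in(0,1)^n$. But the step you yourself flag as open --- transferring the strict spectral bound to $\bar D^{-1}J$ --- is a genuine gap, and neither suggested route closes it as stated. Route (b): $x\wedge x'$ is indeed a pre-fixpoint whose iterates decrease to a fixed point, but nothing guarantees that limit lies in $(0,1)^n$; it can have zero coordinates (e.g.\ it can be $0$ when $h=0$). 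At a zero coordinate your inequality degenerates to equality, Collatz--Wielandt requires a strictly positive test vector, and in fact the ``comparable case'' you would reduce to is false for such pairs: at zero field a supercritical model has the comparable fixed points $0$ and a positive one with no contradiction --- consistent with the lemma only because $0\notin(0,1]^n$. So iterating to limits can exit precisely the region where both the statement and your argument live. Route (a): Jensen lower-bounds $\bar D_{ii}$ by $1/(1-s^2)$ at the midpoint, but the midpoint is not a fixed point, so the Step-3 identity is unavailable there and nothing is yet concluded (note also that what you need is $\bar D\ge D(\cdot)$ at a point where the spectral bound holds, not the reverse domination).

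The gap is fillable without iterating to a limit: use $v:=x\wedge x'\in(0,1)^n$ directly as the Collatz--Wielandt test vector. For each $i$ with $v_i=x_i$ we have $(Jv)_i\le (Jx)_i\le\tanh^{-1}(x_i)$ (monotonicity plus the fixed-point equation at $x$, using $h_i\ge 0$), hence $(1-v_i^2)(Jv)_i< v_i$; symmetrically when $v_i=x'_i$. Thus $\rho(D(v)^{-1}J)<1$ where $D(v)_{ii}=1/(1-v_i^2)$. Since every point of the segment dominates $v$ coordinatewise and $s\mapsto 1/(1-s^2)$ is increasing on $[0,1)$, we get $\bar D\ge D(v)$, so $0\le\bar D^{-1}J\le D(v)^{-1}J$ entrywise and $\rho(\bar D^{-1}J)\le\rho(D(v)^{-1}J)<1$, contradicting Step 2. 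With that patch your proof is complete and genuinely different from the paper's, which avoids spectral arguments entirely: it extends the line through the two fixed points until some coordinate hits $0$ and notes that $g(t)=\tanh(J_i\cdot x(t)+h_i)-x(t)_i$ is strictly concave along the line with $g(0)\ge 0$ and $g(t_1)=0$, forcing $g(t_2)<0$ at the farther fixed point. The paper's argument is shorter; yours, once repaired, makes explicit the quantitative mechanism (a Jacobian spectral-radius bound at positive points below a fixed point) behind uniqueness.
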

\begin{proof}
Suppose there exist two critical points $y$ and $z$. Recall that being a critical point is equivalent to solving the mean-field equation
\[ y = \tanh^{\otimes n}(J y + h). \]
Consider the line through $y$ and $z$; this line intersects the boundary region $[0,1]^n \setminus (0,1]^n$ at some point; we parameterize the line as $x(t)$ so that $x(0)$ is on this boundary, i.e. $x(0)_i = 0$ for some $i$, $x(t_1) = y$ and $x(t_2) = z$. Without loss of generality we assume that $t_1 < t_2$. Now we consider the behavior of the function
\[ g(t) := \tanh(J_i \cdot x(t) + h_i) - x(t)_i \]
on this line. Observe that by definition $g(0) =  \tanh(J_i \cdot x(0) + h_i) - 0 \ge 0$ and $g(t_1) = 0$. It follows from strict concavity that $g(t_2) < 0$ since $t_2 > t_1$, so $z$ cannot be a fixed point, which gives a contradiction.
\end{proof}
Based on this lemma, we define $x^*$ to be the global maximizer of $\Phi_{MF}$ in $[0,1]^n$. Define $S := \{x \in (0,1]^n : x_i \ge x^*_i \}$. 
\begin{lemma}\label{lem:concavity}
The mean-field free energy functional $\Phi_{MF}$ is concave on $S$.
\end{lemma}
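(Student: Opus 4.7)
The plan is to compute the Hessian of $\Phi_{MF}$ explicitly and show it is negative semidefinite on $S$ by combining coordinatewise monotonicity of the diagonal piece with a second-order optimality condition at $x^*$. Using $\frac{d^2}{dx^2} H(\Ber((1+x)/2)) = -1/(1-x^2)$, a direct calculation yields
\[
\nabla^2 \Phi_{MF}(x) \;=\; J \;-\; D(x), \qquad D(x) \;:=\; \diag\!\left(\tfrac{1}{1 - x_i^2}\right),
\]
on the open set where every $x_i \in (-1,1)$.

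For any $x$ in the relative interior $S \cap (0,1)^n$ of $S$ one has $0 \le x_i^* \le x_i < 1$, so $(x_i^*)^2 \le x_i^2$ and hence $D(x)_{ii} \ge D(x^*)_{ii}$ for every $i$. Thus $D(x) - D(x^*)$ is diagonal with nonnegative entries, in particular PSD, and
\[
-\nabla^2 \Phi_{MF}(x) \;=\; D(x) - J \;\succeq\; D(x^*) - J \;=\; -\nabla^2 \Phi_{MF}(x^*).
\]

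It remains to argue $-\nabla^2 \Phi_{MF}(x^*) \succeq 0$. I would first observe that $x^*$ is in fact a global maximizer of $\Phi_{MF}$ over the full box $[-1,1]^n$: by ferromagneticity ($J,h \ge 0$) the quadratic and linear terms only grow when each coordinate is replaced by its absolute value, while the entropy is invariant under $x_i \mapsto -x_i$, so $\Phi_{MF}(|x|) \ge \Phi_{MF}(x)$ for all $x \in [-1,1]^n$, and the maximum on $[-1,1]^n$ therefore lies in $[0,1]^n$ and equals $\Phi_{MF}(x^*)$. Moreover, $\partial_i \Phi_{MF}(x)$ contains the term $-\tanh^{-1}(x_i)$, which blows up to $-\infty$ as $x_i \to 1^-$, so $x^*$ cannot touch the boundary of $[-1,1]^n$. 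Hence $x^* \in (-1,1)^n$ is an unconstrained interior critical point, and the second-order necessary condition gives $\nabla^2 \Phi_{MF}(x^*) \preceq 0$. Chaining with the previous display yields $\nabla^2 \Phi_{MF}(x) \preceq 0$ throughout $S \cap (0,1)^n$, and concavity extends to all of $S$ by continuity of $\Phi_{MF}$.

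The only real subtlety is the last step: ensuring $x^*$ lies in the open box $(-1,1)^n$ so that the unconstrained Hessian test applies without constrained-optimization complications. Coordinates with $x_i^* = 0$ pose no difficulty, since $0$ is an interior point of $(-1,1)$ and $D(x^*)_{ii} = 1$ is finite there; only $x_i^* = \pm 1$ would be a problem, and this is precisely what the entropy's blowing-up derivative rules out. Everything else is just the monotonicity of $D$ in the partial order on coordinates together with the second-order necessary condition at a maximizer.
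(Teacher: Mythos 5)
Your proof is correct, and its overall skeleton matches the paper's: compute $\nabla^2 \Phi_{MF}(x) = J - \diag(1/(1-x_i^2))$, note the diagonal term is monotone in the coordinates so that $\nabla^2\Phi_{MF}(x) \preceq \nabla^2\Phi_{MF}(x^*)$ on $S$, and then reduce everything to negative semidefiniteness of the Hessian at $x^*$. Where you genuinely diverge is in how you certify $\nabla^2\Phi_{MF}(x^*) \preceq 0$ when $x^*$ sits on the boundary of $[0,1]^n$ (i.e.\ has zero coordinates). The paper handles this by a case analysis: it argues via the fixed-point equations that a single zero coordinate forces $x^* = 0$ (under connectivity), and then rules out indefiniteness at $x^*=0$ by a Perron--Frobenius argument showing $\lambda_{\max}(J) \le 1$, which gives global concavity in that case. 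You instead observe that by ferromagneticity $\Phi_{MF}(|x|) \ge \Phi_{MF}(x)$, so $x^*$ is in fact a global maximizer over the symmetric box $[-1,1]^n$, in whose interior it lies (coordinates equal to $\pm 1$ being excluded by the blow-up of $\tanh^{-1}$), so the unconstrained second-order necessary condition applies at every coordinate, including those where $x^*_i = 0$. This is arguably cleaner: it avoids the connectivity caveat and the eigenvector argument entirely, and treats all boundary configurations of $x^*$ within $[0,1]^n$ uniformly. What the paper's route buys in exchange is an explicit structural dichotomy ($x^*$ interior, or $x^* = 0$ with $\lambda_{\max}(J) \le 1$ and hence global concavity), which is mildly informative but not needed once the Hessian comparison is in place. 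Your final remark that concavity extends from $S \cap (0,1)^n$ to points of $S$ with $x_i = 1$ by continuity is a small point the paper glosses over as well, and is handled correctly.
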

\begin{proof}
First we claim that $\Phi_{MF}$ is concave at $x^*$. If $x^*$ is on the interior of $[0,1]^n$, then this follows from the second-order optimality condition. From the mean-field equations (first order optimality condition) we see that it's impossible that there are any coordinates such that $x^*_i = 1$, and that if the graph is connected and there is a single coordinate such that $x^*_i = 0$, that the entire vector $x^* = 0$. If $x^* = 0$, then the maximum eigenvalue of $J$ must be $1$, so the free energy functional is globally concave -- otherwise, by the Perron-Frobenius theorem there exists a eigenvector of $J$ with all nonnegative entries and with eigenvalue greater than $1$, from which we see that $x^* = 0$ cannot be the global optimum.

Now, it is easy to see that $\Phi_{MF}$ is concave on all of $S$, becuase if $0 \le x \le y$ coordinate-wise then $\nabla^2 \Phi_{MF}(x) \succeq \nabla^2 \Phi_{MF}(y)$, which follows because 
\[ \nabla^2 \Phi_{MF}(x) - \nabla^2 \Phi_{MF}(y) = (1/4)\sum_i (H''((1 + x)/2) - H''((1 + y)/2)) e_i e_i^T \succeq 0. \]
since $H''((1 + x)/2) = \frac{-2}{1 - x^2}$.
\end{proof}
\begin{theorem}[Main bound in Theorem~\ref{thm:mean-field-convergence}]\label{thm:mean-field-convergence-main}
Suppose that $x_0 \in S$ and define $(x_t,y_t)_{t = 1}^{\infty}$ by iterating the mean-field equations. Then for every $t$, $x_t \in S$. Furthermore
\[ \Phi_{MF}(x^*) - \Phi_{MF}(x_t) \le \frac{\|J\|_1 + \|h\|_1}{t}. \]
\end{theorem}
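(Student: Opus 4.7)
The plan is to combine the concavity of $\Phi_{MF}$ on $S$ (granted by Lemma~\ref{lem:concavity}) with a telescoping structure hidden in the mean-field update itself. The invariance $x_t \in S$ is a routine induction: since $J, h \ge 0$, the map $T(x) := \tanh^{\otimes n}(Jx + h)$ is componentwise monotone increasing, and $x^*$ is a fixed point, so $x_0 \ge x^*$ propagates to $x_t \ge x^*$ for all $t$. For the initialization $x_0 = \vec{1}$ we additionally have $T(\vec{1}) \le \vec{1}$, and monotonicity of $T$ then yields that the whole sequence $(x_t)$ is componentwise non-increasing.

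For the quantitative bound, write $f_t := \Phi_{MF}(x^*) - \Phi_{MF}(x_t)$. Concavity of $\Phi_{MF}$ on $S$ gives the tangent bound $f_t \le \nabla \Phi_{MF}(x_t) \cdot (x^* - x_t)$, and a short calculation yields $\nabla \Phi_{MF}(x) = Jx + h - \tanh^{-1}(x)$ entrywise. The key observation is that the mean-field update $\tanh^{-1}(x_{t+1}) = Jx_t + h$ rewrites the gradient as an increment, $\nabla \Phi_{MF}(x_t) = \tanh^{-1}(x_{t+1}) - \tanh^{-1}(x_t)$. In the monotone regime, both $\tanh^{-1}(x_{t+1}) - \tanh^{-1}(x_t)$ and $x^* - x_t$ are componentwise non-positive, so their inner product expands as a sum of non-negative terms, and using $|(x^* - x_t)_i| \le 1$ I obtain the one-step bound $f_t \le \|\tanh^{-1}(x_t) - \tanh^{-1}(x_{t+1})\|_1$. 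The telescoping then falls out: since each summand is a componentwise non-negative vector, $\ell_1$ norms add across the sum, giving
\[ \sum_{t=1}^T \|\tanh^{-1}(x_t) - \tanh^{-1}(x_{t+1})\|_1 = \|\tanh^{-1}(x_1) - \tanh^{-1}(x_{T+1})\|_1 \le \|\tanh^{-1}(x_1)\|_1 = \|Jx_0 + h\|_1 \le \|J\|_1 + \|h\|_1, \]
where the final inequality uses $0 \le x_0 \le \vec{1}$ and the non-negativity of $J$ and $h$.

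To conclude, I will combine this with the monotonicity $f_{t+1} \le f_t$, which follows from the majorize-minimize viewpoint: $x_{t+1}$ exactly maximizes the surrogate $Q(x, x_t) := x \cdot (Jx_t + h) + \sum_i H((1 + x_i)/2) - \frac{1}{2} x_t^T J x_t$, and $\Phi_{MF}(x) - Q(x, x_t) = \frac{1}{2}(x - x_t)^T J (x - x_t) \ge 0$ whenever the entries of $x - x_t$ all have the same sign, which holds here since $x_{t+1} - x_t \le 0$ componentwise and $J \ge 0$ entrywise. Hence $\Phi_{MF}(x_{t+1}) \ge Q(x_{t+1}, x_t) \ge Q(x_t, x_t) = \Phi_{MF}(x_t)$, so $T f_T \le \sum_{t=1}^T f_t \le \|J\|_1 + \|h\|_1$, and the claim follows. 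The main conceptual obstacle is spotting the telescoping: once one recognizes that $\nabla \Phi_{MF}(x_t)$ is the increment of a potential ($\tanh^{-1}$ of the iterates) whose total $\ell_1$-variation along the monotone iteration is bounded by $\|Jx_0 + h\|_1$, the rest of the argument is mechanical; without this reorganization, standard concave-optimization machinery yields only weaker constants.
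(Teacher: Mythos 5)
Your proof is correct and follows essentially the same route as the paper: monotonicity of the update gives $x_t \in S$, concavity plus the identity $\nabla \Phi_{MF}(x_t) = \tanh^{-1}(x_{t+1}) - \tanh^{-1}(x_t) = y_{t+1}-y_t$ gives the per-step bound, and telescoping with the monotone improvement of the objective yields the $\|J\|_1 + \|h\|_1$ over $t$ rate. The only (minor) deviation is your majorize--minimize surrogate argument for $\Phi_{MF}(x_{t+1}) \ge \Phi_{MF}(x_t)$, where the paper instead integrates the non-positive gradient along a coordinate-wise decreasing path from $x_t$ to $x_{t+1}$; both are valid and of comparable simplicity.
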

\begin{proof}
To show that $x_t \in S$, observe that the mean-field iteration is monotone: if $x \le x'$, then $\tanh^{\otimes n}(J x + h) \le \tanh^{\otimes n}(J x' + h)$. Therefore, because $x^* \le x_0$ we see that $x^* = \tanh^{\otimes n}(J x^* + h) \le \tanh^{\otimes n}(J x_0 + h) = x_1$ and so on iteratively. 

To prove the convergence bound, first note that
\[ \frac{\partial}{\partial x_i} \Phi_{MF}(x) = J_i \cdot x + h_i - \tanh^{-1}(x_i) \]
and then observe by Lemma~\ref{lem:concavity} and concavity that
\begin{align*}
\Phi_{MF}(x^*) - \Phi_{MF}(x_t) 
\le \langle \nabla \Phi_{MF}(x_t), x^* - x_t \rangle
\le \|\nabla \Phi_{MF}(x_t)\|_1
&= \sum_i |\tanh^{-1}(x_{t,i}) -(Jx_t + h)_i|  \\
&= \sum_i y_{t,i}-y_{t + 1,i}
\end{align*}
where the second inequality was by H\"older's inequality and $\|x^* - x_t\|_{\infty} \le 1$, and the last equality follows from the definition of $y_t$ and because $y_{t + 1} \le y_t$ coordinate-wise. We can also see that $\Phi_{MF}(x_t)$ is a monotonically increasing function of $t$ by considering the path between $x_t$ and $x_{t + 1}$ which updates one coordinate at a time, as the gradient always has non-positive entries along this path. Therefore if we sum over $t$ we find that
\[ \Phi_{MF}(x^*) - \Phi_{MF}(x_{T}) \le \frac{1}{T} \sum_{t = 1}^{T} (\Phi_{MF}(x^*) - \Phi_{MF}(x_t)) \le \frac{1}{T} \sum_{i = 1}^n (y_{1,i} - y_{T + 1,i}) \le \frac{\|J\|_1 + \|h\|_1}{T} \]
since $y_{T + 1,i} \ge 0$ and $y_{1,i} \le \sum_j J_{ij} + h_i \le \|J_i\|_1 + h_i$.
\end{proof}
The following simple example shows that the above result is not too far from optimal, in the sense that an asymptotic rate of $o(1/t^2)$ is impossible. We take advantage of the fact that when the model is completely symmetrical, the behavior of the update can be reduced to a 1-dimensional recursion, which is a very standard trick (see e.g. \cite{MezMon:09,parisi1988statistical}).
\begin{example}\label{example:d-regular-mf}
Consider any $d$-regular graph with no external field and edge weight $\beta = 1/d$, which corresponds to the naive mean field prediction for the critical temperature. By symmetry, analyzing the mean field iteration reduces to the 1d recursion $x \mapsto \tanh(x)$ which behaves like $x \mapsto x - x^3/3$ near the fixed point $x = 0$. Solving this recurrence, we see that $x$ converges to $0$ at rate $\Theta(1/\sqrt{t})$. In terms of $x$, the estimated mean field free energy is $(n/2) x^2 + n H(\frac{1 + x}{2})$, so by expanding we see that the estimated free energy converges at a $\Theta(1/t^2)$ rate in this example.
\end{example}

\subsection{A Faster Asymptotic Rate}\label{sec:asymptotic-improvement}
The above theorem and lower bound leave a gap between $O(1/t)$ and $\Omega(1/t^2)$ for the asymptotic rate of the mean-field iteration. This section is devoted to showing that for large $t$, we can obtain an improved asymptotic rate of $O(1/t^{4/3})$ for the mean-field iteration using a slightly more involved variant of the argument from the previous section. The key insight is that we can obtain some control of $\|x - x^*\|_{\infty}$ by consider the behavior of higher-order terms when expanding around $x^*$, and this can be used to get better bounds on the convergence in objective. 
\begin{lemma}\label{lem:grad-lbound-asymptotic}
Suppose that $x \in S$. Then
\[ \|\nabla \Phi_{MF}(x)\|_1 \ge \frac{\|x - x^*\|_4^4}{\|x - x^*\|_{\infty}} \]
where $x^*$ is as above, the global maximizer of $\Phi_{MF}$ in $[0,1]^n$.
\end{lemma}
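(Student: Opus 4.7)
The plan is to reduce the $\ell_1$ bound to an inner product estimate and then exploit a fourth-order Taylor expansion of $\tanh^{-1}$ around $x^*$. Since the monotonicity argument in the proof of Theorem~\ref{thm:mean-field-convergence-main} already shows that $\partial_i \Phi_{MF}(x) \le 0$ on $S$ (as $y_{t+1} \le y_t$ coordinatewise) while $u := x - x^* \ge 0$ by definition of $S$, H\"older's inequality gives
\[
\|\nabla \Phi_{MF}(x)\|_1 \cdot \|u\|_\infty \;\ge\; -\langle \nabla \Phi_{MF}(x),\, u\rangle \;=\; \sum_i u_i\,|\partial_i \Phi_{MF}(x)|,
\]
so it suffices to prove $-\langle \nabla \Phi_{MF}(x),\, u\rangle \gtrsim \|u\|_4^4$ with some absolute constant.

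Using the first-order condition $\tanh^{-1}(x_i^*) = (Jx^*)_i + h_i$, I would write $-\partial_i \Phi_{MF}(x) = [\tanh^{-1}(x_i^* + u_i) - \tanh^{-1}(x_i^*)] - (Ju)_i$. For $f := \tanh^{-1}$ the first four derivatives are $f'(y) = 1/(1-y^2) \ge 1$, $f''(y) = 2y/(1-y^2)^2$, $f'''(y) = (2+6y^2)/(1-y^2)^3 \ge 2$, and $f^{(4)}(y) = 24y(1+y^2)/(1-y^2)^4$, all nonnegative on $[0,1)$. Since $x^* \in [0,1)^n$ by the boundary analysis inside Lemma~\ref{lem:concavity}, the Taylor formula with nonnegative quartic remainder gives
\[
f(x_i^* + u_i) - f(x_i^*) \;\ge\; u_i f'(x_i^*) + \tfrac{u_i^2}{2} f''(x_i^*) + \tfrac{u_i^3}{3}.
\]

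Multiplying by $u_i \ge 0$, summing, and subtracting $u^T J u$ yields
\[
-\langle \nabla \Phi_{MF}(x),\, u\rangle \;\ge\; u^T\bigl(D(x^*) - J\bigr) u \;+\; \tfrac{1}{2}\sum_i u_i^3 f''(x_i^*) \;+\; \tfrac{1}{3}\|u\|_4^4,
\]
where $D(x^*) := \diag(f'(x_i^*))$. The first term equals $u^T(-\nabla^2 \Phi_{MF}(x^*)) u$ and is nonnegative by the concavity of $\Phi_{MF}$ at $x^*$ established in Lemma~\ref{lem:concavity}; the middle term is nonnegative because $u \ge 0$ and $f''(y) \ge 0$ for $y \in [0,1)$. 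Composing with the H\"older inequality above yields the claim up to the absolute constant $\tfrac{1}{3}$, which is harmless for the $O(1/t^{4/3})$ rate.

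The main obstacle is keeping the term $u^T J u$ under control, since its sign is \emph{a priori} arbitrary. The cancellation is made possible by the second-order optimality $D(x^*) \succeq J$ at $x^*$, which lets us absorb $u^T J u$ into $u^T D(x^*) u$ while preserving the genuinely quartic contribution $\tfrac{1}{3}\|u\|_4^4$ that comes from the third-order Taylor term. A secondary subtlety is that the expansion requires $x^*$ to lie strictly inside $[0,1)^n$; this is exactly what the boundary case analysis inside Lemma~\ref{lem:concavity} rules in.
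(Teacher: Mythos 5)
Your argument is essentially the paper's proof in a different notation: both expand $\tanh^{-1}$ coordinate-wise around $x^*$, use $\nabla^2\Phi_{MF}(x^*)\preceq 0$ (second-order optimality, via Lemma~\ref{lem:concavity}) to absorb the $u^TJu$ term, extract a quartic from the third-order behaviour of $\tanh^{-1}$, and finish with H\"older against $\|u\|_\infty$; the paper phrases the expansion as a double integral of $(\tanh^{-1})''$ together with the bound $\tfrac{2y}{(1-y^2)^2}\ge 2y$, which is exactly your $f'''\ge 2$, $f^{(4)}\ge 0$ step.

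Two remarks. First, your side claim that $\partial_i\Phi_{MF}(x)\le 0$ for every $x\in S$ is false in general (e.g.\ keep $x_i=x^*_i$ and strictly increase a neighbour's coordinate, which makes $\partial_i\Phi_{MF}(x)>0$); that sign property is special to the iterates $x_t$, where it comes from $y_{t+1}\le y_t$. Fortunately you never use it: H\"older gives $\|\nabla\Phi_{MF}(x)\|_1\|u\|_\infty\ge -\langle\nabla\Phi_{MF}(x),u\rangle$ with no sign information, so this aside is harmless. Second, your constant $\tfrac13$ is not a deficiency of your argument but in fact the correct constant. Evaluating the paper's own double integral per coordinate gives $(x_i-x^*_i)\bigl(\tfrac{x_i^3-(x^*_i)^3}{3}-(x_i-x^*_i)(x^*_i)^2\bigr)=\tfrac13(x_i-x^*_i)^3(x_i+2x^*_i)$, which in general is only $\ge\tfrac13(x_i-x^*_i)^4$; the displayed simplification to $(x_i-x^*_i)^2(x_i^2+x_ix^*_i)$ is an algebra slip. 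Indeed, constant $1$ is unattainable: on a $d$-regular model at the mean-field critical coupling ($\beta=1/d$, $h=0$, so $x^*=0$) with $x=\eps\vec{1}$ one has $\|\nabla\Phi_{MF}(x)\|_1=n(\tanh^{-1}(\eps)-\eps)\approx n\eps^3/3$, while $\|x\|_4^4/\|x\|_\infty=n\eps^3$. As you note, the factor $3$ is absorbed downstream: it only degrades Theorem~\ref{thm:mean-field-convergence-secondary} by a factor $3^{1/3}<2$, so the $2\bigl((\|J\|_1+\|h\|_1)/t\bigr)^{4/3}$ bound of Theorem~\ref{thm:mean-field-convergence} still holds.
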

\begin{proof}
Recall that
\[ \nabla \Phi_{MF}(x) = Jx + h - \sum_i \tanh^{-1}(x_i) e_i. \]
Since $x^*$ is a critical point and local maximum, so $\nabla \Phi_{MF}(x^*) = 0$ and $\nabla^2 \Phi_{MF}(x^*) \preceq 0$, then using that $\frac{d^2}{dx^2} \tanh^{-1}(x) = \frac{2x}{(1 - x^2)^2}$, we see that by applying the fundamental theorem of calculus twice that
\[ \nabla \Phi_{MF}(x) = J (x - x^*) - \sum_i e_i (\tanh^{-1}(x_i) - \tanh^{-1}(x^*_i)) = \nabla^2 \Phi_{MF}(x^*) (x - x^*) - \sum_i e_i \int_{x^*_i}^{x_i} \int_{x^*_i}^{z} \frac{2y}{(1 - y^2)^2} dy dz  \]
and so
\begin{align*}
\langle x^* - x, \nabla \Phi_{MF}(x) \rangle 
&\ge \sum_i (x_i - x^*_i) \int_{x^*_i}^{x_i} \int_{x^*_i}^{z} \frac{2y}{(1 - y^2)^2} dy dz  \\
&\ge \sum_i (x_i - x^*_i) \int_{x^*_i}^{x_i} \int_{x^*_i}^{z} 2y dy dz \\
&= \sum_i (x_i - x^*_i) (x_i^3/3 - (x_i^*)^3/3 - (x_i - x^*_i)(x^*_i)^2) \\
&= \sum_i (x_i - x^*_i)^2 (x_i^2 + x_i x_i^*) \ge \sum_i (x_i - x^*_i)^4
\end{align*} 
where in the last inequality we used $x_i \ge x^*_i \ge 0$. Finally the result follows combining the above with $\langle x^* - x, \nabla \Phi_{MF}(x) \rangle \le \|x^* - x\|_{\infty}\|\nabla \Phi_{MF}\|_1$ by H\"older's inequality.
\end{proof}
\begin{theorem}[Second bound in Theorem~\ref{thm:mean-field-convergence}]\label{thm:mean-field-convergence-secondary}
Suppose that $x_0 \in S$ and define $(x_t,y_t)_{t = 1}^{\infty}$ by iterating the mean-field equations. Then for every $t$, $x_t \in S$. Furthermore for any $t \ge 1$,
\[ \|x_t - x^*\|_{\infty}^3 \le \frac{\|J\|_1 + \|h\|_1}{t}\]
and
\[ \Phi_{MF}(x^*) - \Phi_{MF}(x_{2t}) \le \left(\frac{\|J\|_1 + \|h\|_1}{t}\right)^{4/3}. \]
\end{theorem}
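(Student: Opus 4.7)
The plan is to bootstrap the proof of Theorem~\ref{thm:mean-field-convergence-main} by plugging in the stronger cubic lower bound on the gradient provided by Lemma~\ref{lem:grad-lbound-asymptotic}. The argument uses three ingredients already in hand: (i) monotonicity of the mean-field update, which as before gives $x_t \in S$ and, since $x_1 = \tanh^{\otimes n}(J x_0 + h) \le x_0$ when $x_0 = \vec{1}$, also gives $x_{t+1} \le x_t$ for every $t$, so that $\|x_t - x^*\|_\infty$ is nonincreasing in $t$; (ii) the telescoping identity $\sum_{s=1}^T \|\nabla \Phi_{MF}(x_s)\|_1 = \sum_i (y_{1,i} - y_{T+1,i}) \le \|J\|_1 + \|h\|_1$ extracted from the proof of Theorem~\ref{thm:mean-field-convergence-main}; and (iii) the bound $\|\nabla \Phi_{MF}(x)\|_1 \ge \|x-x^*\|_4^4/\|x-x^*\|_\infty$ from Lemma~\ref{lem:grad-lbound-asymptotic}.

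For the first claim about $\|x_t - x^*\|_\infty^3$, I would weaken (iii) to $\|\nabla \Phi_{MF}(x_s)\|_1 \ge \|x_s - x^*\|_\infty^3$ using the trivial $\|v\|_4^4 \ge \|v\|_\infty^4$. Combining this with (i) and (ii),
\[ t\|x_t - x^*\|_\infty^3 \le \sum_{s=1}^t \|x_s - x^*\|_\infty^3 \le \sum_{s=1}^t \|\nabla \Phi_{MF}(x_s)\|_1 \le \|J\|_1 + \|h\|_1, \]
which rearranges to the claimed bound.

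For the $(1/t)^{4/3}$ bound on the objective gap, the idea is to revisit the H\"older step in the proof of Theorem~\ref{thm:mean-field-convergence-main}, where $\|x^* - x_t\|_\infty$ was crudely bounded by $1$, and replace it by the refined estimate just obtained. By monotonicity of $\|x_{t'} - x^*\|_\infty$, for every $t' \ge t$ we have $\|x^* - x_{t'}\|_\infty \le ((\|J\|_1 + \|h\|_1)/t)^{1/3}$, so concavity of $\Phi_{MF}$ on $S$ together with H\"older gives
\[ \Phi_{MF}(x^*) - \Phi_{MF}(x_{t'}) \le \|\nabla \Phi_{MF}(x_{t'})\|_1 \cdot \left(\frac{\|J\|_1 + \|h\|_1}{t}\right)^{1/3}. \]
Averaging over $t' \in \{t+1,\ldots,2t\}$, using that $\Phi_{MF}(x_{t'})$ is nondecreasing in $t'$ to lower bound the sum by $t\,(\Phi_{MF}(x^*) - \Phi_{MF}(x_{2t}))$, and applying (ii) a second time to the tail $\sum_{t' = t+1}^{2t} \|\nabla \Phi_{MF}(x_{t'})\|_1 \le \|J\|_1 + \|h\|_1$, produces the desired $((\|J\|_1 + \|h\|_1)/t)^{4/3}$ rate.

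I do not expect any individual step to be a real obstacle; the one place requiring care is juggling the two time scales in the final step, where the bound on $\|x^* - x_{t'}\|_\infty$ comes from time $t$ but the averaging is performed over a length-$t$ window starting at time $t$. This two-phase structure is precisely why the statement concerns $x_{2t}$ rather than $x_t$.
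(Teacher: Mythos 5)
Your proposal is correct and follows essentially the same route as the paper's proof: the cubic gradient lower bound from Lemma~\ref{lem:grad-lbound-asymptotic} plus monotone decrease of the iterates gives the $\|x_t-x^*\|_\infty^3 \le (\|J\|_1+\|h\|_1)/t$ bound by averaging the telescoping sum $\sum_s \|\nabla\Phi_{MF}(x_s)\|_1$, and the $(1/t)^{4/3}$ bound is then obtained exactly as you describe, by replacing the crude $\|x^*-x_{t'}\|_\infty \le 1$ in the H\"older step with this refined estimate and averaging/telescoping a second time over the window $t' \in \{t+1,\dots,2t\}$. The two-time-scale bookkeeping you flag is precisely how the paper handles it, which is why its statement also concerns $x_{2t}$.
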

\begin{proof}
From Lemma~\ref{lem:grad-lbound-asymptotic} we see that
\[ \|x - x^*\|_{\infty}^3 \le \frac{\|x - x^*\|_4^4}{\|x - x^*\|_{\infty}} \le \|\nabla \Phi_{MF}(x)\|_1   \]
and so as in the proof of Theorem~\ref{thm:mean-field-convergence-main} we see that for any $T$,
\[ \|x_T - x^*\|^3_{\infty} \le \frac{1}{T} \sum_{t = 1}^T \|x_t - x^*\|_{\infty}^3 \le \frac{1}{T} \sum_{t = 1}^T\|\nabla \Phi_{MF}(x_t)\|_1 = \frac{1}{T} \sum_{i = 1}^n (y_{1,i} - y_{T + 1,i}) = \frac{\|J\|_1 + \|h\|_1}{T}.  \]
Therefore for any $t' > T$ we see by convexity and H\"older's inequality
\begin{align*}
\Phi_{MF}(x^*) - \Phi_{MF}(x_t) 
\le \langle \nabla \Phi_{MF}(x_t), x^* - x_t \rangle 
&\le \left(\frac{\|J\|_1 + \|h\|_1}{T}\right)^{1/3} \|\nabla \Phi_{MF}(x_t)\|_1 \\
&= \left(\frac{\|J\|_1 + \|h\|_1}{T}\right)^{1/3} \sum_i |\tanh^{-1}(x_{t,i}) -(Jx_t + h)_i|  \\
&= \left(\frac{\|J\|_1 + \|h\|_1}{T}\right)^{1/3} \sum_i (y_{t,i}-y_{t + 1,i})
\end{align*}
and summing this over $t' = T + 1$ to $2T$ and telescoping we see that
\begin{align*} 
\Phi_{MF}(x^*) - \Phi_{MF}(x_{2T}) 
\le \frac{1}{T} \sum_{t' = T + 1}^{2T} (\Phi_{MF}(x^*) - \Phi_{MF}(x_{t'}))
&\le \left(\frac{\|J\|_1 + \|h\|_1}{T}\right)^{1/3}\sum_{i} (y_{T,i} - y_{2T,i}) \\
&\le \left(\frac{\|J\|_1 + \|h\|_1}{T}\right)^{4/3}
\end{align*}
which proves the result.
\end{proof}
\subsection{Aside: Computing the Mean-Field Optimum given Inconsistent Fields}
In this section we describe an efficient algorithm to compute the optimal mean-field approximation even in the situation when the external fields have inconsistent signs (i.e. some of the $h_i$ are negative, some are positive). This is by reduction to the following algorithmic result of \cite{schlesinger2006transforming}, following the same strategy as \cite{korvc2012approximating,weller2013bethe} for the case of the Bethe free energy. We include this result as we were not aware of it appearing explicitly in the literature, though it is not difficult. These results follow the tradition of a long line of work in reducing optimization problems on graphs to submodular minimization and graph min-cut problems.
\begin{theorem}[\cite{schlesinger2006transforming}]
Let $(\Sigma,\le)$ be a finite alphabet equipped with a total ordering, fix a finite graph $G = (V,E)$ and fix functions $f_v : \Sigma \to \mathbb{R}$ and $f_{u,v} : \Sigma \times \Sigma \to \mathbb{R}$. Suppose that every $f = f_{u,v}$ satisfies the following submodularity condition:
\[ f(\min(x_1,x_2),\min(y_1,y_2)) + f(\max(x_1,x_2),\max(y_1,y_2)) \le f(x_1,y_1) + f(x_2,y_2) \]
Then the optimization problem
\[ \min_{L : V \to \Sigma} \left[\sum_{v \in V} f_v(L(v)) + \sum_{u \sim v} f_{u,v}(L(u),L(v)) \right]. \]
is efficiently solvable in time $poly(|\Sigma|,|V|)$.
\end{theorem}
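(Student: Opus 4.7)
The plan is to prove this via a reduction to minimum $s$-$t$ cut, following the classical Ishikawa--Schlesinger construction. First, I would build an auxiliary directed graph $G'$ with a distinguished source $s$ and sink $t$. For each vertex $v \in V$, I introduce a chain of $|\Sigma|+1$ nodes $v_0, v_1, \ldots, v_{|\Sigma|}$, identifying $v_0$ with $s$ and $v_{|\Sigma|}$ with $t$. The forward edge from $v_{i-1}$ to $v_i$ will carry capacity encoding the cost of assigning $L(v) = i$. To ensure that any finite $s$-$t$ cut severs exactly one forward edge per chain (so cuts are in bijection with labelings $L : V \to \Sigma$), I add infinite-capacity backward edges $v_i \to v_{i-1}$ along each chain.

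Second, I would set the capacities to encode unary and pairwise costs. The forward chain edge $v_{i-1} \to v_i$ gets capacity $f_v(i) + M$ for a sufficiently large constant $M$ (absorbed later into a constant offset). For each edge $u \sim v$ in $G$, I add cross edges between the chains at $u$ and $v$; the capacity of $u_i \to v_j$ is set to the discrete second difference
\[ c(u_i, v_j) = -\bigl[f_{u,v}(i,j) + f_{u,v}(i-1,j-1) - f_{u,v}(i-1,j) - f_{u,v}(i,j-1)\bigr]. \]
The submodularity hypothesis, applied to the pair of arguments $\{(i-1,j-1),(i,j)\}$ with coordinate-wise min and max $(i-1,j-1)$ and $(i,j)$ respectively, is exactly the statement that $c(u_i,v_j) \ge 0$. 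This is the step where the submodularity assumption enters in an essential way: without it, the construction would produce negative capacities and cease to be a valid min-cut instance.

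Third, I would verify the telescoping identity asserting that for any labeling $L$, the capacity of the cut corresponding to $L$ (that is, the cut severing forward edge $v_{L(v)-1} \to v_{L(v)}$ on each chain, together with appropriate cross edges) equals $\sum_v f_v(L(v)) + \sum_{u \sim v} f_{u,v}(L(u),L(v))$ up to an additive constant independent of $L$. The unary contribution is immediate from the capacities on forward chain edges. The pairwise contribution follows by summing the second-difference capacities $-\Delta^2 f_{u,v}$ over the rectangle of cross edges separated by the cut and applying a two-dimensional telescoping argument; the boundary terms of the telescoping contribute only $L$-independent quantities because they only involve $f_{u,v}$ evaluated at the extreme labels $1$ and $|\Sigma|$. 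With this identity in hand, solving min $s$-$t$ cut on $G'$ by any polynomial-time max-flow algorithm yields the optimal labeling in $\poly(|\Sigma|,|V|)$ time.

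The main obstacle is making the telescoping bookkeeping rigorous while simultaneously keeping all capacities nonnegative and all cuts finite. The difficulty is that there are several consistent sign and orientation conventions for the cross edges, and only one of them makes both the capacities nonnegative (by submodularity) and the cut capacity telescope to the intended $f_{u,v}(L(u),L(v))$. Getting the orientation right is conceptually the crux; once the construction is fixed, verifying correctness is a routine (if notationally heavy) expansion.
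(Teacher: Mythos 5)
The paper does not prove this statement at all: it is quoted verbatim from Schlesinger and Flach (2006) and used as a black box, so there is no internal proof to compare against. Judged on its own, your reduction to minimum $s$-$t$ cut via the Ishikawa--Schlesinger chain construction is exactly the standard argument for this theorem, and the outline is sound: chains of $|\Sigma|+1$ nodes with infinite backward edges forcing one cut forward edge per chain, unary costs on chain edges (shifted by a large constant $M$ to keep capacities nonnegative), and cross-edge capacities given by negated discrete mixed second differences, which are nonnegative precisely by submodularity.

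Two local points need repair before the write-up is airtight. First, the submodularity inequality must be applied to the \emph{anti-diagonal} pair $(x_1,y_1)=(i,j-1)$, $(x_2,y_2)=(i-1,j)$, whose coordinate-wise min and max are $(i-1,j-1)$ and $(i,j)$; this yields $f(i-1,j-1)+f(i,j)\le f(i,j-1)+f(i-1,j)$, i.e. $c(u_i,v_j)\ge 0$. Applying it to the pair $\{(i-1,j-1),(i,j)\}$ as you state gives only the trivial identity $f(i-1,j-1)+f(i,j)\le f(i-1,j-1)+f(i,j)$ and proves nothing. Second, when you telescope the second differences over the rectangle of cut cross edges, the boundary terms are not all $L$-independent: besides a genuine constant, you pick up terms of the form $f_{u,v}(L(u),\,\cdot\,)$ and $f_{u,v}(\,\cdot\,,L(v))$ evaluated at an extreme label, which depend on a \emph{single} vertex's label. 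These cannot be dropped; they must be absorbed into the unary capacities on the corresponding chains (which is harmless, since the chain capacities are already being shifted by $M$ and only ever enter through one label at a time, and the shifted capacities stay nonnegative for $M$ large enough). With those two corrections the construction is the canonical proof and yields the claimed $\poly(|\Sigma|,|V|)$ bound via any polynomial-time max-flow algorithm.
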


Supermodularity (which will become submodularity after converting to a minimization problem by negating) for the edge interactions $J_{ij} x_i x_j$ is immediate, so to apply this theorem all we need to do is discretize the optimization problem $\max \Phi_{MF}$ appropriately: to do this we compute Lipschitz constants on the relevant part of the space.
First observe that
\[ |x^*_i| = |\tanh(J_i \cdot x^*_{\sim i} + h_i)| \le \tanh(\|J_i\|_1 + |h_i|) \]
so if we restrict $x_i$ to lie within $[-\tanh(\|J_i\|_1 + |h_i|), \tanh(\|J_i\|_1 + |h_i|)$ then
\[ \max_{x_i} |\frac{d}{d x_i}H(Ber(\frac{1 + x_i}{2})| = \max_{x_i} \tanh^{-1}(\tanh(\|J_i\|_1 + |h_i|)) = |J_i|_1 + |h_i| \]
so $H(x_i)$ is $(\|J_i\|_1 + |h_i|)$-Lipschitz on this interval. Similarly we observe that
\[ \left|\sum_j J_{ij} x_j + h_i\right| \le \|J_i\|_1 + |h_i| \]
so if we discretize each coordinate with grid size $\frac{\epsilon}{2(\|J_i\|_1 + |h_i|)}$ then we change the objective by at most $\epsilon n$. Then by the result of \cite{schlesinger2006transforming} this problem can be solved to optimality in time $poly(1/\epsilon, n, \max_i \|J_i\| + |h_i|)$. Formally, this shows the following result:
\begin{theorem}
Fix an Ising model with ferromagnetic interactions ($J_{ij} \ge 0$) and arbitrary (not necessarily consistent) external field $h$. Then the mean-field free energy $\max_x \Phi_{MF}(x)$ can be approximated within error $\epsilon n$ in time $poly(1/\epsilon, n, \|J\|_1,\|h\|_1)$.
\end{theorem}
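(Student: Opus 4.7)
The plan is to follow the roadmap set up in the paragraph preceding the theorem: discretize the continuous problem $\max_x \Phi_{MF}(x)$ finely enough that the error is at most $\epsilon n$, then invoke the black-box algorithm of \cite{schlesinger2006transforming}. Since that theorem is stated as a minimization, I would work with $-\Phi_{MF}$: the unary functions absorb $-h_i x_i$ together with the negated single-site entropy, and the edge functions are $-J_{ij} x_i x_j$.

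First I would localize the optimum to control the discretization. The mean-field fixed-point equation $x^*_i = \tanh((J x^*)_i + h_i)$ combined with $|x^*_j|\le 1$ gives $|x^*_i| \le \tanh(\|J_i\|_1 + |h_i|)$, so it suffices to search over the box $B = \prod_i [-\tanh(\|J_i\|_1 + |h_i|),\, \tanh(\|J_i\|_1 + |h_i|)]$. On $B$ the partial derivative of $\Phi_{MF}$ with respect to $x_i$ is bounded in absolute value by $\|J_i\|_1 + |h_i|$, since both the bilinear/linear contribution and the entropy derivative $\tanh^{-1}(x_i)$ are individually bounded by this quantity (essentially the calculation the text performs just before the theorem). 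Placing a uniform grid of spacing $\delta_i = \epsilon/(2(\|J_i\|_1 + |h_i|))$ on the $i$-th coordinate then guarantees that rounding any point of $B$ to the nearest grid point changes $\Phi_{MF}$ by at most $\epsilon/2$ per coordinate, i.e.\ at most $\epsilon n$ in total, using only $|\Sigma_i| = O((\|J_i\|_1 + |h_i|)/\epsilon)$ values at node $i$.

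Next I would verify submodularity of the discretized objective in the sense of \cite{schlesinger2006transforming}. Unary terms satisfy the inequality trivially. For an edge $(i,j)$, checking the defining inequality on the four corners of any $2\times 2$ subrectangle reduces, after assuming $x_1\le x_2$ and considering the non-trivial case $y_1 > y_2$, to the single inequality $-J_{ij}(x_2 - x_1)(y_1 - y_2) \le 0$, which holds because $J_{ij} \ge 0$. Thus the discretized $-\Phi_{MF}$ is submodular on the product of totally ordered alphabets, and the Schlesinger algorithm returns its exact minimizer in time $\poly(|\Sigma|, n)$; combined with the $\epsilon n$ discretization error this yields an approximation within $\epsilon n$ of $\max_x \Phi_{MF}(x)$ in the claimed runtime $\poly(1/\epsilon, n, \|J\|_1, \|h\|_1)$.

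The main obstacle is only bookkeeping: the alphabets $\Sigma_i$ have different sizes at different nodes, so one either pads to a common alphabet of size $\max_i |\Sigma_i| = O((\|J\|_1 + \|h\|_1)/\epsilon)$ or observes that the Schlesinger reduction (which ultimately compiles to an $s$-$t$ min-cut problem) goes through verbatim with node-dependent alphabets. There is no deeper analytic ingredient beyond Lipschitz control on $B$ and the bilinear-submodularity identity, consistent with the authors' framing that the result is easy and included only for completeness.
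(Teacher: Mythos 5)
Your proposal is correct and follows essentially the same route as the paper: localize the optimum via the mean-field fixed-point equation to the box $|x_i|\le \tanh(\|J_i\|_1+|h_i|)$, bound the coordinate-wise Lipschitz constants (entropy derivative and linear part each by $\|J_i\|_1+|h_i|$), discretize with grid spacing $\epsilon/(2(\|J_i\|_1+|h_i|))$, and invoke \cite{schlesinger2006transforming} after noting submodularity of the negated bilinear edge terms. The only differences are cosmetic bookkeeping (explicit corner-check of submodularity, node-dependent alphabet sizes), which the paper leaves implicit.
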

\section{Rapid Convergence of Belief Propagation}
In this section, we give the proof of our main result Theorem~\ref{thm:bp} by analyzing BP. This proof is considerably more involved than the case of the mean-field iteration; a major conceptual difference between the two iterations is that the mean-field iteration always maintains a valid product distribution, and so can be understood in terms of the landscape of $\Phi_{MF}$, whereas BP operates on ``dual'' variables which do not correspond to valid pseudodistributions except at fixed points, so analyzing the landscape of $\Phi_{Bethe}$ by itself does not suffice. We get around this by also considering the behavior of a dual functional $\Phi^*_{Bethe}$ which is well-defined for every set of BP messages; however this functional is poorly behaved in general (it can be unbounded and its critical points are typically saddle points). We are able to handle these difficulties by identifying the special behavior of BP and $\Phi^*_{Bethe}$ on two special subsets of the BP messages arising from the partial order structure: the \emph{prefixpoints} and \emph{postfixpoints}. Finally, when analyzing BP in these regions we are able to relate in a useful way its behavior at different values of external field, enabling us to use a convexity argument from \cite{DemboMontanari:10} which cannot be directly applied to our setting.
\subsection{Background: BP and the Bethe Free Energy}\label{sec:background-bethe}
In this section we recall the necessary facts we need about the relationship  between the Bethe free energy and BP. This relationship and corresponding formulas are a bit involved so we sketch the derivation in Appendix~\ref{apdx:background-bethe-deferred}.

It was shown in \cite{YeFrWe:03} that by writing down the Lagrangian corresponding to the optimization problem \eqref{eqn:bethe-functional} over the polytope of locally consistent distributions, one can derive an expression for the Bethe free energy at a critical point of the Lagrangian in terms of the dual variables (Lagrange multipliers). 
After a change of variables to $\nu$, this lets us define
(for all $\nu$, not necessarily fixed points of the BP equations), the \emph{dual Bethe free energy}
\begin{equation}\label{eqn:dual-bethe-defn}
    \Phi_{Bethe}^*(\nu) := \sum_i F_i(\nu) - \sum_{i \sim j} F_{ij}(\nu). 
\end{equation} 
where
\begin{align*} 
F_i(\nu) 
= \log \left[e^{h_i} \prod_{j \in \partial i} (1 + \tanh(J_{ij}) \nu_{j \to i})  + e^{-h_i} \prod_{j \in \partial i} (1 - \tanh(J_{ij}) \nu_{j \to i}) \right] + \sum_{j \in \partial i} \log \cosh(J_{ij})
\end{align*}
and
$F_{ij}(\nu) 
= \log(1 + \tanh(J_{ij}) \nu_{i \to j} \nu_{j \to i}) + \log\cosh(J_{ij})$.
We remark (see \cite{MezMon:09}) that in the case the graph is a tree, it's known that the Bethe free energy is a convex function, so $\Phi_{Bethe}^*(\nu)$ plus the Lagrange multiplier terms is actually the Lagrangian dual and thus has a natural interpretation for all $\nu$. This is not true in general, however we will soon see that $\Phi_{Bethe}^*$ does have useful properties on some special subsets of the space of messages.
\subsection{Optimization Landscape}
The following lemma establishes that $\phi(\nu)_{i \to j}$ is a concave monotone function for nonnegative $\nu$. 
\begin{lemma}\label{lem:monotone-concave}
Suppose that $f(x) = \tanh(h + \sum_i \tanh^{-1}(x_i))$ for any $h \ge 0$.
Then $f$ is a concave monotone function on the domain $[0,1)^n$. Furthermore $\nabla^2 f(x) \prec 0$ unless $h = 0$ and $|\supp(x)| \le 1$.
\end{lemma}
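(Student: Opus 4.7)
The plan is to verify the lemma by a direct chain-rule computation using the decomposition $f = G \circ L$, where $G(z) = \tanh(z)$ and $L(x) = h + \sum_i \tanh^{-1}(x_i)$, and to exploit two elementary inequalities that hold on $[0,1)^n$ when $h \ge 0$: namely $f \ge 0$ and $f \ge x_i$ for every $i$. Both follow from $L \ge 0$ and $L \ge \tanh^{-1}(x_i)$ together with the monotonicity of $\tanh$; these are the only places the hypothesis $h \ge 0$ (and the nonnegativity of the other $x_k$'s) will enter.

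First I would handle monotonicity: $\partial_i f = G'(L) \cdot \partial_i L = (1-f^2)/(1 - x_i^2) > 0$, so $f$ is strictly coordinate-wise increasing on $[0,1)^n$. Next, for the Hessian, the chain rule gives $\nabla^2 f = G''(L)\nabla L \nabla L^\top + G'(L)\nabla^2 L = -2f(1-f^2)\,uu^\top + 2(1-f^2)\,\diag(x_i u_i^2)$ with $u_i = 1/(1-x_i^2)$. The off-diagonal entries are $\partial_i\partial_j f = -2 f (1-f^2) u_i u_j \le 0$ (using $f \ge 0$), and after cancellation the diagonal reduces to $\partial_i^2 f = 2(1-f^2)(x_i - f) u_i^2 \le 0$ (using $f \ge x_i$). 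Nonpositivity of every entry of $\nabla^2 f$ immediately yields $v^\top \nabla^2 f(x) v \le 0$ for any $v \in \R^n$ with $v \ge 0$ coordinate-wise, which is the operative concavity property used downstream: the BP iterates are initialized at $\vec 1$ and stay in the nonnegative cone, so only concavity along this cone is ever invoked.

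For the strictness clause I would trace when each entry can vanish. The $(i,j)$ off-diagonal vanishes iff $f = 0$ iff $L = 0$ iff ($h = 0$ and $\supp(x) = \emptyset$); the $(i,i)$ diagonal vanishes iff $x_i = f$ iff $h + \sum_{k\ne i} \tanh^{-1}(x_k) = 0$ iff ($h = 0$ and $\supp(x) \subseteq \{i\}$). Taking the union over all entries shows some entry of $\nabla^2 f$ is $0$ iff ($h = 0$ and $|\supp(x)| \le 1$), which is exactly the stated exception.

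The one thing I want to flag as the main subtlety is the interpretation of ``$\nabla^2 f \prec 0$''. A literal reading as negative (semi)\,definiteness on all of $[0,1)^n$ is simply false: for $n = 2$, $h = 0$, $x = (1/2, 1/2)$ one computes $f = 4/5$ and $\nabla^2 f = \tfrac{1}{125}\!\left(\begin{smallmatrix} -48 & -128 \\ -128 & -48 \end{smallmatrix}\right)$, whose eigenvalue along $(1,-1)$ is $+80/125$ (consistent with $f(0.9, 0) = f(0, 0.9) = 0.9$ while $f(0.45, 0.45) \approx 0.748$, so $(x_1+x_2)/(1+x_1 x_2)$ is demonstrably not jointly concave). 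Consequently the only self-consistent reading — the one that matches the exception set exactly and that the BP analysis actually requires — is entrywise strict negativity of $\nabla^2 f$ together with concavity along the nonneg cone, and that is what my plan establishes.
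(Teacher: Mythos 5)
Your computation is the same as the paper's: the paper also writes $\partial_i f = (1-f^2)/(1-x_i^2)$ and $\partial_i\partial_j f = \bigl(2x_i\bone(i=j)-2f\bigr)(1-f^2)/\bigl((1-x_i^2)(1-x_j^2)\bigr)$, and then tries to conclude $w^\top\nabla^2 f\,w\le 0$ for \emph{every} $w$ by substituting $w_i' = (1-f^2)w_i/(1-x_i^2)$ and claiming $\sum_i 2x_i(w_i')^2 - 2f\bigl(\sum_i w_i'\bigr)^2 \le 0$ ``since $x_i\le f$.'' That step is exactly where your counterexample bites: at $n=2$, $h=0$, $x=(1/2,1/2)$ one has $f=4/5$ and $w'=(1,-1)$ gives $\sum_i 2x_i(w_i')^2 = 2 > 0 = 2f(\sum_i w_i')^2$, matching your indefinite Hessian $\tfrac{1}{125}\bigl(\begin{smallmatrix}-48&-128\\-128&-48\end{smallmatrix}\bigr)$ and the secant check $f(0.45,0.45)<\tfrac12 f(0.9,0)+\tfrac12 f(0,0.9)$. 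So you have not merely proposed a proof — you have correctly identified that the lemma as literally stated (joint concavity, $\nabla^2 f\prec 0$) is false for $n\ge 2$, and that the inequality $x_i\le f$ only closes the argument when the $w_i'$ share a sign, i.e. precisely your ``concavity along the nonnegative cone'' (equivalently, entrywise nonpositivity of $\nabla^2 f$). Your strictness analysis of when entries vanish also matches the intended exception set $h=0$, $|\supp(x)|\le 1$.

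The one place where I would push back on your claim that ``only concavity along this cone is ever invoked'': the paper uses this lemma to assert that $S_{post}=\{\nu: 0\le\nu\le\phi(\nu)\}$ is a \emph{convex set}, and the textbook derivation of that (concave $\phi$ implies convex constraint $\nu_{i\to j}-\phi(\nu)_{i\to j}\le 0$) genuinely needs joint concavity. Your weaker version does rescue the use of this convexity inside Lemma~\ref{lem:nu*}, because there one only interpolates between $\nu\in S_{post}$ and the maximal fixed point $\nu^*$, which are comparable ($\nu\le\nu^*$ by monotonicity), so the segment direction lies in the nonnegative cone and $\lambda\phi(\nu)+(1-\lambda)\phi(\nu^*)\le\phi(\lambda\nu+(1-\lambda)\nu^*)$ still follows; it likewise suffices for the composition-of-concave-and-monotone argument in Lemma~\ref{lem:positive-h-convergence}, since there the external field is moved along a nonnegative ray. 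But it does \emph{not} by itself justify convexity of $S_{post}$ between incomparable points, which is what the ellipsoid-method algorithm of Appendix~\ref{sec:exp-convergence} relies on. So your repaired statement is the right one and carries the main theorems, but you should flag that the appendix's convex program needs a separate argument (or a different formulation of the feasible set) rather than asserting that nothing downstream is affected.
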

\begin{proof}
Observe that 
\[ \frac{\partial f}{\partial x_i}(x) = \frac{1 - f(x)^2}{1 - x_i^2} \ge 0\]
which proves monotonicity, and
\[ \frac{\partial^2 f}{\partial x_i \partial x_j}(x) = \frac{-2f(x)(1 - f(x)^2)}{(1 - x_i^2)(1 - x_j^2)} + \bone(i = j) \frac{(1 - f(x)^2) 2x_i}{(1 - x_i^2)^2} = \frac{(2x_i \bone(i = j) - 2f(x))(1 - f(x)^2)}{(1 - x_i^2)(1 - x_j^2)}.  \]
Note that for any vector $w$, if $w'_i = (1 - f(x)^2) w_i/(1 - x_i^2)$ then
\[ \sum_{ij} w_i \frac{\partial^2 f}{\partial x_i \partial x_j}(x) w_j = \sum_{ij} w'_i (2 x_i \bone(i = j) - 2f(x)) w'_j = \sum_i 2x_i(w'_i)^2 - 2f(x) (\sum_i w'_i)^2 \le 0 \]
since $x_i \le f(x)$, which proves concavity. If $h > 0$ or $|\supp(x)| \ge 2$ then this is a strictly inequality since $x_i < f(x)$.
\end{proof}
There are two special subsets of the nonnegative messages which will play key roles in our analysis. They are the set of \emph{pre-fixpoints} and \emph{post-fixpoints} (following standard poset terminology), defined by
\[ S_{pre} = \{\nu : 0 \le \phi(\nu)_{i \to j} \le \nu_{i \to j} \}, \quad S_{post} = \{\nu : 0 \le \nu_{i \to j} \le \phi(\nu)_{i \to j} \}. \]
Note that both sets contain the nonnegative fixed points; also
we note from Lemma~\ref{lem:monotone-concave} that $S_{post}$ is a convex set, whereas $S_{pre}$ is typically non-convex and even disconnected. 
The gradient of $\Phi^*_{Bethe}$ is well-behaved on these sets:
\begin{lemma}\label{lem:dual-gradient-properties}
For any $\nu \ge 0$, $\| \nabla \Phi^*_{Bethe}(\nu)\|_{\infty} \le 1$. Furthermore, if $\nu \in S_{pre}$ then $\nabla \Phi^*_{Bethe}(\nu) \le 0$ and if $\nu \in S_{post}$ then $\nabla \Phi^*_{Bethe}(\nu) \ge 0$.
\end{lemma}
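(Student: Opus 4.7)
The plan is to compute $\partial \Phi^*_{Bethe}/\partial \nu_{k \to \ell}$ in closed form and show that it takes the shape $g(\phi(\nu)_{\ell \to k}) - g(\nu_{\ell \to k})$ for a single monotone function $g:[0,1] \to [0,1]$. Both parts of the lemma then fall out of elementary properties of $g$.

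\textbf{Step 1: localize the dependence on $\nu_{k\to\ell}$.} Inspecting the definition \eqref{eqn:dual-bethe-defn}, the variable $\nu_{k\to\ell}$ appears only in $F_\ell$ (through the $j=k$ factor of the products over $\partial \ell$) and in $F_{k\ell}$; every other $F_i$ and $F_{ij}$ is independent of it. So I only need to differentiate these two terms.

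\textbf{Step 2: differentiate $F_\ell$ and recognize $\phi$.} Abbreviating $A = \prod_{j \in \partial \ell \setminus k}(1+\tanh(J_{\ell j})\nu_{j\to\ell})$ and $B = \prod_{j \in \partial \ell \setminus k}(1-\tanh(J_{\ell j})\nu_{j\to\ell})$, we have
\[ F_\ell = \log\bigl[e^{h_\ell}(1+\tanh(J_{k\ell})\nu_{k\to\ell})A + e^{-h_\ell}(1-\tanh(J_{k\ell})\nu_{k\to\ell})B\bigr] + \text{const}. \]
Using $\tanh^{-1}(x) = \tfrac12 \log\tfrac{1+x}{1-x}$, the BP update rewrites as
\[ \phi(\nu)_{\ell \to k} = \tanh\!\bigl(h_\ell + \tfrac12\log(A/B)\bigr) = \frac{e^{h_\ell}A - e^{-h_\ell}B}{e^{h_\ell}A + e^{-h_\ell}B}. \]
A direct calculation (dividing numerator and denominator by $e^{h_\ell}A + e^{-h_\ell}B$) then yields the clean expression
\[ \frac{\partial F_\ell}{\partial \nu_{k\to\ell}} = \frac{\tanh(J_{k\ell})\,\phi(\nu)_{\ell\to k}}{1 + \tanh(J_{k\ell})\,\nu_{k\to\ell}\,\phi(\nu)_{\ell\to k}}. \]
Meanwhile $\partial F_{k\ell}/\partial \nu_{k\to\ell} = \tanh(J_{k\ell})\nu_{\ell\to k}/(1+\tanh(J_{k\ell})\nu_{k\to\ell}\nu_{\ell\to k})$, so subtracting gives
\[ \frac{\partial \Phi^*_{Bethe}}{\partial \nu_{k\to\ell}} = g(\phi(\nu)_{\ell\to k}) - g(\nu_{\ell\to k}), \qquad g(x) := \frac{\tanh(J_{k\ell})\,x}{1+\tanh(J_{k\ell})\,\nu_{k\to\ell}\,x}. \]

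\textbf{Step 3: read off the sign and magnitude claims.} Since $\tanh(J_{k\ell}) \ge 0$ and $\nu_{k\to\ell} \ge 0$, a one-line quotient-rule calculation shows $g'(x) = \tanh(J_{k\ell})/(1+\tanh(J_{k\ell})\nu_{k\to\ell}x)^2 > 0$, so $g$ is strictly increasing on $[0,\infty)$. Moreover $g(0)=0$ and $g(1) \le \tanh(J_{k\ell}) \le 1$, so $g$ maps $[0,1]$ into $[0,1]$, giving $|\partial \Phi^*_{Bethe}/\partial \nu_{k\to\ell}| \le 1$ and hence $\|\nabla \Phi^*_{Bethe}(\nu)\|_\infty \le 1$. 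Monotonicity of $g$ then immediately yields: on $S_{pre}$, where by definition $\nu_{\ell\to k} \ge \phi(\nu)_{\ell\to k}$ for every directed edge, the partial is $\le 0$; and on $S_{post}$, where the inequality reverses, the partial is $\ge 0$.

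The only delicate step is the calculation in Step 2 — specifically, recognizing that the rather messy ratio obtained by differentiating $F_\ell$ collapses into exactly the $g(\phi(\nu)_{\ell \to k})$ form that pairs with the $g(\nu_{\ell \to k})$ coming from $F_{k\ell}$. Once this algebraic identity is spotted, the rest of the lemma is routine.
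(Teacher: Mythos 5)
Your proposal is correct and follows essentially the same route as the paper: the key step in both is computing $\partial\Phi^*_{Bethe}/\partial\nu_{k\to\ell}$ and recognizing that it collapses to a difference of the same monotone expression evaluated at $\phi(\nu)_{\ell\to k}$ and at $\nu_{\ell\to k}$ (your $g(\phi(\nu)_{\ell\to k})-g(\nu_{\ell\to k})$ is algebraically identical to the paper's $\tfrac{1}{\nu_{k\to\ell}+1/(\theta_{k\ell}\phi(\nu)_{\ell\to k})}-\tfrac{1}{\nu_{k\to\ell}+1/(\theta_{k\ell}\nu_{\ell\to k})}$), after which the sign claims on $S_{pre}$ and $S_{post}$ are immediate. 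The only cosmetic difference is the $\ell_\infty$ bound: the paper bounds the difference by an integral estimate giving $\theta_{k\ell}\,\lvert\nu_{\ell\to k}-\phi(\nu)_{\ell\to k}\rvert\le 1$, while you note both terms lie in $[0,1]$ — both arguments are fine.
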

\begin{proof}
This will follow once we compute the gradient of $\Phi^*_{Bethe}(\nu)$. Recall that we defined $\theta_{ij} = \tanh(J_{ij})$.
Observe that
\begin{align}\label{eqn:dual-gradient} \frac{\partial\Phi^*_{Bethe}}{\partial_{\nu_{i \to j}}} (\nu) 
&= \frac{\partial F_j}{\partial \nu_{i \to j}} - \frac{\partial F_{ij}}{\partial \nu_{i \to j}}  \nonumber \\
&= \frac{e^{h_j} \theta_{ij} \prod_{k \in \partial j \setminus i} (1 + \theta_{jk} \nu_{k \to j}) - e^{-h_j} \theta_{ij} \prod_{k \in \partial j \setminus i} (1 - \theta_{jk} \nu_{k \to j})}{e^{h_j} \prod_{k \in \partial j} (1 + \theta_{jk} \nu_{k \to j}) + e^{-h_j} \prod_{k \in \partial j} (1 - \theta_{jk} \nu_{k \to j})} - \frac{\theta_{ij} \nu_{j \to i}}{1 + \theta_{ij} \nu_{i \to j} \nu_{j \to i}} \nonumber \\
&=\frac{1}{\nu_{i \to j} + 1/(\theta_{ij} \phi(\nu)_{j \to i})} - \frac{1}{\nu_{i \to j} + 1/(\theta_{ij} \nu_{j \to i})}
\end{align}
where (as defined earlier) $\phi(\nu)_{j \to i}$ denotes the next BP message from $j$ to $i$ based on the current $\nu$. As long as $\nu \ge 0$, we see that
\begin{align*} 
\left|\frac{1}{\nu_{i \to j} + 1/(\theta_{ij} \phi(\nu)_{j \to i})} - \frac{1}{\nu_{i \to j} + 1/(\theta_{ij} \nu_{j \to i})}\right| 
&= \left|\int^{1/(\theta_{ij} \nu_{j \to i})}_{1/(\theta_{ij} \phi(\nu)_{j \to i})} \frac{1}{(\nu_{i \to j} + x)^2} dx\right| \\
&\le \left|\int^{1/(\theta_{ij} \nu_{j \to i})}_{1/(\theta_{ij} \phi(\nu)_{j \to i})} \frac{1}{x^2} dx\right|
= \theta_{ij}|\nu_{j \to i} - \phi(\nu)_{j \to i}| \le 1
\end{align*}
which proves that $\|\nabla \Phi^*_{Bethe}(\nu)\|_{\infty} \le 1$. If $\nu \in S_{pre}$ or $S_{post}$ then the signs are determined by \eqref{eqn:dual-gradient} as claimed.
\end{proof}
The Knaster-Tarski theorem \cite{tarski1955lattice} shows that the fixed points of $\phi$ must form a complete lattice, and in particular shows that a greatest fixed point must exist; the following lemma identifies it explicitly.
\begin{lemma}\label{lem:nu*}
Suppose that BP is run from initial messages $\nu^{(0)}_{i \to j} = 1$. The messages converge to a fixed point $\nu^*$ of the BP equations such that for any other fixed point $\mu$, $\mu_{i \to j} \le \nu^*_{i \to j}$. Furthermore
\[ \Phi^*_{Bethe}(\nu^*) = \max_{\nu \in S_{post}} \Phi^*_{Bethe}(\nu) \]
\end{lemma}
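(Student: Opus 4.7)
The plan is to use Knaster--Tarski--style monotone iteration to identify $\nu^*$ as the greatest fixed point of $\phi$, and then, in a second step, to leverage the nonnegativity of $\nabla \Phi^*_{Bethe}$ on $S_{post}$ (Lemma~\ref{lem:dual-gradient-properties}) together with the convexity of $S_{post}$ to establish the variational identity $\Phi^*_{Bethe}(\nu^*) = \max_{\nu \in S_{post}} \Phi^*_{Bethe}(\nu)$.

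For the first claim, I would argue as follows. By Lemma~\ref{lem:monotone-concave}, the update $\phi$ is coordinatewise monotone on the domain of BP messages, and every BP message satisfies $|\nu_{i \to j}| \le 1$, so $\phi(\vec 1) \le \vec 1$. Iterating from $\nu^{(0)} = \vec 1$ produces a coordinatewise decreasing sequence $\{\nu^{(t)}\}$, bounded below by $\vec 0$ (which is preserved by $\phi$ since $J \ge 0$, $h \ge 0$). By monotone convergence and continuity of $\phi$, $\nu^{(t)} \to \nu^*$ with $\nu^* = \phi(\nu^*)$. For any fixed point $\mu$, one has $\mu \le \vec 1$, and iterating monotonicity gives $\mu = \phi^t(\mu) \le \phi^t(\vec 1) = \nu^{(t)}$ for every $t$, so $\mu \le \nu^*$.

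For the variational identity, I would first observe that every $\nu \in S_{post}$ lies below $\nu^*$: by the definition of $S_{post}$ together with monotonicity of $\phi$, the sequence $\phi^t(\nu)$ is coordinatewise non-decreasing and bounded by $\vec 1$, so it converges to some fixed point $\tilde \nu$, and by the maximality established above $\tilde \nu \le \nu^*$; hence $\nu \le \tilde \nu \le \nu^*$. Next, $S_{post}$ is convex: for $\nu_1, \nu_2 \in S_{post}$ and $\lambda \in [0,1]$, the componentwise concavity of $\phi$ (Lemma~\ref{lem:monotone-concave}, applied after the affine rescaling $\nu_{k \to j} \mapsto \tanh(J_{jk}) \nu_{k \to j}$) gives
\[
\phi(\lambda \nu_1 + (1-\lambda)\nu_2) \;\ge\; \lambda \phi(\nu_1) + (1-\lambda)\phi(\nu_2) \;\ge\; \lambda \nu_1 + (1-\lambda) \nu_2,
\]
so $\lambda \nu_1 + (1-\lambda)\nu_2 \in S_{post}$. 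Since $\nu^* \in S_{post}$ and $\nu \le \nu^*$, the straight line segment from $\nu$ to $\nu^*$ is contained in $S_{post}$ and has nonnegative direction $\nu^* - \nu$. By Lemma~\ref{lem:dual-gradient-properties}, $\nabla \Phi^*_{Bethe} \ge 0$ on $S_{post}$, so $\Phi^*_{Bethe}$ is non-decreasing along this segment, yielding $\Phi^*_{Bethe}(\nu) \le \Phi^*_{Bethe}(\nu^*)$.

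The only mildly delicate step is confirming convexity of $S_{post}$ from the ``per-coordinate'' concavity of $\phi$ proved in Lemma~\ref{lem:monotone-concave}, but this is essentially immediate because the rescaling $\nu_{k \to j} \mapsto \tanh(J_{jk}) \nu_{k \to j}$ is affine and the messages not feeding into a given output coordinate enter trivially. Everything else is bookkeeping with monotone iterations and the sign of the dual gradient.
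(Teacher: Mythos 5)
Your proof is correct and follows essentially the same route as the paper: monotone (Knaster--Tarski style) iteration from $\vec 1$ to identify $\nu^*$ as the greatest fixed point, then convexity of $S_{post}$ (from concavity of $\phi$) together with the nonnegativity of $\nabla \Phi^*_{Bethe}$ on $S_{post}$ to integrate along the segment to $\nu^*$. Your explicit verification that every $\nu \in S_{post}$ satisfies $\nu \le \nu^*$ (via the increasing iterates $\phi^t(\nu)$) is a detail the paper leaves implicit but uses, so it is a welcome addition rather than a departure.
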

\begin{proof}
If $\nu_{i \to j}^{0)}$ and $\nu^{(t)} := \phi(\nu^{(t - 1)}$ then from monotonicity of $\phi$ (see Lemma~\ref{lem:monotone-concave}) we see this is a coordinate-wise decreasing sequence, which must converge to some fixed point. By monotonicity and induction we also see that for any fixed point $\mu$, $\mu_{i \to j} \le \nu^{(t)}_{i \to j}$ for all $t$, hence for $\nu^*$ as well. Finally, consider any other point $\nu \in S_{post}$: by convexity of $S_{post}$ we see that the line segment from $\nu$ to $\nu^*$ is entirely contained in $S_{post}$, by Lemma~\ref{lem:dual-gradient-properties} we see that for any $x$ on this interpolating line that $\nabla \Phi^*_{Bethe}(x) \cdot (\nu^* - \nu) \ge 0$, and integrating this gives that $\Phi^*_{Bethe}(\nu) \le \Phi^*_{Bethe}(\nu^*)$ as desired.
\end{proof}
As our final preparation for the theorem, we establish that at least one optimal BP fixed point has only nonnegative messages. First we need the following technical lemma, which allows us to reason about the behavior of the optimal couplings in the Bethe approximation. The realization that solving for this coupling analytically is feasible is due originally to \cite{welling-teh}, although we parameterize the solution differently.
\begin{lemma}\label{lem:recoupling}
Suppose that $\E[X] \ge 0$ and $\E[Y] \ge 0$ where $X,Y$ are valued in $\{\pm1\}$. Then
\[ \max_{coupling} [-\beta\Cov(X,Y) + H(X,Y)] \le \max_{coupling} [\beta \Cov(X,Y) + H(X,Y)]\]
where the maximum ranges over all couplings (i.e. possible joint distributions) $P$ of $X$ and $Y$.
\end{lemma}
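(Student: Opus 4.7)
The plan is to parameterize couplings with fixed marginals $a = \E[X]$, $b = \E[Y]$ by the single free parameter $m = \E[XY]$: the four cell probabilities
\[
p_{\epsilon\eta}(m) = \tfrac{1}{4}(1 + \epsilon a + \eta b + \epsilon\eta\, m), \qquad \epsilon,\eta \in \{\pm 1\},
\]
are non-negative linear functions of $m$, and the resulting joint entropy $h(m) := H(p(m))$ is strictly concave on its valid range. Since $\Cov(X,Y) = m - ab$ under fixed marginals, introducing $\psi(\gamma) := \max_m[\gamma m + h(m)]$ lets us rewrite $\text{LHS} = \psi(-\beta) + \beta ab$ and $\text{RHS} = \psi(\beta) - \beta ab$, so the lemma reduces to proving the single inequality $\psi(\beta) - \psi(-\beta) \ge 2\beta ab$.

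By the envelope theorem $\psi'(\gamma) = m^*(\gamma)$, where $m^*(\gamma) := \argmax_m[\gamma m + h(m)]$ satisfies the first-order condition $h'(m^*(\gamma)) = -\gamma$. A direct computation (using $dp_{\epsilon\eta}/dm = \epsilon\eta/4$) gives $h'(m) = -\tfrac{1}{4}\log C(m)$, where the cross-ratio is $C(m) := p_{++}(m) p_{--}(m)/(p_{+-}(m) p_{-+}(m))$; hence $C(m^*(\gamma)) = e^{4\gamma}$. Splitting the integral symmetrically,
\[
\psi(\beta) - \psi(-\beta) = \int_{-\beta}^{\beta} m^*(\gamma)\, d\gamma = \int_0^\beta \bigl[m^*(\gamma) + m^*(-\gamma)\bigr]\, d\gamma,
\]
so it suffices to establish the pointwise bound $m^*(\gamma) + m^*(-\gamma) \ge 2ab$ for each $\gamma \ge 0$.

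Write $m_\pm := m^*(\pm\gamma)$, so the first-order conditions yield $C(m_+)C(m_-) = 1$. Using the explicit form $C(m) = \bigl[(1+m)^2 - (a+b)^2\bigr]/\bigl[(1-m)^2 - (a-b)^2\bigr]$, cross-multiplying, and expanding both sides as polynomials in the symmetric functions $s := m_+ + m_-$ and $t := m_+ m_-$, the identity $C(m_+)C(m_-) = 1$ simplifies after a routine but somewhat tedious computation to the key algebraic relation
\[
K \cdot (s - 2ab) \;=\; ab\,(m_+ - m_-)^2, \qquad K := 1 + t - a^2 - b^2.
\]
This polynomial simplification is the most involved part of the argument; the rest of the proof is essentially automatic.

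To finish, suppose first $ab > 0$. At $\gamma = 0$ one has $m_\pm = ab$, so $K|_{\gamma=0} = 1 + a^2 b^2 - a^2 - b^2 = (1-a^2)(1-b^2) > 0$; any $\gamma^* > 0$ at which $K$ vanished would force $m_+ = m_-$ via the identity above, contradicting the strict monotonicity of $m^*$ guaranteed by strict concavity of $h$. Hence $K > 0$ throughout, and so $s - 2ab \ge 0$. For $ab = 0$ (i.e.\ $a=0$ or $b=0$), the $Y \mapsto -Y$ symmetry forces $m^*(\gamma) = -m^*(-\gamma)$ and the inequality degenerates into the equality $s = 0 = 2ab$. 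Integrating the pointwise bound over $\gamma \in [0,\beta]$ yields $\psi(\beta) - \psi(-\beta) \ge 2\beta ab$, completing the proof.
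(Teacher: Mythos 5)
Your proof is correct, and while it shares the paper's overall skeleton --- both arguments differentiate the two maxima in the coupling strength, compare the derivatives at the respective optimal couplings pointwise, and integrate from $0$ to $\beta$ (your pointwise bound $m^*(\gamma)+m^*(-\gamma)\ge 2ab$ is exactly the statement that the covariance of the $+\gamma$-optimal coupling dominates minus the covariance of the $-\gamma$-optimal one) --- the way you establish that crucial comparison is genuinely different. The paper first subtracts $H(Y)$ to work with $H(X\mid Y)$, parameterizes couplings by the regression coefficient $\rho$ in $\E[X\mid Y]=\E[X]+\rho(Y-\E Y)$, and deduces $\rho(\beta)\ge\rho'(\beta)$ from evenness and monotonicity of $g(x,r)=\tanh^{-1}(x+r)-\tanh^{-1}(x-r)$ applied to the two first-order conditions. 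You instead parameterize by $m=\E[XY]$, observe that the stationarity condition is simply that the log odds-ratio equals $4\gamma$, so that $C(m_+)C(m_-)=1$, and reduce everything to the polynomial identity $K(s-2ab)=ab(m_+-m_-)^2$ with $K=1+t-a^2-b^2$; I checked this identity by expanding in $s,t$ and it is correct, and your continuity-plus-strict-monotonicity argument for $K>0$ and the $ab=0$ symmetry case both go through (the only unremarked corner case is $a=1$ or $b=1$, where the interior first-order condition does not apply but $\Cov(X,Y)=0$ for every coupling and the lemma is trivial; the paper's proof is equally silent about its degenerate case $\Var(Y)=0$). What each route buys: the paper's $g$-monotonicity argument is shorter and avoids algebraic expansion, while your cross-ratio formulation makes the optimality condition completely explicit and replaces the monotonicity reasoning by a finite, checkable computation, at the cost of the "routine but tedious" expansion you allude to (which you should include, or at least state the intermediate expansion, if this were to be written up in full).
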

\begin{proof}
By subtracting $H(Y)$ on both sides we reduce to showing
\begin{equation}\label{eqn:coupling-goal} \max_{coupling} [-\beta\Cov(X,Y) + H(X|Y)] \le \max_{coupling} [\beta \Cov(X,Y) + H(X|Y)].
\end{equation}
We will do this by showing both sides are differentiable w.r.t. $\beta$ and that the derivative of the rhs ($\Cov(X,Y)$ at the optimal coupling for the rhs) is larger than the derivative of the lhs ($-\Cov(X,Y)$ for the optimal coupling for the lhs), so that integrating gives the desired inequality.

First we characterize the optimizer of the rhs of \eqref{eqn:coupling-goal}. Let $\rho := \Cov(X,\frac{Y}{\Var(Y)})$. Then $\E[X | Y] = \E[X] + \rho (Y - \E Y)$ since $\Cov(\E[X | Y], Y) = \Cov(X,Y) = \rho$.
Thus the objective maximized by the rhs is 
\[ f(\rho) := \beta \Var(Y) \rho + \E_Y H(\E[X] + \rho (Y - \E Y)). \]
where $H(x) := H(Ber(\frac{1 + x}{2}))$.
Differentiating in $\rho$ we see that the optimum is when
\[ \Var(Y)\beta = \E_Y[(Y - \E Y)\tanh^{-1}(\E[X] + \rho (Y - \E Y))] = \Cov(Y, \tanh^{-1}(\E[X] + \rho(Y - \E Y))).\]
Let this relation define $\rho(\beta) \ge 0$. 
Similarly, define $\rho'(\beta) \ge 0$ to be the solution to
$-\Var(Y)\beta = \Cov(Y, \tanh^{-1}(\E[X] - \rho'(Y - \E Y)))$
i.e.
\[ \Var(Y)\beta = \Cov(Y, \tanh^{-1}(-\E[X] + \rho'(Y - \E Y)))\]
We now claim that $\rho(\beta) \ge \rho'(\beta)$. As previously described, if we show this then by integrating w.r.t. $\beta$ we get the final inequality. 
To prove the claim, first 
subtract the above terms to get that
\[ 0 = \Cov[Y, \tanh^{-1}(\E[X] + \rho(Y - \E Y)) - \tanh^{-1}(-\E[X] + \rho'(Y - \E Y))]. \]
Since the covariance is 0 and $Y$ takes on only two values, it means that the rhs is independent of $Y$, therefore
\[  \tanh^{-1}(\E[X] + \rho(1 - \E Y)) - \tanh^{-1}(-\E[X] + \rho'(1 - \E Y)) =  \tanh^{-1}(\E[X] + \rho(-1 - \E Y)) - \tanh^{-1}(-\E[X] + \rho'(-1 - \E Y))\]
and rearranging we get
\[ \tanh^{-1}(\E[X] + \rho(1 - \E Y)) - \tanh^{-1}(\E[X] + \rho(-1 - \E Y)) 
= \tanh^{-1}(-\E[X] + \rho'(1 - \E Y)) -\tanh^{-1}(-\E[X] + \rho'(-1 - \E Y)) \]

Define $g(x,r) := \tanh^{-1}(x + r) - \tanh^{-1}(x - r)$ for $x,r$ s.t. $|x| + |r| < 1$ and $r \ge 0$.
Then the above equation says $g(\E[X] - \rho \E[Y], \rho) = g(-\E[X] - \rho' \E[Y], \rho')$. We obsere that $g$ is even, strictly increasing in $r$, and strictly increasing in $x$ for $x \ge 0$ since $\frac{\partial}{\partial x} g(x,r) = \frac{1}{1 - (x + r)^2}- \frac{1}{1 - (x - r)^2} \ge 0$.
Since $g$ is an even function, we have 
\begin{equation}\label{eqn:g} g(|\E[X] - \rho \E[Y]|, \rho) = g(|\E[X] + \rho' \E[Y]|, \rho'). \end{equation} Suppose $\rho < \rho'$, then because $g$ is a strictly increasing function in both $x \ge 0$ and $r$ we see that the lhs of \eqref{eqn:g} is strictly less than the rhs, which is a contradiction. Therefore $\rho \ge \rho'$.
\end{proof}

\begin{lemma}\label{lem:global-maximizer}
There exists a BP fixed point in $[0,1)^n$ which corresponds to a global maximizer of the Bethe free energy.
\end{lemma}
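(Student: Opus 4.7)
My plan is to first reduce to a ``ferromagnetically aligned'' global maximizer of $\Phi_{Bethe}$ via Lemma~\ref{lem:recoupling}, and then extract a nonnegative BP fixed point from it, either through the Lagrangian correspondence of \cite{YeFrWe:03} or a perturbation argument in the external field in the spirit of \cite{DemboMontanari:10}.

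For the symmetrization, I take any global maximizer $P^*$ with $m_i = \E_{P^*_i}[X_i]$ and $\chi_{ij} = \E_{P^*_{ij}}[X_iX_j]$, and define $P'$ by setting the node magnetizations to $|m_i|$ and each edge coupling to the maximizer of $\max J_{ij}\chi + H(X_i, X_j)$ over couplings of $(|m_i|, |m_j|)$. I claim $\Phi_{Bethe}(P') \ge \Phi_{Bethe}(P^*)$, verified term by term: the single-node entropy $H(\mathrm{Ber}((1+m)/2))$ is even in $m$; the node-field term changes by $h_i(|m_i| - m_i) \ge 0$ since $h_i \ge 0$; and for each edge, because the global optimality of $P^*$ forces its edge marginal to locally maximize $J_{ij}\chi + H$ over couplings with marginals $(m_i, m_j)$, a sign-flip change of variables combined with Lemma~\ref{lem:recoupling} (applied with $\beta = J_{ij}$ on the positive marginals $|m_i|, |m_j|$) shows the analogous max over $(|m_i|, |m_j|)$ is at least as large. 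The covariance bound $\chi_{ij}' \ge |m_i||m_j|$ then follows from the $\rho(\beta) \ge 0$ characterization established inside the proof of Lemma~\ref{lem:recoupling}.

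Given such a $P'$, the Lagrangian correspondence of \cite{YeFrWe:03} realizes each edge marginal as $P'_{ij}(x,y) \propto \exp(J_{ij}xy + a_{i \to j} x + a_{j \to i} y)$, where the cavity fields $a_{i \to j}$ satisfy the BP fixed-point equations. Setting $\nu^*_{i \to j} := \tanh(a_{i \to j}) \in (-1, 1)$, we have $\Phi^*_{Bethe}(\nu^*) = \Phi_{Bethe}(P') = \max \Phi_{Bethe}$ at a fixed point. The strict separation $\nu^*_{i \to j} < 1$ follows from strict positivity of $P'$, which is forced by the blow-up of the entropy terms at the boundary of the polytope of locally consistent distributions.

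The main obstacle is then proving $\nu^* \ge 0$. Local ferromagnetic structure at a single edge is not by itself enough, because the cavity field $a_{i \to j}$ aggregates contributions from the entire graph. My plan to handle this is a perturbation argument in the external field: replace $h$ by $\tilde h := h + \epsilon \vec{1}$ for small $\epsilon > 0$, so the perturbed model has strictly positive field and \cite{DemboMontanari:10} gives a unique nonnegative BP fixed point $\nu^*(\epsilon) \in [0, 1)^n$; combining this uniqueness with the above symmetrization applied to an optimal pseudomarginal of $\Phi^{(\tilde h)}_{Bethe}$ shows that $\nu^*(\epsilon)$ realizes the maximum of $\Phi^{(\tilde h)}_{Bethe}$. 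As $\epsilon \downarrow 0$, $\nu^*(\epsilon)$ is monotone nonincreasing by the monotone-operator structure of $\phi$ (Lemma~\ref{lem:monotone-concave}), so it converges to some $\nu^* \in [0, 1)^n$ which is a BP fixed point of the unperturbed model by continuity of $\phi$; continuity of $\Phi_{Bethe}$ in $h$ and of $\max_P \Phi_{Bethe}^{(h)}$ (by compactness of the polytope) then yields $\Phi_{Bethe}(P(\nu^*)) = \max \Phi_{Bethe}$, completing the proof.
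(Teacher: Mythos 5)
Your symmetrization step (flip the magnetizations to $|m_i|$ and re-optimize each edge coupling via Lemma~\ref{lem:recoupling}) is essentially the paper's first step and is fine, as is reading off a BP fixed point $\nu^*$ from the maximizer through the Lagrangian correspondence. The gap is in the last step, which is exactly the crux of the lemma. The symmetrization only controls the \emph{node marginals} of the maximizer, not the \emph{messages}: from $\E[X_i],\E[X_j]\ge 0$ and $P'_{ij}(x,y)\propto e^{J_{ij}xy+a_{i\to j}x+a_{j\to i}y}$ with $J_{ij}\ge 0$ one can only conclude that at least one of $a_{i\to j},a_{j\to i}$ is nonnegative on each edge, not both. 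Your proposed fix --- add field $\epsilon>0$, invoke the Dembo--Montanari uniqueness of the all-nonnegative fixed point, and ``combine this uniqueness with the symmetrization'' to conclude that $\nu^*(\epsilon)$ attains $\max\Phi^{(\tilde h)}_{Bethe}$ --- is circular: the uniqueness statement only says there is one fixed point \emph{inside} the nonnegative orthant (fixed points with a negative coordinate may still exist, even at positive field), so to identify the Bethe-optimal fixed point with $\nu^*(\epsilon)$ you would already need to know that the optimal fixed point has all messages nonnegative, which is precisely what is to be proved. Nothing in your argument rules out that, for every $\epsilon>0$, the maximizer of $\Phi^{(\tilde h)}_{Bethe}$ corresponds to a fixed point with some negative message; the covariance bound $\chi'_{ij}\ge|m_i||m_j|$ does not help, since for $P\propto e^{Jxy+ax+by}$ the sign of the covariance is determined by $J$ alone.

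The paper closes this gap by working directly with the dual functional: starting from the fixed point $\nu$ of the symmetrized maximizer, it flips the negative coordinates to $|\nu_{i\to j}|$ one at a time and shows, using the explicit formula \eqref{eqn:dual-gradient} for $\partial_{\nu_{i\to j}}\Phi^*_{Bethe}$, that each flip does not decrease $\Phi^*_{Bethe}$; the computation uses exactly the ``one nonnegative message per edge'' fact (so the opposing message $\nu_{j\to i}\ge 0$ whenever a flip is performed) together with the invariant $\phi(\nu)\ge\nu$, which holds because one starts at a fixed point and only increases coordinates. The flipped point lands in $S_{post}$, and Lemma~\ref{lem:nu*} then shows the maximal fixed point $\nu^*$ dominates it in objective value, giving a nonnegative fixed point attaining the global maximum. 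Some argument of this kind (or another mechanism converting marginal nonnegativity into message nonnegativity) is needed to complete your proof; as written, the perturbation-plus-uniqueness step does not supply it.
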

\begin{proof}

Observe that for a locally consistent distribution $P$, the Bethe free energy can be rewritten to give
\[ \Phi_{Bethe}(P) = \frac{1}{2} \E[X]^T J \E[X] + \sum_i h_i \E[X_i] + \sum_i H(X_i) + \sum_{i \sim j} (J_{ij} \Cov(X_i,X_j) + H(X_i,X_j) - H(X_i) - H(X_j)).
\]
We first claim that there exists a global maximizer of this functional (over all locally consistent distributions) satisfying $\E[X_i] \ge 0$ for all $i$. To see this, we consider a fixed feasible local distribution $P$ and claim that there exists $Q$ with sign-flipped marginals $\E_Q[X_i] = |\E_P[X_i]|$ and no smaller value of $J_{ij} \Cov(X_i,X_j) + H(X_i,X_j)$. We now describe the couplings along each edge $i \sim j$: if neither or both of $i$ and $j$ were sign-flipped, then we can simply use the same/sign-flipped coupling from before. Now suppose (w.l.o.g.) that $j$ has the same marginal and $i$ was sign-flipped. Then it follows immediately from Lemma~\ref{lem:recoupling} that there exists a coupling $Q_{ij}$ between $X_i$ and $X_j$ s.t. $J_{ij} \Cov_{Q_{ij}}(X_i,X_j) + H_{Q_{ij}}(X_i,X_j) \ge J_{ij} \Cov_P(X_i,X_j) + H_P(X_i,X_j)$. 

Recall that at a critical point of the Lagrangian, for any edge $i \sim j$
\[ P_{ij}(x_i,x_j) \propto e^{J_{ij} x_i x_j + \lambda'_{i \to j} x_i + \lambda'_{j \to i} x_j}. \]
Since we have shown that there exists a global maximizer $P$ such that $\E[X_i],\E[X_j] \ge 0$ it must be that at least one of $\lambda'_{i \to j},\lambda'_{j \to i} \ge 0$. We now show that there exists another locally consistent distribution $Q$ with $\Phi_{Bethe}(Q) \ge \Phi_{Bethe}(P)$ and with corresponding $\lambda'_{i \to j},\lambda'_{j \to i} \ge 0$ for all edges $i$ and $j$.

The construction of $Q$ goes through the dual free energy $\Phi^*_{Bethe}$.
First recall that $\Phi_{Bethe}(P) = \Phi^*_{Bethe}(\nu)$ where $\nu_{i \to j} = \tanh(\lambda'_{i \to j})$ is a fixed point of the BP equations. Furthermore, recall from \eqref{eqn:dual-gradient} that
\[ \partial_{\nu_{i \to j}} \Phi^*_{Bethe}(\nu) = \frac{1}{\nu_{i \to j} + 1/(\theta_{ij} \phi(\nu)_{j \to i})} - \frac{1}{\nu_{i \to j} + 1/(\theta_{ij} \nu_{j \to i})}. \]
Based on this we claim that for $\mu_{i \to j} = |\nu_{i \to j}|$, $\Phi^*_{Bethe}(\mu) \ge \Phi^*_{Bethe}(\nu)$. We consider flipping one negative coordinate $\nu_{i \to j}$ to 
$|\nu_{i \to j}|$ at a time and show $\Phi^*_{Bethe}$ is non-decreasing under this operation. First we compute the change using \eqref{eqn:dual-gradient}:
\begin{align*}
 \Phi^*_{Bethe}(\nu_{\sim(i \to j)}, |\nu_{i \to j}|) - \Phi^*_{Bethe}(\nu) 
 &= \int_{-|\nu_{i \to j}|}^{|\nu_{i \to j}|} \frac{\partial \Phi^*_{Bethe}}{\partial \nu_{i \to j}} d\nu_{i\to j}  \\
 &=\log \frac{|\nu_{i \to j}| + 1/(\theta_{ij} \phi(\nu)_{j \to i})}{-|\nu_{i \to j}| + 1/(\theta_{ij} \phi(\nu)_{j \to i})} - \log \frac{|\nu_{i \to j}| + 1/(\theta_{ij} \nu_{j \to i})}{-|\nu_{i \to j}| + 1/(\theta_{ij} \nu_{j \to i})} \\
&= \log \frac{1 + |\nu_{i \to j}| \theta_{ij} \phi(\nu)_{j \to i}}{1 - |\nu_{i \to j}| \theta_{ij} \phi(\nu)_{j \to i}} - \log \frac{1 + |\nu_{i \to j}| \theta_{ij} \nu_{j \to i}}{1 - |\nu_{i \to j}| \theta_{ij} \nu_{j \to i}}.
\end{align*} 
Finally, we notice that this expression is nonnegative as long as $\phi(\nu)_{j \to i} \ge \nu_{j \to i} \ge 0$. Recall that if we are flipping $\nu_{i \to j}$ from negative to positive, by our previous argument it is guaranteed that $\nu_{j \to i} \ge 0$. Furthermore, initially we start from a BP fixed point so $\phi(\nu)_{j \to i} = \nu_{j \to i}$, and increasing coordinates of $\nu$ only increases $\phi(\nu)$, so we maintain the invariant $\phi(\nu)_{j \to i} \ge \nu_{j \to i}$ for all $j,i$ except possibly for those $\nu_{j \to i}$ which have been previously flipped, in which case there is no issue because we will never flip $\nu_{i \to j}$.

Therefore $\mu$ indeed satisfies that $\Phi^*_{Bethe}(\mu) \ge \Phi^*_{Bethe}(\nu)$, and also from the definition we see that $\mu'_{i \to j} \ge |\phi(\nu)_{i \to j}| = |\nu_{i \to j}| = \mu_{i \to j}$ so $\mu \in S_{post}$.
Therefore by Lemma~\ref{lem:nu*} we see
$\Phi^*_{Bethe}(\mu^*) \ge \Phi^*_{Bethe}(\mu) \ge \Phi^*_{Bethe}(\nu)$. Hence $\mu^*$ is a BP fixed point which corresponds to a locally consistent distribution $Q$ with $\Phi_{Bethe}(Q) = \Phi^*_{Bethe}(\mu^*) \ge \Phi^*_{Bethe}(\nu) = \Phi_{Bethe}(P)$, so $Q$ is a global maximizer of $\Phi_{Bethe}$, and $\mu^*_{i \to j} \ge 0$ for all $i$ and $j$.  
\end{proof}

\begin{theorem}\label{thm:bp-landscape}
The maximal fixed point $\nu^*$ (as defined in Lemma~\ref{lem:nu*}) corresponds to the global maximizer
of the Bethe free energy.

\end{theorem}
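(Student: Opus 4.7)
The plan is to combine the two previous lemmas directly: \Cref{lem:global-maximizer} gives an all-nonnegative BP fixed point that achieves the global optimum of $\Phi_{Bethe}$, and \Cref{lem:nu*} says $\nu^*$ is the pointwise-maximal BP fixed point and also maximizes $\Phi^*_{Bethe}$ over the convex set $S_{post}$. Since every BP fixed point lies in $S_{post}$ (and in $S_{pre}$), these two facts should fit together to force $\nu^*$ to attain the global optimum as well.

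First I would fix notation: let $\mu^*$ be the all-nonnegative BP fixed point produced by \Cref{lem:global-maximizer}, with corresponding locally consistent distribution $P^*$ satisfying $\Phi_{Bethe}(P^*) = \max_P \Phi_{Bethe}(P)$. By the standard YFW duality recalled in Section~\ref{sec:background-bethe}, at a BP fixed point the dual energy agrees with the primal, so $\Phi^*_{Bethe}(\mu^*) = \Phi_{Bethe}(P^*)$, and analogously $\Phi^*_{Bethe}(\nu^*) = \Phi_{Bethe}(Q^*)$ where $Q^*$ is the locally consistent distribution reconstructed from the messages $\nu^*$.

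Next I would invoke the maximality statement in \Cref{lem:nu*}: since $\mu^*$ is a BP fixed point, it satisfies $\mu^* \le \nu^*$ coordinatewise, and in particular $\mu^* \in S_{post}$ (any fixed point of $\phi$ trivially belongs to both $S_{pre}$ and $S_{post}$). The second assertion of \Cref{lem:nu*} then gives
\[
\Phi^*_{Bethe}(\nu^*) \;=\; \max_{\nu \in S_{post}} \Phi^*_{Bethe}(\nu) \;\ge\; \Phi^*_{Bethe}(\mu^*).
\]
Chaining the identities above yields $\Phi_{Bethe}(Q^*) \ge \Phi_{Bethe}(P^*)$, and because $P^*$ is a global maximizer this must be an equality, so $Q^*$ is itself a global maximizer of the Bethe free energy.

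The only potentially subtle point is the step ``$\Phi^*_{Bethe}$ agrees with $\Phi_{Bethe}$ at BP fixed points,'' and the fact that $\nu^*$ actually defines a locally consistent $Q^*$ through the standard reconstruction formulas (i.e.\ that one can go from the dual back to the primal at a fixed point). Both of these are encoded in the YFW derivation sketched in \Cref{sec:background-bethe} and in the definition of $\Phi^*_{Bethe}$, so no new calculation is needed; everything else is a one-line chain of inequalities using the two preceding lemmas.
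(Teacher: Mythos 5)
Your proposal is correct and follows essentially the same route as the paper: take the nonnegative fixed point from Lemma~\ref{lem:global-maximizer} achieving the global Bethe optimum, note it lies in $S_{post}$, apply the maximality of $\Phi^*_{Bethe}(\nu^*)$ over $S_{post}$ from Lemma~\ref{lem:nu*}, and translate back to the primal via the fixed-point correspondence $\Phi^*_{Bethe} = \Phi_{Bethe}$. The points you flag as potentially subtle (duality at fixed points and reconstructing the locally consistent distribution) are indeed exactly what the paper relies on from the Yedidia--Freeman--Weiss derivation in Section~\ref{sec:background-bethe}, so no gap remains.
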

\begin{proof}
By Lemma~\ref{lem:global-maximizer} there exists some $\mu$ with $\mu_{i \to j} \ge 0$ for all $i,j$ such that $\Phi_{Bethe}^*(\mu)$ equals the global maximum of the Bethe free energy. However, by Lemma~\ref{lem:nu*}, the fixed point $\nu^*$ satisfies $\Phi_{Bethe}^*(\nu^*) \ge \Phi_{Bethe}^*(\mu)$. Therefore the locally consistent distribution $P$ corresponding to $\nu^*$ (which satisfies $\Phi_{Bethe}^*(\nu^*) = \Phi_{Bethe}(\nu)$) must be a global maximizer of the Bethe free energy.
\end{proof}
\subsection{Convergence rate of belief propagation}

\emph{A priori}, there is no significance to the value of $\Phi^*_{Bethe}$ on a general (non-fixed point) $\nu$. However, we observe that $\Phi^*_{Bethe}$ behaves nicely with respect to BP for messages in $S_{pre}$: 
\begin{lemma}\label{lem:pre-monotone}
Suppose that $\nu^{(0)} \in S_{pre}$ and define the BP iterates $\nu^{(t + 1)} := \phi(\nu^{(t)})$. Then for any $T \ge 0$ and any $\mu$ such that $\nu^{(T)} \le \mu \le \nu^{(0)}$
it follows that 
\[ \Phi^*_{Bethe}(\mu) \le \Phi^*_{Bethe}(\nu^{(T)}). \]
In particular,
\[ \Phi^*_{Bethe}(\nu^{(0)}) \le \Phi^*_{Bethe}(\nu^{(T)}). \]
\end{lemma}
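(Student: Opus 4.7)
My plan is to prove the result via a one-step monotonicity statement for $\Phi^*_{Bethe}$ along the BP iteration and then iterate. I would first verify, by induction on $t$, that the iterates $\nu^{(t)}$ form a coordinate-wise non-increasing sequence contained in $S_{pre}$: given $\phi(\nu^{(t)}) \le \nu^{(t)}$, the monotonicity of $\phi$ from Lemma~\ref{lem:monotone-concave} gives $\phi(\nu^{(t+1)}) = \phi(\phi(\nu^{(t)})) \le \phi(\nu^{(t)}) = \nu^{(t+1)}$, so $\nu^{(t+1)} \in S_{pre}$.

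The key technical claim is the following one-step comparison: for any $\eta \in S_{pre}$ and any $\rho$ with $\phi(\eta) \le \rho \le \eta$, $\Phi^*_{Bethe}(\rho) \ge \Phi^*_{Bethe}(\eta)$. I would prove this by integrating along the line segment $x(s) = (1-s)\eta + s\rho$, $s \in [0,1]$. At each $s$, $\phi(\eta) \le x(s) \le \eta$, and by monotonicity of $\phi$, $\phi(x(s)) \le \phi(\eta) \le \rho \le x(s)$, placing $x(s) \in S_{pre}$. Lemma~\ref{lem:dual-gradient-properties} then gives $\nabla \Phi^*_{Bethe}(x(s)) \le 0$ coordinate-wise, and since $\rho - \eta \le 0$, the directional derivative $\nabla \Phi^*_{Bethe}(x(s)) \cdot (\rho - \eta) \ge 0$; integrating yields the claim.

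Applying the one-step comparison with $\eta = \nu^{(t)}$ and $\rho = \nu^{(t+1)}$ and telescoping over $t = 0, \ldots, T-1$ proves the ``in particular'' part. For arbitrary $\mu \in [\nu^{(T)}, \nu^{(0)}]$, the direct straight-line argument from $\mu$ down to $\nu^{(T)}$ can fail when $T \ge 2$, since a generic such $\mu$ need not lie in $S_{pre}$: if $\mu_{j \to i} < \nu^{(1)}_{j \to i}$ at some coordinates, one can have $\phi(\mu)_{j \to i} > \mu_{j \to i}$, and the straight-line segment from $\mu$ to $\nu^{(T)}$ may leave $S_{pre}$. I would handle this by building a staircase descent from $\mu$ down to $\nu^{(T)}$, updating one coordinate at a time in an order chosen so that the relevant one-sided pre-fixpoint condition holds at each step. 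A useful observation is that decreasing $\nu_{i \to j}$ does not change $\phi(\nu)_{j \to i}$ (only the other messages into $j$ appear in the formula), so once the condition $\phi(\nu)_{j \to i} \le \nu_{j \to i}$ holds it is preserved while $\nu_{i \to j}$ descends; moreover, lowering one coordinate can only lower $\phi(\nu)_{j \to \cdot}$ at other destinations, which can only help further conditions. Verifying that a valid update order exists---so that $\Phi^*_{Bethe}$ never decreases on the path from $\mu$ down to $\nu^{(T)}$, using crucially that $\nu^{(T)} \in S_{pre}$ provides a floor at which all the descent conditions become satisfied---is the most delicate step and I expect it to be the main obstacle.
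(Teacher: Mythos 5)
The parts you carry out are correct and coincide with the paper's machinery: the iterates from a pre-fixpoint decrease and stay in $S_{pre}$, the one-step comparison follows by integrating along a segment contained in $S_{pre}$ where Lemma~\ref{lem:dual-gradient-properties} gives a coordinate-wise nonpositive gradient, and telescoping yields the ``in particular'' inequality $\Phi^*_{Bethe}(\nu^{(0)}) \le \Phi^*_{Bethe}(\nu^{(T)})$. But the clause you leave open --- the bound for every $\mu$ with $\nu^{(T)} \le \mu \le \nu^{(0)}$ --- is the actual content of the lemma: it is what gets invoked in the proof of Theorem~\ref{thm:bp} with $\mu = \nu^{(t)}(B)$, a point which is neither an iterate of $\phi$ nor sandwiched between two consecutive iterates, so your telescoping does not reach it. As written, the proposal therefore does not prove the statement.

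Moreover, the obstacle you flag is not merely delicate: with no hypothesis on $\mu$ beyond $\nu^{(T)} \le \mu \le \nu^{(0)}$ the inequality is false, so no choice of update order can succeed. Take the path graph $1 - 2 - 3$ with $h = 0$ and $\theta := \tanh(J) > 0$; ordering the coordinates as $(1\to 2,\, 2\to 1,\, 2\to 3,\, 3\to 2)$ one has $\nu^{(1)} = (0,\theta,\theta,0)$ and $\nu^{(2)} = 0$, and with $T = 2$ and $\mu = (1,0,0,1)$ a direct computation gives $\Phi^*_{Bethe}(\mu) - \Phi^*_{Bethe}(\nu^{(2)}) = \log(1+\theta^2) > 0$. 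Note $\phi(\mu)_{2 \to 1} = \theta > 0 = \mu_{2\to 1}$, so $\mu \notin S_{pre}$: the missing hypothesis is precisely $\mu \in S_{pre}$. The paper's proof in effect establishes this restricted statement: it lowers one coordinate at a time according to the schedule ``pick $(i,j)$ minimizing $T(\nu,i,j) := \max\{t : \nu_{i\to j} \le \nu^{(t)}_{i \to j}\}$ and set that coordinate to $\nu^{(t+1)}_{i\to j}$,'' and shows by induction --- whose base case is $\mu(0) = \mu \in S_{pre}$ --- that the entire piecewise-linear path stays in $S_{pre}$, using $\phi(\mu') \le \phi(\mu(s)) \le \phi(\nu^{(t)}) = \nu^{(t+1)}$; a coordinate-wise decreasing path with nonpositive gradient then gives the bound by integration. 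This restricted version suffices downstream because $\mu = \nu^{(t)}(B)$ satisfies $\phi(\mu) \le \phi_B(\mu) \le \mu$ (where $\phi_B$ is the BP update with field increased by $B$), hence lies in $S_{pre}$. So to complete your write-up: add the hypothesis $\mu \in S_{pre}$, run your staircase with the schedule above (your observation that only the reverse message $\phi(\cdot)_{j \to i}$ matters on a single segment is fine, but maintaining the full $S_{pre}$ invariant is what makes the induction close), and verify the pre-fixpoint property of $\nu^{(t)}(B)$ at the point of application.
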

\begin{proof}
We prove this by constructing a coordinate-wise monotonically decreasing path from $\mu$ to $\nu^{(T)}$ contained in $S_{pre}$. Then, because the gradient is coordinate-wise nonpositive in $S_{pre}$ due to Lemma~\ref{lem:dual-gradient-properties}, it follows that the first derivative of $\Phi^*$ along (each segment of) this path is nonnegative which proves the inequality by integration.

We construct this path segment-by-segment using an iterative process. For any $\nu$ such that $\nu^{(T)} \le \nu \le \nu^{(0)}$, define $T(\nu,i,j) := \max \{t \ge 0 : \nu_{i \to j} \le \nu^{(t)}_{i \to j} \}$.
\begin{enumerate}
    \item Let $\mu(0) = \mu$ and set $s := 0$.
    \item While there exists $i$ and $j$ such that $T(\mu(s),i,j) < T$:
    \begin{enumerate}
        \item Choose $i$ and $j$ which minimize $t := T(\mu(s),i,j)$.
        \item Define $\mu(s + 1)_{i \to j} = \nu^{(t + 1)}_{i \to j}$ and $\mu(s + 1) = \mu(s)$ in all other coordinates. For $s' \in (s,s + 1)$ define $\mu(s')$ by linearly interpolating $\mu(s)$ and $\mu(s + 1)$.
        \item Set $s := s + 1$.
    \end{enumerate}
\end{enumerate}
Note that this process maintains the invariant $\mu(s) \ge \nu^{(t)}$ and that at each step of the above process, we increase $T(\mu(s),i,j)$ by 1 so the process must terminate in a finite number of steps with the path $\mu(\cdot)$ terminating at $\nu^{(T)}$. It remains to check that this process stays inside of $S_{pre}$ which we check by induction. Given that $\mu(s) \in S_{pre}$, let $t$ be as defined in step 2 (a) above and let $\mu'$ be any linear interpolate between $\mu(s)$ and $\mu(s + 1)$. Then we know $\mu(s) \le \nu^{(t)}$ so $\phi(\mu(s)) \le \phi(\nu^{(t)}) = \nu^{(t + 1)}$ hence $\phi(\mu')_{i \to j} \le \phi(\mu(s))_{i \to j} \le \nu^{(t + 1)}_{i \to j} =  \mu(s + 1)_{i \to j} \le \mu'_{i \to j}$. For the other coordinates $a,b$ it's immediate from monotonicity and the induction hypothesis that $\phi(\mu')_{a \to b} \le \phi(\mu(s))_{a \to b} \le \mu(s)_{a \to b} = \mu'_{a \to b}$ so $\mu' \in S_{pre}$.
\end{proof}

In order to give a quantitative bound on the convergence of BP, we are faced with an important conceptual difficulty: the BP messages may not converge quickly in parameter space, but if the BP messages are far from a fixed point in parameter space it is hard to show anything about the quality of their estimate to the free energy. We will show how to overcome this difficulty by relating the behavior of BP at zero external field and with additional positive external field, which allows us to take advantage of the smoothness of the primal objective $\Phi_{Bethe}$.

The following Lemma gives the bound (in parameter space) for BP at positive external field which we will use; it is a variant of Lemma 4.3 from \cite{DemboMontanari:10} which is more optimized for our setting. 
It is convenient to rephrase the result of Ising models on trees, in which case it gives a quantitative bound showing that under positive external field, the root marginal on an infinite tree does not distinguish between all-plus and free boundary conditions. The connection to our problem is that, because BP computes exact marginals on trees, Loopy BP computes true marginals on its corresponding ``computation tree'' which is a truncation of the non-backtracking walk tree, with boundary conditions on the bottom level determined by its initialization. 
\begin{lemma}\label{lem:positive-h-convergence}
Suppose $T$ is an infinite tree rooted at $\rho$ and suppose the minimum external field is $h_{min} := \min_i h_i > 0$. Then
\[ \E_{T(\ell)}[X_{\rho} | X_{T(\ell)} = 1] - \E_{T(\ell)}[X_{\rho}] \le \frac{1 + \|J\|_{\infty}}{\ell \tanh(h_{min})} \]
where $\E_{T(\ell)}$ denotes the expectation under the measure where the tree is truncated at level $\ell$.
\end{lemma}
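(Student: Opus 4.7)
The plan is to use that BP is exact on trees, run an averaging argument on the BP iterates, and then use the positive external field to convert a step-size bound into a gap bound between the plus- and free-initialized iterates.

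First I would set things up. Since $T$ is a tree, BP computes exact marginals, so both expectations in the statement are BP outputs with different boundary conditions: $\E_{T(\ell)}[X_\rho \mid X_{T(\ell)} = 1]$ is the marginal at $\rho$ obtained by running BP from boundary messages $M_v^+ = 1$, while $\E_{T(\ell)}[X_\rho]$ is the marginal with boundary messages $M_v^f = \tanh(h_v)$. Equivalently, both equal the $\ell$-th iterates of BP on the infinite tree $T$ starting from $\nu^{(0)} \equiv 1$ and $\mu^{(0)}_{v \to u} = \tanh(h_v)$, respectively. By Lemma~\ref{lem:monotone-concave} the BP update $\phi$ is monotone and concave on nonnegative messages, so $\nu^{(t)}$ is coordinate-wise decreasing and $\mu^{(t)}$ is coordinate-wise increasing; under $h_{\min} > 0$ both converge to a common fixed point $\nu^*$. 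The goal is to bound $\nu^{(\ell)}_\rho - \mu^{(\ell)}_\rho$.

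The main step is a telescoping/averaging argument carried out on the $\tanh^{-1}$-scale variables $y^{(t)}_{v \to u} := h_v + \sum_{c \in \partial v \setminus u} \tanh^{-1}(\tanh(J_{vc})\, \nu^{(t)}_{c \to v})$, which satisfy $\nu^{(t+1)}_{v \to u} = \tanh(y^{(t)}_{v \to u})$. Because $|\tanh^{-1}(\tanh(J) x)| \le |J|$ for $|x| \le 1$, each $y^{(t)}_{v \to u}$ lies in the bounded interval $[h_v, h_v + \|J\|_\infty]$, so the monotone telescoping sum $\sum_{t = 0}^{\ell - 1} (y^{(t)}_{v \to u} - y^{(t+1)}_{v \to u}) \le 1 + \|J\|_\infty$. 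By concavity of $\phi$ the step sizes $y^{(t)} - y^{(t+1)}$ are themselves monotone in $t$, so the step at $t = \ell - 1$ is at most the average, namely $(1 + \|J\|_\infty)/\ell$. A parallel argument bounds the step sizes of the increasing sequence $\mu^{(t)}$.

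The last step is to convert this step-size bound into the claimed gap $\nu^{(\ell)}_\rho - \mu^{(\ell)}_\rho$. Using the sandwich $\mu^{(\ell - 1)} \le \mu^{(\ell)} \le \nu^{(\ell)} \le \nu^{(\ell - 1)}$, the difference at $\rho$ is controlled by both step sizes. The positive field enters here through the identity $\nu^*_v \ge \tanh(h_v) \ge \tanh(h_{\min})$: a direct calculation with the explicit form of $\phi$ shows that for messages bounded below in this way, the per-iteration reduction of the plus-free gap carries a multiplicative factor of order $\tanh(h_{\min})$, so the step-size-to-gap conversion loses exactly a factor of $1/\tanh(h_{\min})$, yielding the claimed bound.

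The hard part is the final conversion. The argument must use $h_{\min} > 0$ only softly --- picking up a $1/\tanh(h_{\min})$ factor, rather than a strict contraction rate (which would give exponential decay but requires stronger tree-uniqueness-type assumptions). The positive field is essential because it pins the common fixed point $\nu^*$ away from the unstable point $\nu = 0$; without it, the plus and free iterates could both drift toward $0$ without ever approaching each other, and the $1/\ell$ rate would fail. Getting the constant right in this conversion --- and ensuring it depends on $h_{\min}$ only through $\tanh(h_{\min})$ and not through stronger tree-uniqueness quantities --- is the delicate technical core of the proof, and it mirrors the analogous difficulty in Lemma~4.3 of \cite{DemboMontanari:10}.
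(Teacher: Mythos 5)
Your setup (BP is exact on trees, the two expectations are the monotone plus-initialized and free-initialized iterates, and the gap should decay like $1/\ell$ with the field entering only through $\tanh(h_{\min})$) matches the paper's framing, but the proof has a genuine gap at exactly the place you yourself flag as the "delicate technical core": the conversion from a step-size bound to the plus--free gap with a $1/\tanh(h_{\min})$ loss is asserted ("a direct calculation with the explicit form of $\phi$ shows\ldots") but never carried out, and nothing earlier in your argument supplies it. The averaging is also run on the wrong quantity: the variables $y^{(t)}_{v\to u}$ lie in $[h_v,\, h_v + \sum_{c} J_{vc}]$, whose length is (degree)$\times\|J\|_{\infty}$, not $\|J\|_{\infty}$, so the claimed telescoping bound $1+\|J\|_{\infty}$ is incorrect as stated; and a bound on the per-iteration decrement of the decreasing sequence $\nu^{(t)}$ says nothing by itself about its distance to the increasing sequence $\mu^{(t)}$ --- that is precisely the missing comparison. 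The "$1$" in the numerator cannot come from an interval length at all; in the correct argument it comes from the elementary inequality $x/\tanh(x)\le 1+x$.

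The paper's mechanism, which your sketch does not reproduce, is a comparison of boundary conditions as external fields, linearized by concavity. One rewrites $\E_{T(\ell)}[X_{\rho}\mid X_{T(\ell)}=1]$ as the marginal on $T(\ell-1)$ with extra field $B_i=\sum_{j\in C(i)}J_{ij}$ at level $\ell-1$, and $\E_{T(\ell)}[X_{\rho}]$ as the marginal on $T(\ell-1)$ with extra field $B_i'=\sum_{j\in C(i)}\tanh^{-1}(\tanh(J_{ij})\tanh(h_j))$; then $B\le MB'$ with $M\le (1+\|J\|_{\infty})/\tanh(h_{\min})$ via $\tanh^{-1}(x)\ge x$ and $x/\tanh(x)\le 1+x$, which is where both the numerator and the denominator of the constant actually originate. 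Griffiths' inequality together with concavity of the root magnetization in the external field (a consequence of Lemma~\ref{lem:monotone-concave}, since the root marginal is a composition of monotone concave BP updates) then gives that the gap at level $\ell$ is at most $M\bigl(\E_{T(\ell)}[X_{\rho}]-\E_{T(\ell-1)}[X_{\rho}]\bigr)$. Finally, the gap is nonincreasing in the depth, so $\ell$ times the level-$\ell$ gap is at most $\sum_{k=1}^{\ell}M\bigl(\E_{T(k)}[X_{\rho}]-\E_{T(k-1)}[X_{\rho}]\bigr)\le M$, which telescopes because the root magnetization is bounded --- this is the correct analogue of your averaging step, carried out on the root marginal rather than on message-level quantities. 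Your instinct that the field should be used "softly" (no contraction, only a $1/\tanh(h_{\min})$ factor) is right and mirrors Lemma~4.3 of \cite{DemboMontanari:10}, but as written the key inequality is unproven, so the proposal does not yet constitute a proof.
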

\begin{proof}
Observe that $\E_{T(\ell)}[X_{\rho} | X_{T(\ell)} = 1]$ is the same as $\E_{T(\ell - 1), B}[X_{\rho}]$ where $B$ corresponds to additional external field $\sum_{j \in C(i)} J_{ij}$ at every node $i$ on level $\ell - 1$. Similarly, $\E_{T(\ell)}[X_{\rho}]$ is the same as $\E_{T(\ell - 1),h}[X_{\rho}]$ where there is additional field $B'_i$ of $\sum_{j \in C(i)} \tanh^{-1}(\tanh(J_{ij}) \tanh(h_j))$ at the leaves of $T(\ell - 1)$. Define 
\[ M := \sup_{i \in T, j \in C(i)} \frac{ J_{ij}}{\tanh^{-1}(\tanh(J_{ij}) \tanh(h_j))}
\le \sup_{i \in T, j \in C(i)} \frac{ J_{ij}}{\tanh(J_{ij}) \tanh(h_{min})} \le \frac{1 + \|J\|_{\infty}}{\tanh(h_{min})} \]
where we used the inequalities $\tanh^{-1}(x) \ge x$ and $x/\tanh(x) \le 1 + x$ for $x \ge 0$.

Note from above that $\E_{T(\ell)}[X_{\rho}] = \E_{T(\ell - 1),h}[X_{\rho}] \ge \E_{T(\ell - 1)}[X_{\rho}]$ by Griffith's inequality. Therefore we find 
\begin{align*}
\E_{T(\ell)}[X_{\rho} | X_{T(\ell)} = 1] - \E_{T(\ell)}[X_{\rho}]
   &\le
   \E_{T(\ell)}[X_{\rho} | X_{T(\ell)} = 1] - \E_{T(\ell - 1)}[X_{\rho}]
   = \E_{T(\ell - 1), B}[X_{\rho}] - \E_{T(\ell - 1)}[X_{\rho}] \\
   &\le \E_{T(\ell - 1), MB'}[X_{\rho}] - \E_{T(\ell - 1)}[X_{\rho}] \\
   &\le M(\E_{T(\ell - 1), B'}[X_{\rho}] - \E_{T(\ell - 1)}[X_{\rho}])]
   = M(\E_{T(\ell)}[X_{\rho}] - \E_{T(\ell - 1)}[X_{\rho}])]
\end{align*} 
where the last two inequalities were by Griffith's inequality\footnote{This states the root marginal is monotone in the external fields. As with concavity, this can be seen on the tree by writing the root marginal in terms of the BP recursion and using Lemma~\ref{lem:monotone-concave}.} (using that $B \le MB'$) and by concavity of the root marginal w.r.t. the external field along the line from $0$ to $B'$, which follows\footnote{Alternatively, this can be proved from the GHS inequality, generalizing the argument in \cite{DemboMontanari:10}} from the fact that the marginal at the root can be computed via the BP recursion and that this recursion is a composition of concave and monotone functions due to Lemma~\ref{lem:monotone-concave}, hence itself concave.

Summing the corresponding inequality for levels $k = 1$ to $\ell$ we find
\[ \ell(\E_{T(\ell)}[X_{\rho} | X_{T(\ell)} = 1] - \E_{T(\ell)}[X_{\rho}]) \le \sum_{k = 1}^{\ell}(\E_{T(k)}[X_{\rho} | X_{T(k)} = 1] - \E_{T(k)}[X_{\rho}]) \le M \]
which gives the result.
\end{proof}
We are now ready to prove the main theorem, Theorem~\ref{thm:bp}. For the reader's convenience we recall the statement below.
\begin{theorem}[Restatement of Theorem~\ref{thm:bp}]\label{thm:bp-full}
Initializing $\nu^{(0)}_{i \to j} = 1$ for all $i \sim j$ and performing $t$ steps of the BP iteration on a graph with $m$ edges we have
\[ \Phi^*_{Bethe}(\nu^{(t)}) \le \Phi^*_{Bethe}(\nu^*) \le \Phi^*_{Bethe}(\nu^{(t)}) +  \sqrt{\frac{8mn(1 + \|J\|_{\infty})}{t}} \]
\end{theorem}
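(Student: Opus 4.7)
For the lower bound, the plan is to apply Lemma~\ref{lem:pre-monotone} directly: the initial message $\nu^{(0)} = \mathbf{1}$ lies in $S_{pre}$ since $\phi(\mathbf{1})_{i \to j} = \tanh(h_i + \sum_{k \in \partial i \setminus j} J_{ik}) \le 1$ coordinate-wise, so Lemma~\ref{lem:pre-monotone} shows that $\Phi^*_{Bethe}$ is nondecreasing along the BP iterates $\nu^{(t)}$. Passing to the limit $\nu^{(t)} \to \nu^*$ guaranteed by Lemma~\ref{lem:nu*} and invoking continuity of $\Phi^*_{Bethe}$ then gives $\Phi^*_{Bethe}(\nu^{(t)}) \le \Phi^*_{Bethe}(\nu^*)$.

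The upper bound is the heart of the theorem, and the plan is to introduce a perturbed external field $h + \epsilon \mathbf{1}$ for a parameter $\epsilon > 0$ to be optimized later, writing $\nu^{(t),\epsilon}$ and $\nu^{*,\epsilon}$ for the corresponding BP iterates and maximal fixed point. At the perturbed field the minimum external field is at least $\epsilon$, which is precisely the hypothesis needed to invoke Lemma~\ref{lem:positive-h-convergence}. Interpreting $\nu^{(t),\epsilon}_{i \to j}$ as the root marginal of the depth-$(t+1)$ computation tree at $i$ (with the edge to $j$ removed) under all-$+1$ boundary conditions and $\nu^{*,\epsilon}_{i \to j}$ as the infinite-depth limit, Lemma~\ref{lem:positive-h-convergence} yields the per-coordinate bound $|\nu^{(t),\epsilon}_{i \to j} - \nu^{*,\epsilon}_{i \to j}| \le (1+\|J\|_\infty)/(t \tanh(\epsilon))$; combined with the $\ell_\infty$-gradient bound from Lemma~\ref{lem:dual-gradient-properties} this controls the dual gap $|\Phi^*_{Bethe,h+\epsilon}(\nu^{(t),\epsilon}) - \Phi^*_{Bethe,h+\epsilon}(\nu^{*,\epsilon})|$ by roughly $2m(1+\|J\|_\infty)/(t\tanh(\epsilon))$.

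It remains to chain this back to field $h$ via smoothness in the external field. On the primal side, since $\Phi_{Bethe}(P)$ is $n$-Lipschitz in $h$, one has $\Phi_{Bethe,h}(P^*_h) \le \Phi_{Bethe,h+\epsilon}(P^{*,\epsilon}) + \epsilon n = \Phi^*_{Bethe,h+\epsilon}(\nu^{*,\epsilon}) + \epsilon n$; on the dual side, the explicit formula for $F_i$ gives $|\Phi^*_{Bethe,h+\epsilon}(\nu) - \Phi^*_{Bethe,h}(\nu)| \le \epsilon n$ uniformly in $\nu$. The hardest step will be comparing $\Phi^*_{Bethe,h+\epsilon}(\nu^{(t),\epsilon})$ with $\Phi^*_{Bethe,h}(\nu^{(t)}_h)$, since these evaluate the dual at different fields on different messages, and while $\nu^{(t)}_h \le \nu^{(t),\epsilon}$ by monotonicity of $\phi$ in $h$, neither message is guaranteed to lie in $S_{pre}$ or $S_{post}$ for the opposite field, which prevents a direct gradient-sign argument. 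I expect bridging this obstacle will require exploiting the $S_{pre}/S_{post}$ dichotomy together with the observation that any fixed point at field $h$ lies in $S_{post}$ at field $h+\epsilon$ (since $\phi_{h+\epsilon} \ge \phi_h$), and a carefully constructed monotone path along which Lemma~\ref{lem:pre-monotone} can be applied at field $h+\epsilon$. Assembling these pieces yields $\Phi^*_{Bethe}(\nu^*) - \Phi^*_{Bethe}(\nu^{(t)}) \le O(\epsilon n) + O(m(1+\|J\|_\infty)/(t\tanh(\epsilon)))$, and optimizing $\epsilon \sim \sqrt{m(1+\|J\|_\infty)/(nt)}$ balances the two terms to produce the claimed $\sqrt{8mn(1+\|J\|_\infty)/t}$ rate.
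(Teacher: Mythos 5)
Your overall architecture is exactly the paper's: the lower bound via Lemma~\ref{lem:pre-monotone}, and for the upper bound a field perturbation $h \mapsto h+\epsilon$, primal Lipschitzness in the field (cost $\epsilon n$), Lemma~\ref{lem:positive-h-convergence} on the computation tree to get $\ell_1$-closeness of $\nu^{(t),\epsilon}$ to $\nu^{*,\epsilon}$, the $\ell_\infty$-gradient bound of Lemma~\ref{lem:dual-gradient-properties} to convert that to closeness in objective, and optimization of $\epsilon \sim \sqrt{m(1+\|J\|_\infty)/nt}$. However, the one step you flag as ``the hardest'' and leave unresolved is a genuine gap in the write-up, and the route you sketch for it points the wrong way. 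The comparison you need is handled in the paper by a one-line application of Lemma~\ref{lem:pre-monotone} \emph{at the original field} $h$: the lemma is stated for an \emph{arbitrary} $\mu$ with $\nu^{(t)} \le \mu \le \nu^{(0)}$ --- it does not require $\mu$ itself to lie in $S_{pre}$ (its proof builds the monotone path inside $S_{pre}$ for you). Since $\nu^{(t)} \le \nu^{(t),\epsilon} \le \vec{1} = \nu^{(0)}$ by monotonicity of $\phi$ in the field, taking $\mu = \nu^{(t),\epsilon}$ gives directly $\Phi^*_{Bethe,h}(\nu^{(t),\epsilon}) \le \Phi^*_{Bethe,h}(\nu^{(t)})$, and combining this with the dual field-smoothness bound $|\Phi^*_{Bethe,h+\epsilon}(\nu) - \Phi^*_{Bethe,h}(\nu)| \le \epsilon n$ that you already derived closes the chain. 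So your stated obstruction (``neither message is guaranteed to lie in $S_{pre}$ or $S_{post}$ for the opposite field'') is not actually an obstruction to the lemma as stated.

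By contrast, the workaround you propose --- applying Lemma~\ref{lem:pre-monotone} at field $h+\epsilon$ and using that fixed points at field $h$ lie in $S_{post}$ at field $h+\epsilon$ --- does not obviously go anywhere: at field $h+\epsilon$ the lemma compares sandwiched points against the \emph{perturbed} iterates $\nu^{(t),\epsilon}$, and since $\nu^{(t)} \le \nu^{(t),\epsilon}$ the unperturbed iterate sits on the wrong side of the sandwich, so you cannot extract the inequality in the direction you need (an upper bound on $\Phi^*$ of the perturbed messages by $\Phi^*$ of the actual iterate $\nu^{(t)}$). Two minor further remarks: your primal step loses an extra $\epsilon n$ that the paper avoids (since the optimal Bethe marginals are nonnegative, increasing the field only increases the primal optimum), so your constants come out slightly worse than $\sqrt{8mn(1+\|J\|_\infty)/t}$, though this is only a constant-factor issue; and the per-coordinate bound from Lemma~\ref{lem:positive-h-convergence} carries a $\tanh(\epsilon)$ rather than $\epsilon$ in the denominator, which should be tracked (the paper itself elides this) when balancing against $\epsilon n$.
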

\begin{proof}
Observe the lower bound is immediate from Lemma~\ref{lem:pre-monotone} and the definition of $\nu^*$ as the limit of the iterates from $\nu^{(0)}$. It remains to prove the upper bound.

Let $B > 0$ be arbitrary and define $\nu^*(B)$ to be the optimal BP fixed point when the external field everywhere is increased by $B$. Then clearly (from the primal definition of Bethe free energy)
\[ \Phi^*_{Bethe}(\nu^*) + \sum_i B \E_{\nu_*}[X_i] \le \Phi^*_{Bethe}(\nu^*(h)) + \sum_i h \E_{\nu_*(h)}[X_i]. \]
so
\[ \Phi^*_{Bethe}(\nu^*) \le \Phi^*_{Bethe}(\nu^*(h)) + Bn. \]
Define $\nu^{(t)}(B)$ to be the result of the BP iteration after $t$ steps from $\nu^{(0)}$ after having increased the external field at every node by $B$. Observe by monotonicity that $\nu^{(t)} \le \nu^{(t)}(B)$.

Now we appeal to Lemma~\ref{lem:dual-gradient-properties} and Lemma~\ref{lem:positive-h-convergence} to see that 
\[  \Phi^*(\nu^*(B)) - \Phi^*(\nu^{(t)}(B)) \le \|\nu^{(t)}(B) - \nu^*(B)\|_1 \le 2m(1 + \|J\|_{\infty})/Bt \]
(using that $\nu^{(t)}(B)$ is sandwiched between the output of BP initialized from 0 and from all-$1$ with additional external field $B$ to get the last inequality from Lemma~\ref{lem:positive-h-convergence})
hence
\[ \Phi^*(\nu^*) -\Phi^*(\nu^{(t)}(B)) \le hn + 2m(1 + \|J\|_{\infty})/Bt. \]
By Lemma~\ref{lem:pre-monotone} this implies that
\[ \Phi^*(\nu^*) -\Phi^*(\nu^{(t)}) \le hn + 2m(1 + \|J\|_{\infty})/Bt \]
as well since $\nu^{(t)} \le \nu^{(t)}(B) \le \nu^{(0)}$.
Finally we optimize the choice of $h$: solving $hn = 2m/ht$ we find $h^2 = 2m(1 + \|J\|_{\infty})/nt$ so the final upper bound on the excess error is at most $2n\sqrt{2m(1 + \|J\|_{\infty})/nt}$. 
\end{proof}
\subsection{Some Examples}\label{sec:examples}
The previous analysis shows how to compute the Bethe free energy by using a small number of rounds of BP to find approximate maximizer of $\Phi^*_{Bethe}$ (on $S$). However, these messages may be far in \emph{parameter space} (e.g. $\ell_{1}$-distance) from the optimal BP fixed point due to flat directions in the objective. In fact, simple examples show that the number of iterations to reach $o(n)$ distance in parameter space may be \emph{exponential} in $\beta$. These examples also show lower bounds on how quickly the BP estimate for the free energy can converge.
\begin{example}\label{ex:line-bp}
Consider the Ising model on the cycle with fixed edge weight $\beta$. Starting from the all-ones initialization, the messages output at time $t$ are all equal to $\tanh(\beta)^t$, so they converge to 0 as $t \to \infty$. Since $1 - \tanh(\beta) = \frac{2 e^{-\beta}}{e^{\beta} + e^{-\beta}} = O(e^{-2\beta})$ we see it takes $\Omega(e^{2 \beta})$ iterations for the messages to go below $1/2$. 

We also see the that 
\begin{align*} 
\Phi^*_{Bethe}(\nu^{(t)}) 
&= n\log\left[(1 + \tanh(\beta)^{t + 1})^2 + (1 - \tanh(\beta)^{t + 1})^2\right] + n\log \frac{e^{\beta} + e^{-\beta}}{2} - n[\log(1 + \tanh(\beta)^{2t + 1})] \\
&= n \log \left[\frac{2 + 2\tanh(\beta)^{2t + 2}}{1 + \tanh(\beta)^{2t + 1}}\right] + n \log \frac{e^{\beta} + e^{-\beta}}{2} \\
&= n\log(2) + n \log \frac{e^{\beta} + e^{-\beta}}{2} + n\log(1 + \frac{\tanh(\beta)^{2t + 2} - \tanh(\beta)^{2t + 1}}{1 + \tanh(\beta)^{2t + 1}}) \\
&= n\log(2) + n \log \frac{e^{\beta} + e^{-\beta}}{2} + \Theta(ne^{-2\beta} \tanh(\beta)^{2t + 1}).
\end{align*}
Therefore if we want to achieve $\epsilon n$ error in $\Phi^*_{Bethe}$ for $\epsilon < e^{-2 \beta}$, at least $\Omega(e^{\beta})$ iterations of BP are required. 
\end{example}
\begin{example}\label{example:d-regular-bp}
Consider a $d$-regular graph on $n$ nodes with fixed edge weight $\beta$ and no external field. Let $\beta$ be the critical value given by solving $(d - 1)\tanh(\beta) = 1$. Then using symmetry to reduce to a 1-dimensional recursion as before, we see that the BP iteration behaves locally like $x \mapsto x - cx^3$ near the fixed point $x = 0$, where $c = c(d) > 0$.
Solving this recurrence, we see that BP converges in parameter space (in $\ell_{\infty}$ norm) at a $\Theta(1/\sqrt{t})$ rate and the objective value (i.e. estimate for the Bethe free energy density) converges at a $\Theta(1/t^2)$ rate asympotically. 
\end{example}
Furthermore for fixed $\beta$, it's impossible for the convergence rate in parameter space under $\ell_{\infty}$ norm to be dimension-independent, whereas when $h > 0$ we knew this was indeed true by Lemma~\ref{lem:positive-h-convergence}:
\begin{example}
Consider the Ising model on the $2$-ary tree for $\beta$ sufficiently large (past the percolation threshold), so the expectation of the root under a boundary condition of all-ones is strictly greater than $0$. If the depth of the tree is $k = \Theta(\log n)$, then after $k - 1$ rounds of BP initialized from $\vec{1}$ the message from the root to its immediate children will be bounded below by a constant, but after $k$ rounds it will be 0.
\end{example}
Finally, to illustrate the behavior of BP highlighted by our results, we ran the mean-field iteration and BP on a simple $40 \times 40$ square grid example with external field at a single node; the results are shown in Figure~\ref{fig:simple-example}. As shown, a small number of iterations already suffices to get a good estimate of the mean-field and Bethe free energies; as shown
on the log-log plot (Figure~\ref{fig:simple-example} (b)), the convergence rate is consistent with a power law decay as shown in Theorem~\ref{thm:bp}, although with a better exponent than the worst-case bound shows. This is expected as we expect this model to behave similarly to Example~\ref{example:d-regular-bp}; we chose $\beta$ based on the critical value for the $4$-regular tree. In this example, the mean-field iteration converged even faster; again this is consistent with what one would guess based on the behavior in Example~\ref{example:d-regular-mf}, where one observes that away from the critical $\beta$ the mean field iteration converges faster, at an exponential rate asymptotically.

We also see in Figure~\ref{fig:simple-example-2} the importance of initializing from all-ones; the model is the same as before except that $\beta$ is larger, so that long-range behavior can affect BP. In simple examples like this, BP and mean-field iteration will require at least on the order of the diameter many steps in order to converge if started from all-zeros.
\begin{figure}
    \centering
    \begin{subfigure}{.49\linewidth}
    \centering
    \includegraphics[scale=0.45]{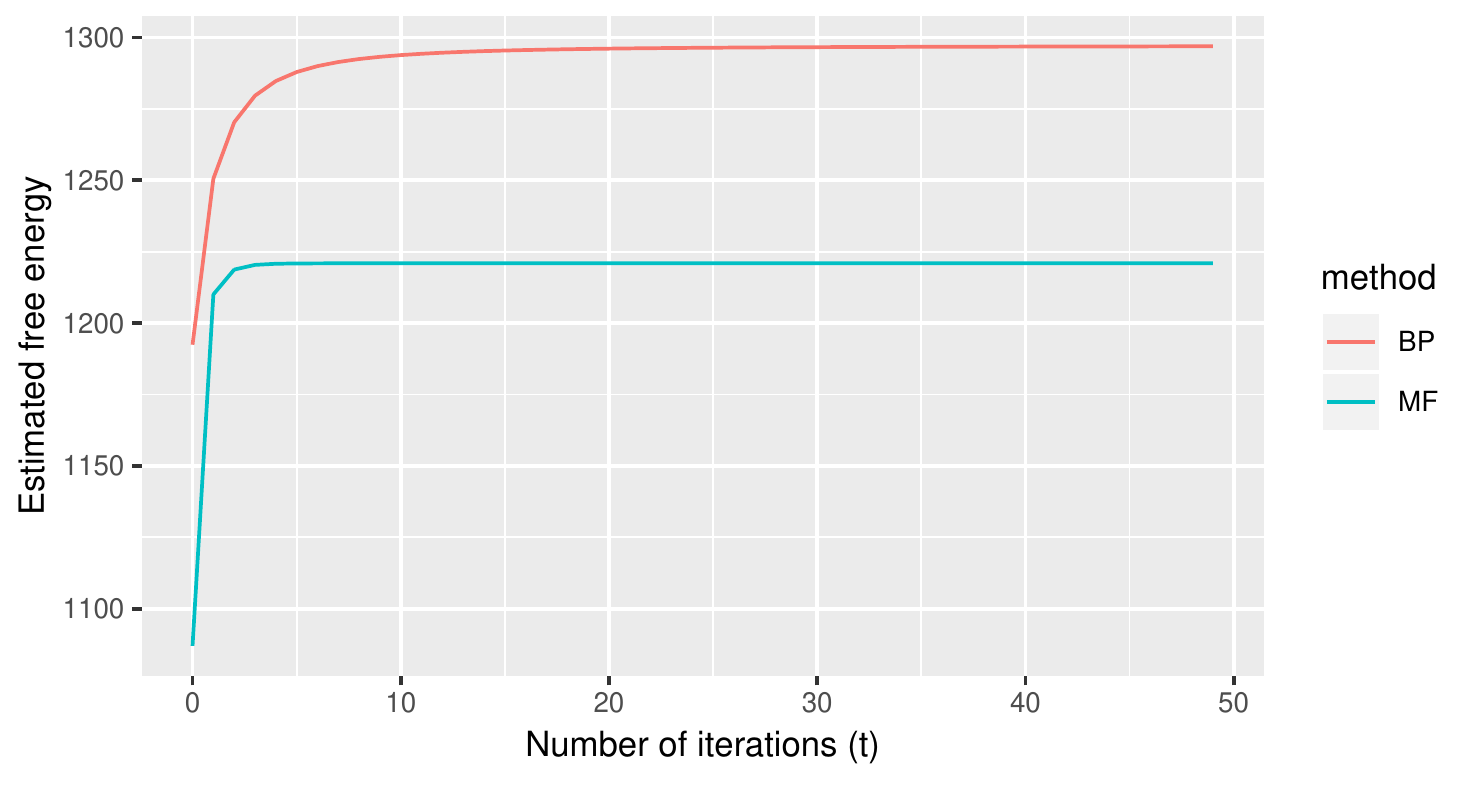}
    \caption{Estimated free energy vs. iteration}
    \end{subfigure}
   \begin{subfigure}{.49\linewidth}
   \centering
   \includegraphics[scale=0.45]{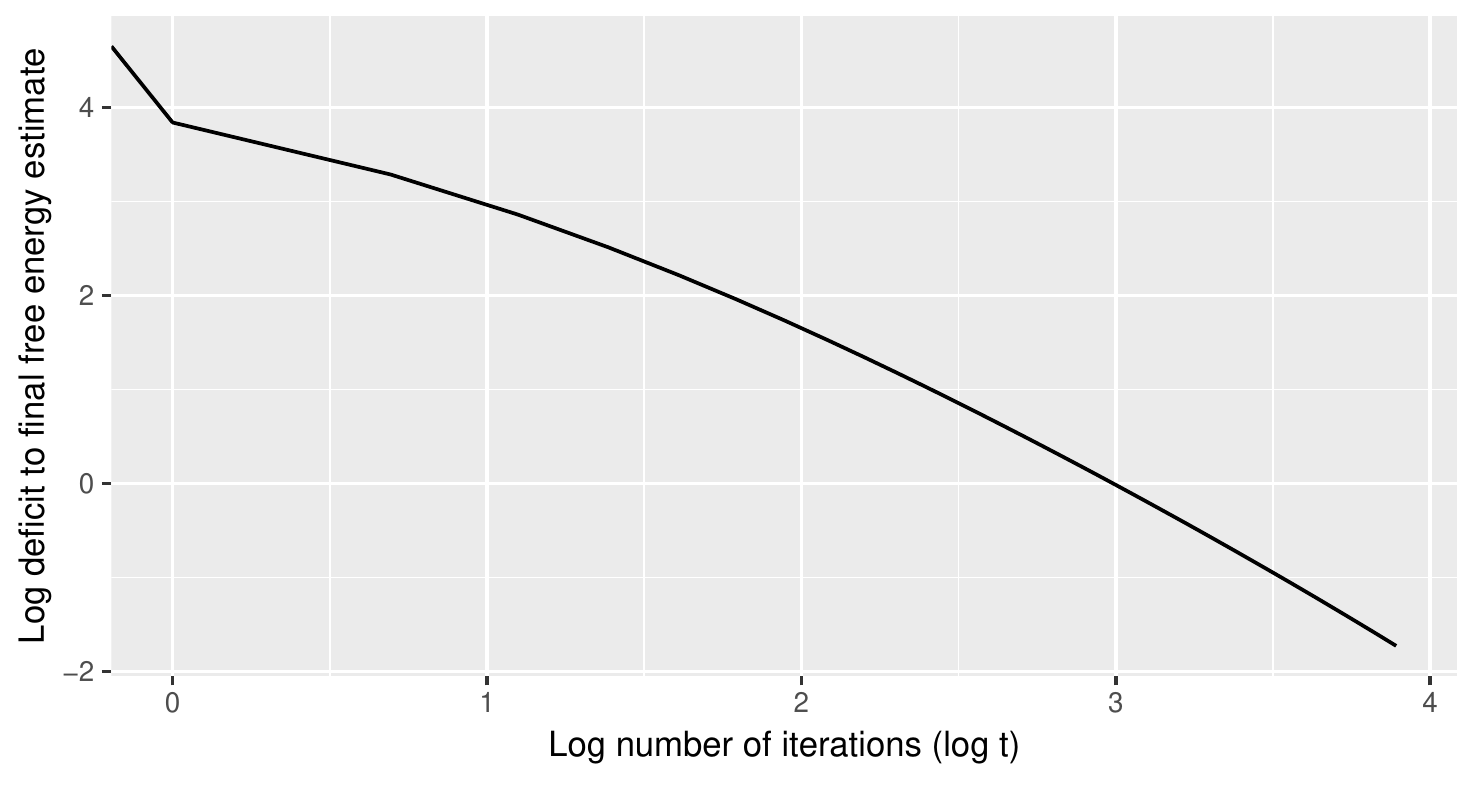}
   \caption{Log iterations vs. log residual error for BP}
   \end{subfigure}
    \caption{Results of mean-field iteration and BP from all-ones initialization on a $40 \times 40$ square grid with edge weights $\beta = \tanh^{-1}(1/3) \approx 0.347$, with zero external field except for external field of strength $5$ at the bottom-left node. In (b) we plot the difference between the true Bethe free energy and the estimated free energy of BP at each iteration on a log-log plot.}
    \label{fig:simple-example}
\end{figure}
\begin{figure}
    \centering
    \includegraphics[scale=0.6]{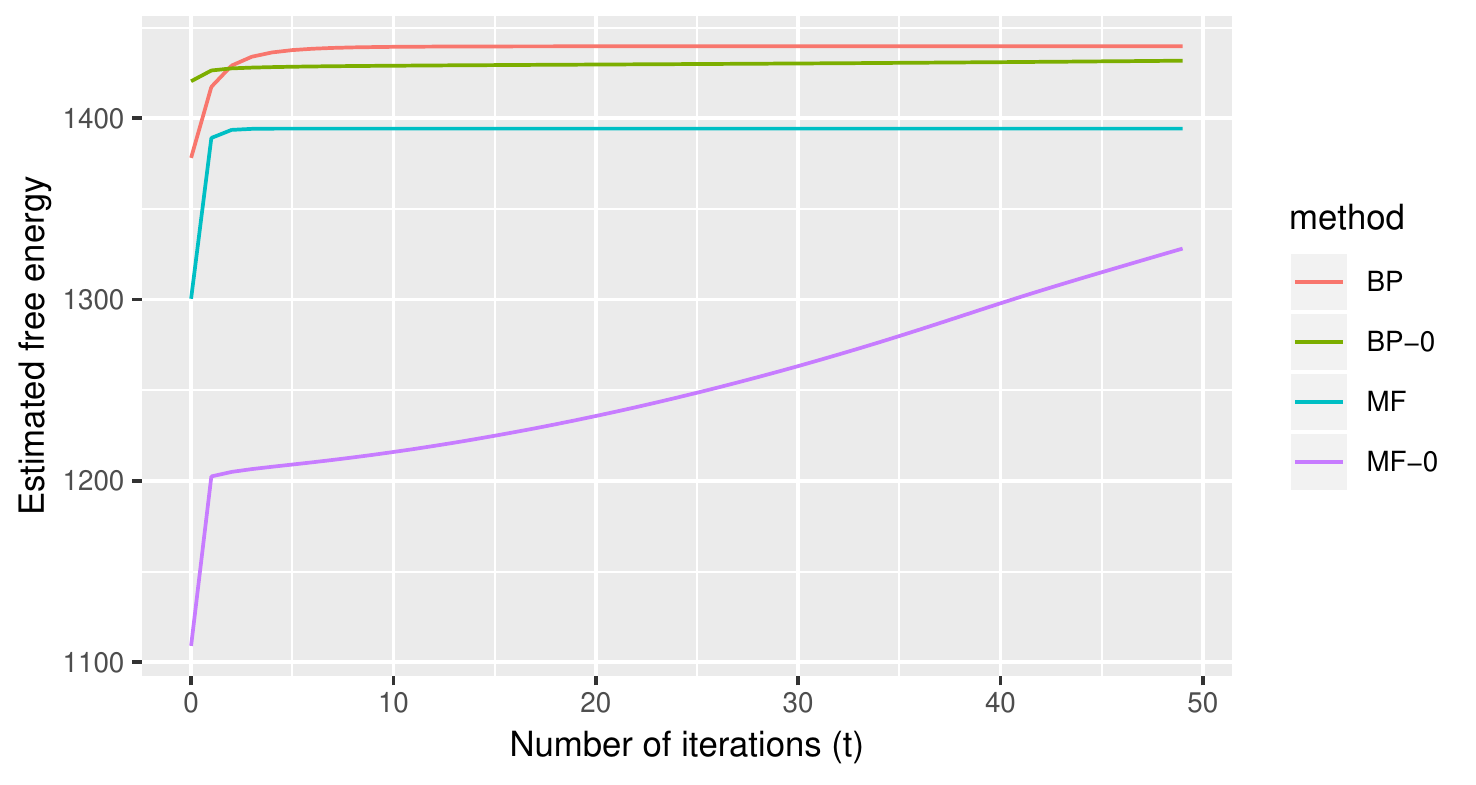}
    \caption{Comparison of BP and mean-field iteration at all-ones initializes (MF,BP) vs. all-zeros initialization (MF-0,BP-0). The instance is the same as in Figure~\ref{fig:simple-example} except that $\beta \approx 0.384$; we see that all-ones initialization leads to quick convergence (consistence with Theorems~\ref{thm:mean-field-convergence} and \ref{thm:bp}) whereas with all-zeros it does not.}
    \label{fig:simple-example-2}
\end{figure}
\section{Computing exponentially good BP messages in polynomial time}\label{sec:exp-convergence}
The bound we proved for BP in Theorem~\ref{thm:bp} showed that if we want to achieve $\epsilon n$ error then $poly(1/\epsilon)$ steps of BP suffice. What if $\epsilon$ is exponentially small? Can a small number of steps of BP achieve $\epsilon$ error?
It turns out the answer to this is negative.
In Example~\ref{ex:line-bp} from the previous section (Ising model on a line at inverse temperature $\beta$), we saw that BP cannot achieve error $O(e^{-2\beta})$ approximating the free energy without taking at least $\Omega(e^{\beta})$ many steps. Therefore, if we want to estimate $\Phi^*_{BP}$ within an exponentially small error in polynomial time we must use a different algorithm.

In this section we use the ideas developed while analyzing BP to give such algorithm, with runtime 
$poly(\log(1/\epsilon),n)$ . In order to achieve asymptotically fast convergence to the optimum, we use tools from convex optimization instead of message-passing. 
Recall that
\[ S_{post} = \{ \nu \ge 0 : \phi(\nu)_{i \to j} \ge \nu_{i \to j} \} \]
is a convex set (with an obvious separation oracle) and observe that by Theorem~\ref{thm:bp-landscape}, the optimum $\nu^*$ is the maximizer of the following convex program:
\begin{equation}\label{eqn:convex-program}
\nu^* = \argmax_{\nu \in S_{post}} \sum_{i,j} \nu_{i \to j}. 
\end{equation}
We show how to compute the maximizer using the ellipsoid method (although a wide variety of methods are applicable, see e.g. \cite{bubeck2015convex}). First we analyze the case where $h$ is bounded below, and we show the dependence on $h_{min}$ is very benign.
\begin{lemma}\label{lem:ellipsoid-positive}
Suppose that $h_{min} := \min h_i > 0$. Then given $\epsilon > 0$, the ellipsoid method applied to \eqref{eqn:convex-program} computes $\nu \in S'$ such that
\[ \|\nu^* - \nu\|_1 \le \epsilon \]
after $O(m^2 (\log(n/h_{min}) + \log(1/\epsilon)))$ steps of the ellipsoid method.
\end{lemma}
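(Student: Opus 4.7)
The plan is to apply the ellipsoid method to the convex program~\eqref{eqn:convex-program}; the ingredients needed are (a) a separation oracle for $S_{post}$, (b) a bounded outer ellipsoid, (c) a lower bound on the volume of the $\epsilon$-optimal set, and (d) a conversion from objective suboptimality to $\ell_1$-distance from $\nu^*$. Ingredient (d) is automatic: every $\nu \in S_{post}$ satisfies $\nu \le \nu^*$, because starting BP from any $\nu \in S_{post}$ yields a coordinatewise non-decreasing sequence bounded above by $\nu^*$ (by monotonicity of $\phi$ together with Lemma~\ref{lem:nu*}), so the linear objective satisfies $c^T(\nu^* - \nu) = \|\nu^* - \nu\|_1$. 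Ingredients (a) and (b) are also routine: the defining constraints $\nu_{i\to j} \ge 0$ and $\phi(\nu)_{i\to j} - \nu_{i\to j} \ge 0$ are affine and concave in $\nu$ respectively (concavity by Lemma~\ref{lem:monotone-concave}), so a violated constraint yields a separating halfspace through a supergradient, and $S_{post} \subseteq [0,1]^{2m}$ lies in a Euclidean ball of radius $\sqrt{2m}$.

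The crux is (c). I would take $\nu_\lambda := (1-\lambda)\nu^*$ with $\lambda = \Theta(\epsilon/m)$, so that $\nu_\lambda$ is $O(\epsilon)$-optimal. Writing $\nu_\lambda = \lambda\cdot 0 + (1-\lambda)\nu^*$ and invoking concavity of $\phi$ together with $\phi(0)_{i\to j} = \tanh(h_j) \ge \tanh(h_{min})$ yields
\[ \phi(\nu_\lambda)_{i\to j} \ge \lambda\tanh(h_j) + (1-\lambda)\nu^*_{i\to j} \ge (\nu_\lambda)_{i\to j} + \lambda\tanh(h_{min}), \]
i.e.\ every feasibility constraint has slack at least $\lambda\tanh(h_{min})$ at $\nu_\lambda$. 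Let $L$ denote the $\ell_\infty\to\ell_\infty$ Lipschitz constant of $\phi$ on a neighborhood of $\nu_\lambda$; this is polynomially bounded because $\nu^* \in [0,1)^{2m}$ is strictly bounded away from $\vec{1}$ under finite $J,h$. By monotonicity of $\phi$, any $\mu$ with $\|\mu - \nu_\lambda\|_\infty \le r := \lambda\tanh(h_{min})/(2(1+L))$ satisfies $\phi(\mu) \ge \phi(\nu_\lambda - r\vec{1}) \ge \phi(\nu_\lambda) - Lr\vec{1} \ge \nu_\lambda + r\vec{1} \ge \mu$, so $\mu \in S_{post}$; it is also $O(\epsilon)$-optimal. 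Therefore the $\epsilon$-optimal set contains an $\ell_\infty$-cube of volume $V_\epsilon = \Omega(\epsilon h_{min}/m)^{2m}$.

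Plugging into the standard sliding-objective ellipsoid convergence bound $T = O(d \log(\mathrm{vol}(E_0)/V_\epsilon))$ with dimension $d = 2m$ and $\mathrm{vol}(E_0) = O(\sqrt{m})^{2m}$, I obtain $T = O(m^2(\log m + \log(1/h_{min}) + \log(1/\epsilon))) = O(m^2(\log(n/h_{min}) + \log(1/\epsilon)))$, matching the claim. The main obstacle I foresee is step (c): $\lambda$ must be small enough to make $\nu_\lambda$ be $\epsilon$-optimal yet large enough (and $h_{min}$-dependent) to admit a cube of meaningful side length, and one has to verify that the Lipschitz constant of $\phi$ is well-behaved on that cube. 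All other pieces are standard ellipsoid bookkeeping.
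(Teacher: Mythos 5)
Your outline gets ingredients (a), (b), (d) right, and the reduction of $\ell_1$-error to objective gap via $\nu \le \nu^*$ for all $\nu \in S_{post}$ is exactly what is needed. But your route for (c) is genuinely different from the paper's, and its one non-routine step is justified incorrectly. You place a cube of side $\Theta\bigl(\lambda\tanh(h_{min})/(1+L)\bigr)$ around $(1-\lambda)\nu^*$, where $L$ is an $\ell_\infty\to\ell_\infty$ Lipschitz constant of $\phi$, and you assert $L$ is polynomially bounded ``because $\nu^*$ is strictly bounded away from $\vec 1$.'' That reasoning does not deliver the claimed complexity: the gap $1-\nu^*_{i\to j}$ can be as small as $e^{-\Theta(h_i+\sum_k J_{ik})}$, so a Lipschitz bound derived from it is exponential in the interaction strengths, and the resulting $\log(1+L)$ term in the iteration count would scale with $\|J\|_{\infty}$ rather than $\log n$, breaking the stated $O\bigl(m^2(\log(n/h_{min})+\log(1/\epsilon))\bigr)$ bound. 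The step is fixable, but by a different observation: since $\phi(\nu)_{i\to j}=\tanh\bigl(h_i+\sum_{k\in\partial i\setminus j}\tanh^{-1}(\theta_{ik}\nu_{k\to i})\bigr)\ge\theta_{ik}\nu_{k\to i}$ for $\nu\ge 0$, one has $\partial\phi(\nu)_{i\to j}/\partial\nu_{k\to i}=(1-\phi(\nu)_{i\to j}^2)\,\theta_{ik}/(1-\theta_{ik}^2\nu_{k\to i}^2)\le\theta_{ik}\le 1$, so $L$ is at most the maximum degree and $\log(1+L)=O(\log n)$; with that repair your volumetric (sliding-objective) ellipsoid argument does give the stated bound.

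For comparison, the paper's proof avoids step (c) entirely. Because $\phi(\nu)_{i\to j}\ge\tanh(h_{min})$ for every $\nu\ge 0$, the whole cube $[0,\tanh(h_{min})]^{2m}$ lies inside $S_{post}$, so the feasible set itself contains a ball of radius $\tanh(h_{min})/2$ and is contained in a ball of radius $O(\sqrt{m})$. Plugging $r,R$ and the objective bound $B=2m$ into the standard constrained-ellipsoid guarantee $\max_{x\in S}f(x)-f(x_t)\le \frac{2BR}{r}e^{-t/2d^2}$ (Theorem 2.4 of the cited optimization reference, $d=2m$) immediately yields the iteration count, with the objective-to-$\ell_1$ conversion exactly as in your step (d). So no near-optimal-set volume estimate and no Lipschitz control of $\phi$ are needed; your construction of an inner cube near $(1-\lambda)\nu^*$ is extra work that only becomes necessary when the feasible region is thin away from the optimum, which is not the case here once $h_{min}>0$.
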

\begin{proof}
Recall from Theorem 2.4 of \cite{bubeck2015convex} that the feasible set $S$ contains a ball of radius $r$ and is contained in a ball of radius $R$, then for any convex function $f : \mathbb{R}^{d} \to [-B,B]$ and $x_t$ the result of $t$ steps of ellipsoid method, satisfies
\[ \max_{x \in S} f(x) - f(x_t) \le \frac{2 B R}{r} e^{-t/2d^2} \]
as long as $t \ge 2d^2\log(R/r)$.
Note that our function of interest $\sum_{i,j} \nu_{i \to j}$ is bounded with $B = 2m$ and is contained in $[0,1]^n \subset \mathcal{B}(0,2\sqrt{n})$. By assumption we see that $S_{post}$ contains $[0,\tanh(h_{min})]^{2m}$ so it contains a ball of radius $\frac{\tanh(h_{min})}{2}$.
\end{proof}
In order to guarantee the optimization problem is well-behaved, we perturb it by a tiny amount, and this gives an algorithm for the general case. (If we do not perturb the model by adding a tiny external field, $S_{post}$ may be a lower-dimensional, measure-zero set which would be problematic for the ellipsoid method.)
\begin{theorem}\label{thm:bethe-ellipsoid}
Suppose $\epsilon > 0$. There is an algorithm which runs in time $poly(n, \log(1/\epsilon))$  and returns $\nu$ such that
\[ \left|\Phi^*_{Bethe}(\nu^*) - \Phi^*_{Bethe}(\nu)\right| \le \epsilon \]
\end{theorem}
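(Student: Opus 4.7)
The plan is to reduce the general case to Lemma~\ref{lem:ellipsoid-positive} by a small uniform perturbation of the external field. Set $B := \epsilon/(4n)$ and consider the ferromagnetic Ising model with interactions $J$ and shifted field $h' := h + B\vec{1}$; denote the associated optimal BP fixed point by $\nu^*(B)$ and the dual Bethe functional by $\Phi^*_{Bethe,B}$. Since $h'_{\min} \ge B > 0$, applying Lemma~\ref{lem:ellipsoid-positive} to the perturbed model with target $\ell_1$-accuracy $\epsilon/2$ produces $\tilde\nu \in S_{post}$ in $O(m^2(\log(n/B) + \log(2/\epsilon))) = \poly(n,\log(1/\epsilon))$ iterations, each costing $\poly(n)$ arithmetic (the separation oracle for $S_{post}$ reduces to a single BP update). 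The algorithm returns $\tilde\nu$, whose quality I measure against the \emph{original} $\Phi^*_{Bethe}$.

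I would bound $|\Phi^*_{Bethe}(\nu^*) - \Phi^*_{Bethe}(\tilde\nu)|$ by combining three ingredients via the triangle inequality. (i) The gradient bound $\|\nabla\Phi^*_{Bethe,B}\|_\infty \le 1$ from Lemma~\ref{lem:dual-gradient-properties} is valid on the nonnegative orthant independently of the external field, so combined with $\|\tilde\nu - \nu^*(B)\|_1 \le \epsilon/2$ it gives $|\Phi^*_{Bethe,B}(\tilde\nu) - \Phi^*_{Bethe,B}(\nu^*(B))| \le \epsilon/2$. (ii) A node-by-node computation using the elementary inequality $\log\frac{e^Ba + e^{-B}b}{a+b} \in [-B,B]$ for $a,b \ge 0$ shows that $|\Phi^*_{Bethe,B}(\nu) - \Phi^*_{Bethe}(\nu)| \le Bn = \epsilon/4$ for every $\nu \in [0,1]^{2m}$, in particular for $\nu = \tilde\nu$. (iii) A two-sided optimality argument at the primal level — using $\Phi_{Bethe}(P^*) \ge \Phi_{Bethe}(P^*(B))$ from optimality of $P^*$, and $\Phi_{Bethe,B}(P^*(B)) \ge \Phi_{Bethe,B}(P^*) = \Phi_{Bethe}(P^*) + B\sum_i \E_{P^*}[X_i]$ from optimality of $P^*(B)$, together with $\E_P[X_i] \in [0,1]$ for a ferromagnet with nonnegative field — yields $0 \le \Phi^*_{Bethe,B}(\nu^*(B)) - \Phi^*_{Bethe}(\nu^*) \le Bn = \epsilon/4$, where I use the primal-dual equalities $\Phi^*_{Bethe}(\nu^*) = \Phi_{Bethe}(P^*)$ and $\Phi^*_{Bethe,B}(\nu^*(B)) = \Phi_{Bethe,B}(P^*(B))$ at BP fixed points.

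Chaining (i)--(iii) gives $|\Phi^*_{Bethe}(\nu^*) - \Phi^*_{Bethe}(\tilde\nu)| \le \epsilon/2 + \epsilon/4 + \epsilon/4 = \epsilon$, which is the conclusion of the theorem, and the runtime is dominated by the single ellipsoid invocation, which is $\poly(n,\log(1/\epsilon))$ since $\log(n/B) = O(\log(n/\epsilon))$ for our choice of $B$.

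The main conceptual obstacle is that the ellipsoid output $\tilde\nu$ is \emph{not} a fixed point of the original BP equations, so $\Phi^*_{Bethe}(\tilde\nu)$ does not coincide with the primal Bethe free energy of any pseudodistribution, and the primal-dual machinery cannot be invoked at $\tilde\nu$ directly. What rescues the argument is that every mismatch introduced by the perturbation — both at the level of the dual functional evaluated at a fixed message vector (bound (ii)) and at the level of the optimal values themselves (bound (iii)) — scales only linearly in $Bn$, so a perturbation of order $\epsilon/n$ is enough while costing only an additive $\log(n/\epsilon)$ inside the $\log(1/h'_{\min})$ factor of Lemma~\ref{lem:ellipsoid-positive}.
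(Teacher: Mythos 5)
Your proposal is correct and follows essentially the same route as the paper: perturb the external field by a small uniform $B$, invoke Lemma~\ref{lem:ellipsoid-positive} on the perturbed model, transfer the guarantee back to the unperturbed dual Bethe functional at an additive cost of order $Bn$, and choose $B \sim \epsilon/n$. Your steps (ii)--(iii) simply spell out, via the pointwise comparison of $\Phi^*_{Bethe,B}$ with $\Phi^*_{Bethe}$ and the two-sided primal optimality argument, what the paper compresses into ``by the same argument as in Theorem~\ref{thm:bp}.''
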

\begin{proof}
Fix $B > 0$ to be optimized later. We add external field $B$ everywhere in the model and apply Lemma~\ref{lem:ellipsoid-positive} to see that we can compute $\nu$ such that
\[ \|\nu^*(B) - \nu\|_1 \le \epsilon/2 \]
in time $poly(\log(1/\epsilon),n,\log(1/B))$. Then we see by the same argument as in Theorem~\ref{thm:bp} that 
\[ |\Phi^*_{Bethe}(\nu^*) - \Phi^*_{Bethe}(\nu)| \le B m + \epsilon/2. \]
Finally taking $B = \epsilon/2m$ shows the result.
\end{proof}
The same approach works for the mean-field problem as well:
\begin{theorem}
Suppose $\epsilon > 0$. There is an algorithm which runs in time $poly(n, \log(1/\epsilon))$  and returns $x$ such that
\[ \Phi_{MF}(x) - \Phi_{MF}(x^*) \le \epsilon \]
\end{theorem}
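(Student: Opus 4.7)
The plan is to follow the structure of Theorem~\ref{thm:bethe-ellipsoid}: reformulate the mean-field optimum as the maximizer of a linear functional over a convex set, solve via the ellipsoid method, and handle the $h_{\min} = 0$ case by a tiny positive perturbation of the external field. (I interpret the conclusion as $\Phi_{MF}(x^*) - \Phi_{MF}(x) \le \epsilon$, since the reverse inequality is vacuous.)

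I would work with the mean-field analog of $S_{post}$,
\[ S_{post}^{MF} := \{x \in [0,1]^n : \tanh^{\otimes n}(Jx + h) \ge x\}. \]
For ferromagnetic $J, h \ge 0$ each defining function $g_i(x) := \tanh((Jx)_i + h_i) - x_i$ is concave on the nonnegative orthant (since $\tanh$ is concave on $[0,\infty)$ and $(Jx)_i + h_i \ge 0$), so $S_{post}^{MF}$ is convex with an easy separation oracle: evaluate $\tanh^{\otimes n}(Jx+h)$ and, at any coordinate $i$ with $g_i(x) < 0$, use $\nabla g_i(x)$ as a separating hyperplane. By monotonicity of the mean-field map, iterating from any $x \in S_{post}^{MF}$ produces a coordinate-wise increasing sequence in $[0,1]^n$ converging to a fixed point; under strictly positive external field the uniqueness lemma opening Section~2 forces this limit to equal $x^*$, so every $x \in S_{post}^{MF}$ satisfies $x \le x^*$. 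In particular $x^*$ is the unique maximizer of the linear objective $\sum_i x_i$ on $S_{post}^{MF}$, and feasibility yields $\sum_i x^*_i - \sum_i x_i = \|x^* - x\|_1$.

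In the positive external field regime $h_{\min} := \min_i h_i > 0$ the ellipsoid method applies directly to the convex program $\max_{x \in S_{post}^{MF}} \sum_i x_i$ exactly as in Lemma~\ref{lem:ellipsoid-positive}: $S_{post}^{MF}$ contains the box $[0, \tanh(h_{\min})]^n$ (since $x \le \tanh(h_{\min})\vec 1$ yields $\tanh((Jx)_i + h_i) \ge \tanh(h_{\min}) \ge x_i$) and hence a ball of radius $\tanh(h_{\min})/2$, while being contained in a ball of radius $2\sqrt n$. Thus in $\poly(n, \log(1/h_{\min}), \log(1/\epsilon'))$ iterations we obtain $\tilde x \in S_{post}^{MF}$ with $\|\tilde x - x^*\|_1 \le \epsilon'$. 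The fixed-point equation $\tanh^{-1}(x^*_i) = (Jx^*)_i + h_i$ gives $|\tanh^{-1}(x^*_i)| \le \|J\|_1 + \|h\|_1$, so $\|\nabla \Phi_{MF}(y)\|_\infty \le 2(\|J\|_1 + \|h\|_1)$ uniformly on $0 \le y \le x^*$, and therefore $\Phi_{MF}(x^*) - \Phi_{MF}(\tilde x) \le 2(\|J\|_1 + \|h\|_1)\epsilon'$; choosing $\epsilon'$ polynomially small in $\epsilon$ and the inputs delivers the desired accuracy within the claimed runtime.

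To remove the $h > 0$ assumption I would add uniform external field $B > 0$ and run the above algorithm on the perturbed model. Let $\Phi_{MF}^{(B)}(x) := \Phi_{MF}(x) + B \sum_i x_i$ with maximizer $x^{*,B}$; the previous step produces $\tilde x$ with $\Phi_{MF}^{(B)}(x^{*,B}) - \Phi_{MF}^{(B)}(\tilde x) \le \epsilon/2$. Combining with $\Phi_{MF}^{(B)}(x^{*,B}) \ge \Phi_{MF}^{(B)}(x^*) \ge \Phi_{MF}(x^*)$ and $\Phi_{MF}^{(B)}(\tilde x) \le \Phi_{MF}(\tilde x) + Bn$ yields $\Phi_{MF}(x^*) - \Phi_{MF}(\tilde x) \le \epsilon/2 + Bn$, so $B := \epsilon/(2n)$ suffices. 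The only conceptually new step is the first one --- identifying the right convex relaxation $S_{post}^{MF}$ and verifying that $x^*$ is its unique $\sum_i x_i$-maximizer --- and everything after that is a direct translation of the Bethe argument.
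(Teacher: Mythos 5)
Your proposal is correct and takes essentially the same route as the paper's own (very terse) proof: reformulate the mean-field optimum as the maximizer of $\sum_i x_i$ over the convex set of post-fixpoints, solve with the ellipsoid method as in Theorem~\ref{thm:bethe-ellipsoid}, and handle $h_{\min}=0$ by adding a uniform field $B=\Theta(\epsilon/n)$. One remark: the paper's proof writes the feasible set as $S_{pre}=\{x\ge 0: \tanh^{\otimes n}(Jx+h)\le x\}$, which appears to be a typo (that set is non-convex and its $\sum_i x_i$-maximizer is $\vec{1}$), whereas your $S_{post}^{MF}=\{x: x\le \tanh^{\otimes n}(Jx+h)\}$ is the intended convex set, and you also supply the details the paper leaves implicit (every feasible $x$ satisfies $x\le x^*$, and the Lipschitz bound converting $\ell_1$-closeness into an objective-value guarantee).
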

\begin{proof}
Recall the definition of the convex set $S_{pre} := \{x \ge 0 : \tanh^{\otimes n}(J x + h) \le x \}$ and consider the optimization problem
\[ \max_{x \in S_{pre}} \sum_i x_i. \]
Then we do everything the same way as in the proof of Theorem~\ref{thm:bethe-ellipsoid}: the algorithm proceeds perturbing the problem by adding a tiny external field $B = \epsilon/2$, and then solving it with ellipsoid method.
\end{proof}

\noindent
\textbf{Acknowledgements: } The author thanks Vishesh Jain for helpful discussions and for suggesting the argument in Section~\ref{sec:asymptotic-improvement},
Elchanan Mossel, Nike Sun, Matthew Brennan, and Enric Boix for valuable discussions related to this work, and Ankur Moitra and Andrej Risteski for useful discussions on related topics.
\bibliographystyle{plain}
\bibliography{all,ising-regularity}
\appendix

\section{Background: BP and the Bethe Free Energy (appendix)}
\label{apdx:background-bethe-deferred}
In this appendix corresponding to Section~\ref{sec:background-bethe}, we quickly recall the basic definitions,  facts, and notations involving the Bethe free energy, its `dual' formulation, and the connection to Belief Propagation -- as described in \cite{YeFrWe:03} and reference text \cite{MezMon:09}. We repeat some of the standard calculations in order to express the results in our variables $\nu_{i \to j}$.
The Lagrangian corresponding to the optimization problem \eqref{eqn:bethe-functional} over the polytope of locally consistent distributions (which is defined over all, not necessarily consistent, $P_{ij}$ and $P_i$) is
\[ \mathcal{L}(P,\lambda) = \Phi_{Bethe}(P) + \sum_{i,j,x_i} \lambda_{i \to j}(x_i)(\sum_{x_j} P_{ij}(x_i,x_j) - P_i(x_i)) + \sum_i \lambda_i(\sum_{x_i} P_i(x_i) - 1) \]
where we ignore the constraint $P_i(x_i) \ge 0$ because, given the other constraints, this constraint is always satisfied at a critical point (since the derivative of $H(Ber(p))$ diverges as $p \to 0$ or $p \to 1$).

By differentiating with respect to $P$, and setting $\lambda'_{i \to j} = \frac{\lambda_{i \to j}(1) - \lambda_{i \to j}(-1)}{2}$, one finds that at a critical point of the Lagrangian  that
\[ P_{ij}(x_i,x_j) \propto e^{J_{ij} x_i x_j + \lambda_{i \to j}(x_i) + \lambda_{j \to i}(x_j)} \propto e^{J_{ij} x_i x_j + \lambda'_{i \to j} x_i + \lambda'_{j \to i} x_j} \]
and 
\[ P_i(x_i) \propto \exp\left(\frac{1}{\deg(i) - 1} \sum_j \lambda_{i \to j}(x_i) - \frac{h_i}{\deg(i) - 1} x_i\right) \propto \exp\left(\frac{1}{\deg(i) - 1} \sum_j \lambda'_{i \to j} x_i - \frac{h_i}{\deg(i) - 1} x_i\right). \]
Furthermore by differentiating with respect to $\lambda$ we see that the constraints are satisfied, therefore for any $i \sim j$ that
$P_i(x_i) = \sum_{x_j} P_{ij}(x_i, x_j)$ hence
\[ P_i(x_i)^{\deg(i) - 1} \propto \prod_{k \in \partial i \setminus \{j\}} \sum_{x_k} P_{ik}(x_i,x_k) \propto \sum_{x_{\partial i \setminus j}} e^{\sum_k (J_{ik} x_i x_k + \lambda'_{i \to k} x_i + \lambda'_{k \to i} x_k)} = e^{\sum_k \lambda'_{i \to k} x_i} \sum_{x_{\partial i \setminus j}} e^{\sum_k (J_{ik} x_i + \lambda'_{k \to i}) x_k}  \]
so
\[ e^{\lambda'_{i \to j}x_i - h_i x_i} \propto \sum_{x_{\partial i \setminus j}} e^{\sum_k (J_{ik} x_i + \lambda'_{k \to i}) x_k} \propto \prod_k \sum_{x_k} e^{J_{ik} x_i} e^{\lambda'_{k \to i} x_k}. \]
Define $\nu_{i \to j} := \tanh(\lambda'_{i \to j})$ so $\frac{1 + \nu_{i \to j} x_i}{2} = \frac{e^{\lambda'_{i \to j}x_i}}{e^{\lambda'_{i \to j}} + e^{-\lambda'_{i \to j}}}$, then we see
\begin{align*} 
\nu_{i \to j} 
&= \frac{e^{h_i} \prod_k \sum_{x_k} e^{J_{ik} x_k} e^{\lambda'_{k \to i} x_k} - e^{-h_i} \prod_k \sum_{x_k} e^{-J_{ik} x_k} e^{\lambda'_{k \to i} x_k}}{e^{h_i} \prod_k \sum_{x_k} e^{J_{ik} x_k} e^{\lambda'_{k \to i} x_k} + e^{-h_i} \prod_k \sum_{x_k} e^{-J_{ik} x_k} e^{\lambda'_{k \to i} x_k}} \\
&=\frac{e^{h_i} \prod_k \sum_{x_k} e^{J_{ik} x_k} (1 + \nu_{k \to i} x_k) - e^{-h_i} \prod_k \sum_{x_k} e^{-J_{ik} x_k} (1 - \nu_{k \to i} x_k)}{e^{h_i} \prod_k \sum_{x_k} e^{J_{ik} x_k} (1 + \nu_{k \to i} x_k) + e^{-h_i} \prod_k \sum_{x_k} e^{-J_{ik} x_k} (1 - \nu_{k \to i} x_k)} \\
&= \tanh(h_i + \sum_{k \in \partial i \setminus j} \tanh^{-1}(\tanh(J_{ik}) \nu_{k \to i})) 
\end{align*}
which is the form of the BP equation we will typically refer to. We will denote the right hand side by $\phi(\nu)_{i \to j}$ so the BP iteration is given by $\nu \mapsto \phi(\nu)$. We will also define $\theta_{ik} = \tanh(J_{ik})$.
Finally, we will explicitly rewrite the Bethe free energy at a critical point in terms of the messages $\nu_{i \to j}$. We claim that at a critical point $(P,\lambda)$
\[ \Phi_{Bethe} = \sum_{i} F_i(\lambda) - \sum_{i \sim j} F_{ij}(\lambda) \]
where
\begin{align*}
F_i(\lambda) 
&:= \log \sum_{x_i} e^{h_i x_i}  \prod_{j \in \partial i} \sum_{x_j} e^{J_{ij} x_i x_j} e^{\lambda'_{j \to i} x_j} = \log \sum_{x_i,x_{\partial i}} e^{h_i x_i + \sum_j J_{ij} x_i x_j + \lambda'_{j \to i} x_j} \\
F_{ij}(\lambda) 
&:= \log \sum_{x_i,x_j} e^{J_{ij} x_i x_j + \lambda'_{i \to j} x_i + \lambda'_{j \to i} x_j}.
\end{align*}
To see this observe that from the Gibbs variational principle that (considering a joint distribution where we sample $X_i$ from $P_i$ and then $X_j | X_i$ according to $P_{ij}$)
\begin{align*} 
F_i(\lambda) 
&= \E[h_i X_i + \sum_j J_{ij} X_i X_j + \lambda'_{j \to i} X_j] + H(X_i) + \sum_{j \in \partial i} H(X_j | X_i) \\
&= \E[h_i X_i + \sum_j J_{ij} X_i X_j + \lambda'_{j \to i} X_j] + \sum_{j \in \partial i} H(X_i,X_j) - (\deg(i) - 1) H(X_i)
\end{align*}
and
\[ F_{ij}(\lambda) = \E[\sum_j J_{ij} X_i X_j + \lambda'_{i \to j} X_i + \lambda'_{j \to i} X_j] + H(X_i,X_j) \]
so summing all of these terms does indeed give $\Phi_{Bethe}(P)$. Finally we rewrite in terms of $\nu_{i \to j}$ to get
\begin{align*} 
F_i(\nu) 
&= \log \left[e^{h_i} \prod_{j \in \partial i} \sum_{x_j} e^{J_{ij} x_j} \frac{1 + \nu_{j \to i}x_j}{2}  + e^{-h_i} \prod_{j \in \partial i} \sum_{x_j} e^{-J_{ij} x_j} \frac{1 + \nu_{j \to i}x_j}{2}\right] \\
&= \log \left[e^{h_i} \prod_{j \in \partial i} (1 + \tanh(J_{ij}) \nu_{j \to i})  + e^{-h_i} \prod_{j \in \partial i} (1 - \tanh(J_{ij}) \nu_{j \to i}) \right] + \sum_{j \in \partial i} \log \frac{e^{J_{ij}} + e^{-J_{ij}}}{2}
\end{align*}
and
\begin{align*}
F_{ij}(\nu) 
&= \log \sum_{x_i}\sum_{x_j} e^{J_{ij} x_i x_j}  \frac{1 + \nu_{i \to j} x_i}{2} \frac{1 + \nu_{j \to i} x_j}{2}  \\
&= \log \left(\frac{e^{J_{ij}} + e^{-J_{ij}}}{2} + \frac{(e^{J_{ij}} - e^{-J_{ij}}) \nu_{i \to j} \nu_{j \to i}}{2} \right) \\
&= \log(1 + \tanh(J_{ij}) \nu_{i \to j} \nu_{j \to i}) + \log\left(\frac{e^{J_{ij}} + e^{-J_{ij}}}{2}\right).
\end{align*}

\end{document}